\DeclareMathOperator*{\argmax}{arg\,max}
\DeclareMathOperator*{\argmin}{arg\,min}
\newcommand{\x}{\mathbf{x}}
\newcommand{\y}{\mathbf{y}}
\newcommand{\w}{\mathbf{w}}
\newcommand{\z}{\mathbf{z}}
\newcommand{\bb}{\mathbf{b}}
\newcommand{\R}{\mathbb{R}}
\newcommand{\zero}{\mathbf{0}}
\newcommand{\hessian}{\mathbf{H}}
\newcommand{\jacobian}{\mathbf{J}}
\newcommand{\iden}{\mathbf{I}}
\newcommand{\ma}{\mathbf{A}}
\newcommand{\mb}{\mathbf{B}}
\newcommand{\mc}{\mathbf{C}}
\newcommand{\matp}{\mathbf{P}}
\newcommand{\mg}{\mathbf{G}}
\newcommand{\todo}{{\color{red} \textbf{TODO}}}
\definecolor{mydarkblue}{rgb}{0,0.1,0.6}
\definecolor{mydarkred}{rgb}{0.6,0.1,0}
\newenvironment{manualtheorem}[1]{%
  \manualtheoreminner
}{\endmanualtheoreminner}
\newtheorem{theorem}{Theorem}
\newtheorem{lemma}{Lemma}
\newtheorem{corollary}{Corollary}
\theoremstyle{definition}
\newtheorem{definition}{Definition}
\newtheorem{assumption}{Assumption}
\newtheorem{fact}{Fact}
\title{Improved Algorithms for Convex-Concave \\Minimax Optimization}
\author{%
  Yuanhao Wang\\
  Computer Science Department\\
  Princeton University\\
  \texttt{yuanhao@princeton.edu} \\
  \And
  Jian Li\\
  Institute for Interdisciplinary Information Sciences\\
  Tsinghua University\\
  \texttt{lijian83@mail.tsinghua.edu.cn} \\
}
\begin{document}

\maketitle

\begin{abstract}
This paper studies minimax optimization problems $\min_\x \max_\y f(\x,\y)$, where $f(\x,\y)$ is $m_\x$-strongly convex with respect to $\x$, $m_\y$-strongly concave with respect to $\y$ and $(L_\x,L_{\x\y},L_\y)$-smooth. Zhang et al. \cite{zhang2019lower} 
provided the following lower bound of the gradient complexity for
any first-order method:
$\Omega\Bigl(\sqrt{\frac{L_\x}{m_\x}+\frac{L_{\x\y}^2}{m_\x m_\y}+\frac{L_\y}{m_\y}}\ln(1/\epsilon)\Bigr).$ 
This paper proposes a new algorithm with gradient complexity upper bound 
$\Tilde{O}\Bigl(\sqrt{\frac{L_\x}{m_\x}+\frac{L\cdot L_{\x\y}}{m_\x m_\y}+\frac{L_\y}{m_\y}}\ln\left(1/\epsilon\right)\Bigr),$ where $L=\max\{L_\x,L_{\x\y},L_\y\}$. This improves over the best known upper bound $\Tilde{O}\left(\sqrt{\nicefrac{L^2}{m_\x m_\y}} \ln^3\left(1/\epsilon\right)\right)$
by Lin et al. \cite{lin2020near}. Our bound achieves linear convergence rate and tighter dependency on condition numbers, especially when 
$L_{\x\y}\ll L$ (i.e., when the interaction between $\x$ and $\y$ is weak).
Via reduction, our new bound also implies improved bounds for strongly convex-concave and convex-concave minimax optimization problems. 
When $f$ is quadratic, we can further improve the upper bound, 
which matches the lower bound up to a small sub-polynomial factor. 
\end{abstract}

\section{Introduction}
In this paper, we study the following minimax optimization problem
\begin{equation}
\label{equ:minimaxform}
	\min_{\x\in\R^{n}} \max_{\y\in\R^m} f(\x,\y).
\end{equation}
This problem can be thought as finding the equilibrium in a zero-sum two-player game, and has been studied extensively in game theory, economics and computer science. This formulation also arises in many machine learning applications, including adversarial training~\cite{madry2017towards,sinha2017certifiable}, prediction and regression problems~\cite{xu2005maximum,taskar2006structured}, reinforcement learning~\cite{du2017stochastic,dai2017sbeed,nachum2019dualdice} and generative adversarial networks~\cite{goodfellow2014generative,arjovsky2017wasserstein}.

We study the fundamental setting where $f$ is smooth, strongly convex w.r.t. $\x$ and strongly concave w.r.t. $\y$. In particular, we consider the function class $\mathcal{F}(m_\x,m_\y,L_\x,L_{\x\y},L_\y)$, where $m_\x$ is the strong convexity modulus, $m_\y$ is the strong concavity modulus, $L_\x$ and $L_\y$ characterize the smoothness w.r.t. $\x$ and $\y$ respectively, and $L_{\x\y}$ characterizes the interaction between $\x$ and $\y$ (see Definition~\ref{def:func}). The reason to consider such a function class is twofold. First, the strongly convex-strongly concave setting is fundamental. Via reduction~\cite{lin2020near}, an efficient algorithm for this setting implies efficient algorithms for other settings, including strongly convex-concave, convex-concave, and non-convex-concave settings. Second, Zhang et al. \cite{zhang2019lower} recently proved a gradient complexity lower bound
$\Omega\Bigl(\sqrt{\frac{L_\x}{m_\x}+\frac{L_{\x\y}^2}{m_\x m_\y}+\frac{L_\y}{m_\y}}\cdot\ln\left(\frac{1}{\epsilon}\right)\Bigr)$,
which naturally depends on the above parameters.~\footnote{This lower bound is also proved by Ibrahim et al. \cite{ibrahim2019linear}. Although their result is stated for a narrower class of algorithms, their proof actually works for the broader class of algorithms considered in~\cite{zhang2019lower}.}

\begin{figure}[t]
	\centering
    \includegraphics[width=0.6\textwidth]{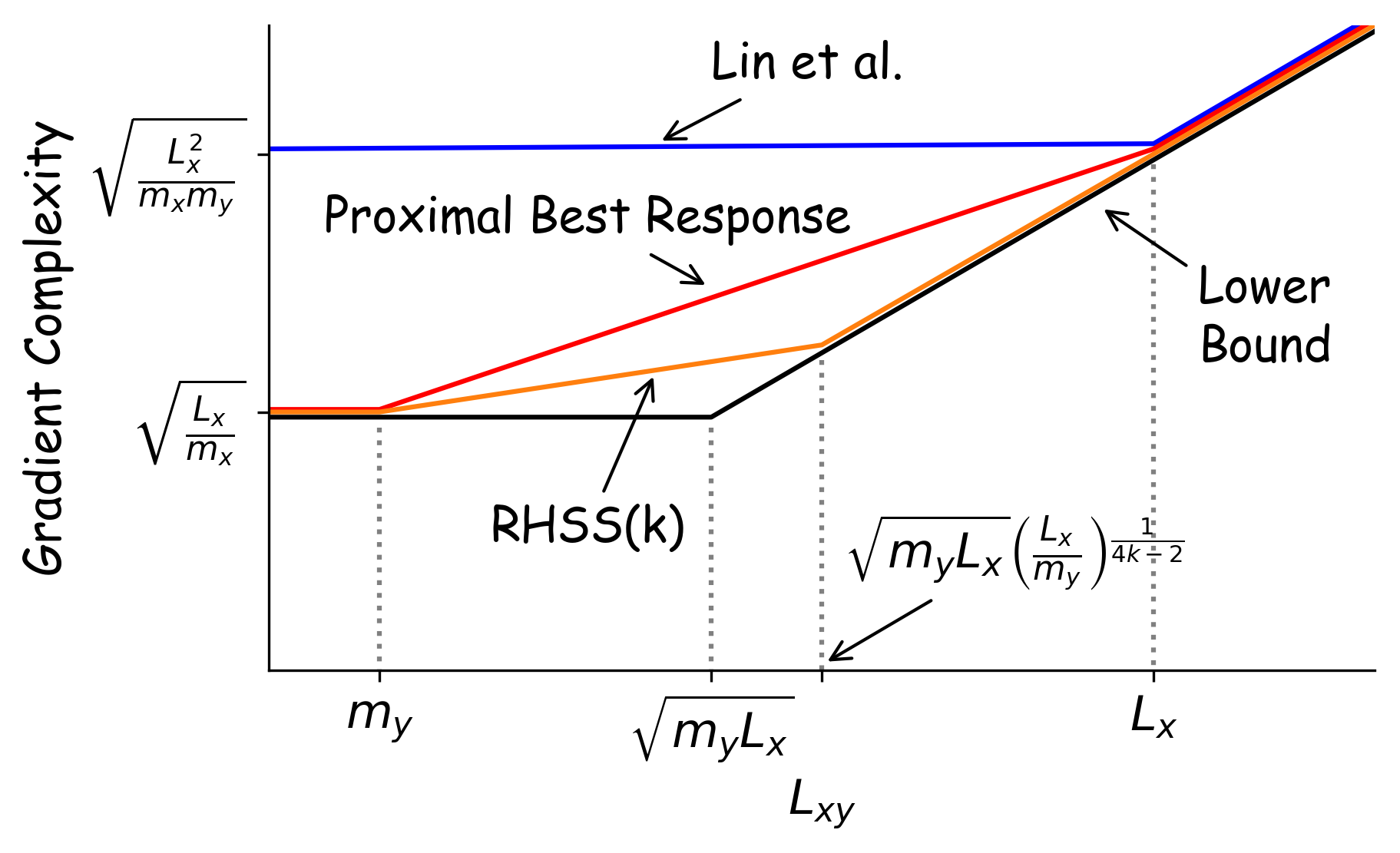}
    \caption{Comparison of previous upper bound~\cite{lin2020near}, lower bound~\cite{zhang2019lower} and the results in this paper when $L_\x=L_\y$, $m_\x < m_\y$, ignoring logarithmic factors. The upper bounds and lower bounds are shown as a function of $L_{\x\y}$ while other parameters are fixed.}
    \vspace{-0.2cm}
	\label{fig:comparison}
\end{figure}

In this setting, classic algorithms such as Gradient Descent-Ascent and ExtraGradient~\cite{korpelevich1976extragradient} can achieve linear convergence~\cite{tseng1995linear,zhang2019lower}; however, their dependence on the condition number is far from optimal. 
Recently, Lin et al.~\cite{lin2020near} showed an upper bound of $\tilde{O}\left(\sqrt{L^2/m_\x m_\y}\ln^3(1/\epsilon)\right)$, which has a much tighter dependence on the condition number. In particular, when $L_{\x\y}>\max\{L_\x,L_\y\}$, the dependence on the condition number matches the lower bound. However, when $L_{\x\y}\ll \max\{L_\x,L_\y\}$, this dependence would no longer be tight (see Fig~\ref{fig:comparison} for illustration). 
In particular, we note that, when $x$ and $y$ are completely decoupled (i.e., $L_{\x\y}=0$), the optimal gradient complexity bound is $\Theta\left(\sqrt{L_\x/m_\x+L_\y/m_\y}\cdot\ln\left(1/\epsilon\right)\right)$ (the upper bound can be obtained by simply optimizing $\x$ and $\y$ separately). 
Moreover, Lin et al.'s result does not enjoy a linear rate, which may be undesirable if a high precision solution is needed.

In this work, we propose new algorithms in order to address these two issues. Our contribution can be summarized as follows.
\begin{enumerate}
\item
 For general functions in $\mathcal{F}(m_\x,m_\y,L_\x,L_{\x\y},L_\y)$, we design an algorithm called Proximal Best Response (Algorithm~\ref{algo:pbr}), and prove a convergence rate of $$\tilde{O}\left(\sqrt{\frac{L_\x}{m_\x}+\frac{L_{\x\y}\cdot L}{m_\x m_\y}+\frac{L_\y}{m_\y}}\ln(1/\epsilon)\right).$$ 
 It achieves linear convergence, and has a better dependence on condition numbers when $L_{\x\y}$ is small (see Theorem~\ref{thm:phr} and the red line in Fig.~\ref{fig:comparison}). 
 \item 
 We obtain tighter upper bounds for the strongly-convex concave problem and the general convex-concave problem, by reducing them to 
 the strongly convex-strongly concave problem (See Corollary~\ref{cor:scc} and \ref{cor:scc2}).
\item
We also study the special case where $f$ is a quadratic function. 
We propose an algorithm called Recursive Hermitian-Skew-Hermitian Split (RHSS($k$)), and show that it achieves an upper bound of 
$$
O\left(\sqrt{\frac{L_\x}{m_\x}+\frac{L_{\x\y}^2}{m_\x m_\y}+\frac{L_\y}{m_\y}}\left(\frac{L^2}{m_\x m_\y}\right)^{o(1)}\ln(1/\epsilon)\right).
$$
Details can be found in Theorem~\ref{thm:rhss} and Corollary~\ref{cor:rhss}.
We note that the lower bound by Zhang et al. \cite{zhang2019lower}
holds for quadratic functions as well.
Hence, our upper bound matches the gradient complexity lower bound up to a sub-polynomial factor. 
\end{enumerate}

\section{Preliminaries}
In this work we are interested in strongly-convex strongly-concave smooth problems. 
We first review some standard definitions of strong convexity and smoothness.
A function $f:\R^n\to \R^m$ is $L$-Lipschitz if $\forall \x,\x'\in\R^n$ $\Vert f(\x)-f(\x')\Vert\le L\Vert\x-\x'\Vert.$
	A function $f:\R^n\to \R^m$ is $L$-smooth if $\nabla f$ is $L$-Lipschitz.
A differentiable function $\phi:\R^n\to \R$ is said to be $m$-strongly convex if for any $\x,\x'\in\R^n$, $\phi(\x')\ge \phi(\x)+(\x'-\x)^T\nabla \phi(\x)+\frac{m}{2}\Vert \x'-\x\Vert^2$.
If $m=0$, we recover the definition of convexity. If $-\phi$ is $m$-strongly convex, $\phi$ is said to be $m$-strongly concave. For a function $f(\x,\y)$, if $\forall \y$, $f(\cdot,\y)$ is strongly convex, and $\forall \x$, $f(\x,\cdot)$ is strongly concave, then $f$ is said to be strongly convex-strongly concave.

\begin{definition}
\label{def:smooth}
A differentiable function $f:\R^n\times\R^m\to \R$ is said to be $(L_\x,L_{\x\y},L_\y)$-smooth if

\quad\quad 1. For any $\y$, $\nabla_\x f(\cdot,\y)$ is $L_\x$-Lipschitz;
\quad\quad\,	 2. For any $\x$, $\nabla_\y f(\x,\cdot)$ is $L_{\y}$-Lipschitz;

\quad\quad 3. For any $\x$, $\nabla_\x f(\x,\cdot)$ is $L_{\x\y}$-Lipschitz;
\quad\quad	4. For any $\y$, $\nabla_\y f(\cdot,\y)$ is $L_{\x\y}$-Lipschitz.
\end{definition}

In this work, we are interested in function that are strongly convex-strongly concave and smooth. Specifically, we study the following function class.
\begin{definition}
\label{def:func}
The function class $\mathcal{F}\left(m_\x,m_\y,L_\x,L_{\x\y},L_\y\right)$ contains differentiable functions from $\R^n\times\R^m$ to $\R$ such that: 1. $\forall \y$, $f(\cdot,\y)$ is $m_\x$-strongly convex; 2. $\forall \x$, $f(\x,\cdot)$ is $m_\y$-strongly concave; 3. $f$ is $(L_\x,L_{\x\y},L_\y)$-smooth.
\end{definition}

In the case where $f(\x,\y)$ is twice continuously differentiable, denote the Hessian of $f$ at $(\x,\y)$ by $\hessian:=\left[\begin{matrix}
\hessian_{\x\x}&\hessian_{\x\y}\\ \hessian_{\y\x} & \hessian_{\y\y}
\end{matrix}\right]$. Then $\mathcal{F}(m_\x,m_\y,L_\x,L_{\x\y},L_\y)$ can be characterized with the Hessian; in particular we require $m_\x \iden\preccurlyeq \hessian_{\x\x}\preccurlyeq L_\x \iden$, $m_\y\iden\preccurlyeq -\hessian_{\y\y}\preccurlyeq L_\y \iden$ and $\Vert \hessian_{\x\y}\Vert_2\le L_{\x\y}$.


For notational simplicity, we assume that $L_\x=L_\y$ when considering algorithms and upper bounds. This is without loss of generality, since one can define $g(\x,\y):=f((L_\y/L_\x)^{1/4}\x,(L_\x/L_\y)^{1/4}\y)$ in order to make the two smoothness constants equal. It is not hard to show that this rescaling will not change $L_\x/m_\x$, $L_\y/m_\y$, $L_{\x\y}$ and $m_\x m_\y$, and $L=\max\{L_\x,L_{\x\y},L_\y\}$ will not increase. 
Hence, we can make the following assumption without loss of generality.
\footnote{Note that this rescaling also does not change the lower bound.} 
\begin{assumption}
$f\in\mathcal{F}(m_\x,m_\y,L_\x,L_{\x\y},L_\y)$, and $L_\x = L_\y$.
\end{assumption}

The optimal solution of the convex-concave minimax optimization problem $\min_\x \max_\y f(\x,\y)$ is the saddle point $(\x^*,\y^*)$ defined as follows.
\begin{definition}
$(\x^*,\y^*)$ is a saddle point of $f:\R^n\times\R^m\to\R$ if $\forall \x\in\R^n,\y\in\R^m$
$$f(\x,\y^*)\ge f(\x^*,\y^*)\ge f(\x^*,\y).$$ 
\end{definition}

For strongly convex-strongly concave functions, it is well known that such a saddle point exists and is unique. Meanwhile,the saddle point is a stationary point, i.e. $\nabla f(\x^*,\y^*)=0$, and is the minimizer of $\phi(\x):=\max_\y f(\x,\y)$.
For the design of numerical algorithms, we are satisfied with a close enough approximate of the saddle point, called $\epsilon$-saddle points.
\begin{definition}
$(\hat\x,\hat\y)$ is an $\epsilon$-saddle point of $f$ if $\max_\y f(\hat\x,\y)-\min_\x f(\x,\hat\y)\le \epsilon.$
\end{definition}
Alternatively, we can also characterize optimality with the distance to the saddle point. In particular, let $\z^*:=[\x^*;\y^*]$, $\hat\z:=[\hat\x;\hat\y]$, then one may require $\Vert \hat\z-\z^*\Vert\le \epsilon$. This implies that\footnote{See Fact 4 in Appendix~\ref{append:facts} for proof.}
$$\max_\y f(\hat\x,\y)-\min_\x f(\x,\hat\y)\le \frac{L^2}{\min\{m_\x,m_\y\}}\epsilon^2.$$
In this work we focus on first-order methods, that is, algorithms that only access $f$ through gradient evaluations. The complexity of algorithms is measured through the gradient complexity: the number of gradient evaluations required to find an $\epsilon$-saddle point (or get to $\Vert\hat\z-\z^*\Vert\le\epsilon$).

\section{Related Work}

There is a long line of work on the convex-concave saddle point problem. Apart from GDA and ExtraGradient~\cite{korpelevich1976extragradient,tseng1995linear,nemirovski2004prox,gidel2018a}, other algorithms with theoretical guarantees include OGDA~\cite{rakhlin2013online,daskalakis2017training,mokhtari2019unified,azizian2019tight}, Hamiltonian Gradient Descent~\cite{abernethy2019last} and Consensus Optimization~\cite{mescheder2017numerics,abernethy2019last,azizian2019tight}. For the convex-concave case and strongly-convex-concave case, lower bounds have been proven by \cite{ouyang2019lower}. For the strongly-convex-strongly-concave case, the lower bound has been proven by~\cite{ibrahim2019linear} and~\cite{zhang2019lower}. Some authors have studied the special case where the interaction between $\x$ and $\y$ is bilinear~\cite{chambolle2011first,chen2014optimal,du2019linear} and variance reduction algorithms for finite sum objectives~\cite{carmon2019variance,palaniappan2016stochastic}.

The special case where $f$ is quadratic has also been studied extensively in the numerical analysis community~\cite{bai2003hermitian,benzi2005numerical,bai2009optimal}. One of the most notable algorithms for quadratic saddle point problems is Hermitian-skew-Hermitian Split (HSS)~\cite{bai2003hermitian}. However, most existing work do not provide a bound on the overall number of matrix-vector products.

The convex-concave saddle point problem can also be seen as a special case of variational inequalities with Lipschitz monotone operators~\cite{nemirovski2004prox,kinderlehrer1980introduction,gidel2018a,hsieh2019convergence,tseng1995linear}. Some existing algorithms for the saddle point problem, such as ExtraGradient, achieve the optimal rate in this more general setting as well~\cite{nemirovski2004prox,tseng1995linear}.




Going beyond the convex-concave setting, some researchers have also studied the nonconvex-concave case recently ~\cite{lin2019gradient,thekumparampil2019efficient,rafique2018non,lin2020near,lu2019hybrid,nouiehed2019solving,ostrovskii2020efficient}, with the goal being finding a stationary point of the nonconvex function $\phi(\x):=\max_\y f(\x,\y)$. By reducing to the strongly convex-strongly concave setting,~\cite{lin2020near} has achieved state-of-the-art results for nonconvex-concave problems.


\section{Linear Convergence and Refined Dependence on $L_{\x\y}$ in General Cases}

\subsection{Alternating Best Response}
Let us first consider the extreme case where $L_{\x\y}=0$. In this case, there is no interaction between $\x$ and $\y$, and $f(\x,\y)$ can be simply written as $h_1(\x)-h_2(\y)$, where $h_1$ and $h_2$ are strongly convex functions. Thus, in this case, the following trivial algorithm solves the problem
$$\x^*\gets \argmin_\x f(\x,\y_0), \quad \y^*\gets \argmax_\y f(\x^*,\y).$$
In other words, the equilibrium can be found by directly playing the best response to each other once. 

Now, let us consider the case where $L_{\x\y}$ is nonzero but small. In this case, would the best response dynamics converge to the saddle point? Specifically, consider the following procedure:
\begin{equation}
\label{equ:br}
\begin{cases}
\x_{t+1}&\gets \argmin_\x \{f(\x,\y_t)\}\\
\y_{t+1}&\gets \argmax_\y \{f(\x_{t+1},\y)\}
\end{cases}.
\end{equation}
Let us define $\y^*(\x):=\argmax_\y f(\x,\y)$ and $\x^*(\y):=\argmin_\x f(\x,\y)$. Because $\y^*(\x)$ is $L_{\x\y}/m_\y$-Lipschitz and $\x^*(\y)$ is $L_{\x\y}/m_\x$-Lipschitz~\footnote{See Fact 1 in Appendix~\ref{append:facts} for proof.},
\begin{align*}
\Vert \x_{t+1}-\x^*\Vert &= \Vert\x^*(\y_t)-\x^*(\y^*)\Vert \le \frac{L_{\x\y}}{m_\x}\Vert \y_t-\y^*\Vert\\
&=\frac{L_{\x\y}}{m_\x}\Vert \y^*(\x_t)-\y^*(\x^*)\Vert \le \frac{L_{\x\y}^2}{m_\x m_\y}\Vert\x_t-\x^*\Vert.
\end{align*}
Thus, when $L_{\x\y}^2<m_\x m_\y$, (\ref{equ:br}) is indeed a contraction. In fact, we can further replace the exact solution of the inner optimization problems with Nesterov's Accelerated Gradient Descent (AGD) for constant number of steps, as described in Algorithm~\ref{algo:br}.

\begin{algorithm}[t]
	\caption{Alternating Best Response (ABR)}
	\label{algo:br}
	\begin{algorithmic}
		\Require $g(\cdot,\cdot)$, Initial point $\z_0=[\x_0;\y_0]$, precision $\epsilon$, parameters $m_\x$, $m_\y$, $L_\x$, $L_\y$
		\State $\kappa_\x:=L_\x/m_\x$, $\kappa_\y:=L_\y/m_\y$, $T\gets \left\lceil \log_2\left(\frac{4\sqrt{\kappa_\x+\kappa_\y}}{\epsilon}\right)\right\rceil$ 
		\For{$t=0,\cdots,T$}
		    \State Run AGD on $g(\cdot,\y_t)$ from $\x_t $ for $\Theta(\sqrt{\kappa_\x}\ln(\kappa_\x))$ steps to get $\x_{t+1}$
		    \State Run AGD on $-g(\x_{t+1},\cdot)$ from $\y_t$ for $\Theta(\sqrt{\kappa_\y}\ln(\kappa_\y))$ steps to get $\y_{t+1}$
		\EndFor
	\end{algorithmic}
\end{algorithm}

The following theorem holds for the Alternating Best Response algorithm. The proof of the theorem, as well as a detailed version of Algorithm~\ref{algo:br} can be found in Appendix~\ref{append:brproof}.
\begin{theorem}
	\label{thm:br}
	If $g\in\mathcal{F}(m_\x,m_\y,L_\x,L_{\x\y},L_\y)$ and $L_{\x\y}\le\frac{1}{2}\sqrt{m_\x m_\y}$, Alternating Best Response returns $(\x_T,\y_T)$ such that 
	$$\Vert \x_T-\x^*\Vert+\Vert \y_T-\y^*\Vert\le \epsilon\left(\Vert \x_0-\x^*\Vert+\Vert \y_0-\y^*\Vert\right),$$
    and the number of gradient evaluations is bounded by
	(with $\kappa_\x=L_\x/m_\x$, $\kappa_\y=L_\y/m_\y$)
	$$O\left(\left(\sqrt{\kappa_\x+\kappa_\y}\right)\cdot \ln\left(\kappa_\x\kappa_\y\right)
	\ln\left(\kappa_\x\kappa_\y/\epsilon\right)\right).$$
\end{theorem}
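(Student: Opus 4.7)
The plan is to combine the exact best-response contraction argument already sketched just above the theorem statement with AGD's convergence guarantee, treating the two inner loops as inexact $\argmin$ and $\argmax$ oracles and tracking how their errors propagate into the outer iteration.

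For the exact iteration, let $\tilde\x_{t+1}:=\argmin_\x g(\x,\y_t)=\x^*(\y_t)$ and $\tilde\y_{t+1}:=\argmax_\y g(\x_{t+1},\y)=\y^*(\x_{t+1})$. The Lipschitzness of the best-response maps (Fact~1) gives $\Vert \tilde\x_{t+1}-\x^*\Vert\le(L_{\x\y}/m_\x)\Vert \y_t-\y^*\Vert$ and $\Vert \tilde\y_{t+1}-\y^*\Vert\le(L_{\x\y}/m_\y)\Vert \x_{t+1}-\x^*\Vert$, so by the triangle inequality
\begin{align*}
\Vert \x_{t+1}-\x^*\Vert &\le \Vert \x_{t+1}-\tilde\x_{t+1}\Vert + \tfrac{L_{\x\y}}{m_\x}\Vert \y_t-\y^*\Vert,\\
\Vert \y_{t+1}-\y^*\Vert &\le \Vert \y_{t+1}-\tilde\y_{t+1}\Vert + \tfrac{L_{\x\y}}{m_\y}\Vert \x_{t+1}-\x^*\Vert.
\end{align*}
For the two AGD error terms I would invoke the standard accelerated rate: after $k$ AGD steps on an $m_\x$-strongly convex, $L_\x$-smooth objective, $\Vert \x_{t+1}-\tilde\x_{t+1}\Vert\le C\sqrt{\kappa_\x}(1-1/\sqrt{\kappa_\x})^{k/2}\Vert \x_t-\tilde\x_{t+1}\Vert$. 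Taking $k=\Theta(\sqrt{\kappa_\x}\ln\kappa_\x)$ with a sufficiently large hidden constant drives the prefactor below any prescribed $\eta>0$, and bounding $\Vert \x_t-\tilde\x_{t+1}\Vert\le\Vert \x_t-\x^*\Vert + (L_{\x\y}/m_\x)\Vert \y_t-\y^*\Vert$ yields
$$\Vert \x_{t+1}-\x^*\Vert\le \eta\Vert \x_t-\x^*\Vert + (1+\eta)\tfrac{L_{\x\y}}{m_\x}\Vert \y_t-\y^*\Vert,$$
and symmetrically for the $\y$ update.

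The key step is to show that the resulting $2\times 2$ linear recurrence is a contraction in a suitable weighted norm $d_t:=\Vert \x_t-\x^*\Vert + c\Vert \y_t-\y^*\Vert$, with $c$ chosen on the order of $\sqrt{m_\x/m_\y}$ so that the two coordinate inequalities yield the same outer factor. Substituting the $\x$ bound into the $\y$ bound produces cross terms of size $(1+\eta)^2 L_{\x\y}^2/(m_\x m_\y)\le(1+\eta)^2/4$; combining this with $L_{\x\y}\le \tfrac12\sqrt{m_\x m_\y}$ to control each individual off-diagonal coefficient and taking $\eta$ a small absolute constant yields $d_{t+1}\le \tfrac12 d_t$. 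The norm equivalence between $d_t$ and $\Vert \x_t-\x^*\Vert+\Vert \y_t-\y^*\Vert$ loses at most a $\sqrt{\kappa_\x+\kappa_\y}$ factor, which is precisely what the $4\sqrt{\kappa_\x+\kappa_\y}$ inside the definition of $T$ absorbs. Thus $T=\lceil\log_2(4\sqrt{\kappa_\x+\kappa_\y}/\epsilon)\rceil$ outer iterations suffice, and each one costs $O(\sqrt{\kappa_\x}\ln\kappa_\x+\sqrt{\kappa_\y}\ln\kappa_\y)=O(\sqrt{\kappa_\x+\kappa_\y}\ln(\kappa_\x\kappa_\y))$ gradient evaluations, multiplying to the stated complexity.

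The hardest part I anticipate is the weighted-norm bookkeeping: the naive sum $\Vert \x_t-\x^*\Vert+\Vert \y_t-\y^*\Vert$ does not by itself contract, because $L_{\x\y}/m_\x$ and $L_{\x\y}/m_\y$ can individually exceed $1$ even when their product is bounded by $1/4$. Tuning the weight $c$ together with the inner AGD accuracy $\eta$ so that both coordinate inequalities admit a common contraction ratio, while keeping the norm-equivalence blow-up at the $\sqrt{\kappa_\x+\kappa_\y}$ scale that matches the theorem statement, is where the delicate balancing lives.
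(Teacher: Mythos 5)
Your proposal follows essentially the same route as the paper's proof: decompose each update into the exact best response plus an AGD error, use the $L_{\x\y}/m_\x$- and $L_{\x\y}/m_\y$-Lipschitzness of the best-response maps, run AGD for $\Theta(\sqrt{\kappa}\ln\kappa)$ steps to make the inexactness a small constant, contract a weighted sum $\Vert\x_t-\x^*\Vert+c\Vert\y_t-\y^*\Vert$, and absorb the norm-equivalence loss into the $4\sqrt{\kappa_\x+\kappa_\y}$ in the definition of $T$. The one concrete slip is that your weight is inverted: the balancing condition $\frac{1}{c}\cdot\frac{L_{\x\y}}{m_\x}\asymp c\cdot\frac{L_{\x\y}}{m_\y}$ forces $c\asymp\sqrt{m_\y/m_\x}$ (the paper takes $C=4\sqrt{m_\y/m_\x}$), whereas with your $c\asymp\sqrt{m_\x/m_\y}$ the $\y_t$-coefficient in the $\x$-row, namely $\frac{L_{\x\y}}{c\,m_\x}$, can be as large as $\frac{m_\y}{2m_\x}$ and the recurrence does not contract. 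Correcting the weight, the rest of the argument and the complexity accounting go through exactly as you describe.
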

Note that when $L_{\x\y}$ is small, Zhang et al's lower bound~\cite{zhang2019lower} can be written as $\Omega\left(\sqrt{\kappa_\x+\kappa_\y}\ln(1/\epsilon)\right)$. Thus Alternating Best Response matches this lower bound up to logarithmic factors.


\subsection{Accelerated Proximal Point for Minimax Optimization}
In the previous subsection, we showed that Alternating Best Response matches the lower bound when the interaction term $L_{\x\y}$ is sufficiently small. However, in order to apply the algorithm to functions with $L_{\x\y}>\frac{1}{2}\sqrt{m_\x m_\y}$, we need another algorithmic component, namely the accelerated proximal point algorithm~\cite{guler1992new,lin2020near}.


For a minimax optimization problem $\min_\x\max_\y f(\x,\y)$, define $\phi(\x):=\max_\y f(\x,\y)$. Suppose that we run the accelerated proximal point algorithm on $\phi(\x)$ with proximal parameter $\beta$: then the number of iterations can be easily bounded, while in each iteration one needs to solve a proximal problem $\min_\x \left\{\phi(\x)+\beta\Vert\x-\hat\x_t\Vert^2\right\}$. The key observation is that, this is equivalent to solving a minimax optimization problem $\min_\x \max_\y\left\{f(\x,\y)+\beta\Vert\x-\hat\x_t\Vert^2\right\}$. Thus, via accelerated proximal point, we are able to reduce solving $\min_\x \max_\y f(\x,\y)$ to solving $\min_\x \max_\y\left\{f(\x,\y)+\beta\Vert\x-\hat\x_t\Vert^2\right\}$.

This is exactly the idea behind Algorithm~\ref{algo:minmaxappa} (the idea was also used in \cite{lin2020near}). In the algorithm, $M$ is a positive constant characterizing the precision of solving the subproblem, where we require $M\ge \mathrm{poly}(\frac{L}{m_\x},\frac{L}{m_\y},\frac{\beta}{m_\x})$. If $M\to\infty$, the algorithm exactly becomes an instance of accelerated proximal point on $\phi(\x)=\max_\y f(\x,\y)$.


\begin{algorithm}
	\caption{Accelerated Proximal Point Algorithm for Minimax Optimization}
	\label{algo:minmaxappa}
	\begin{algorithmic}
		\Require Initial point $\z_0=[\x_0;\y_0]$, proximal parameter $\beta$, strongly-convex modulus $m_\x$
		\State $\hat\x_0\gets \x_0$, $\kappa\gets \beta/m_\x$, $\theta\gets \frac{2\sqrt{\kappa}-1}{2\sqrt{\kappa}+1}$, $\tau\gets \frac{1}{2\sqrt{\kappa}+4\kappa}$
		\For{$t=1,\cdots,T$}
		\State Suppose $(\x^*_t,\y^*_t)=\min_\x\max_\y f(\x,\y)+\beta\Vert \x-\hat\x_{t-1}\Vert^2$. 
		Find $(\x_t,\y_t)$ such that
		$$\Vert \x_t-\x^*_t\Vert+\Vert\y_t-\y^*_t\Vert\le \frac{1}{M}\left(\Vert \x_{t-1}-\x^*_t\Vert+\Vert\y_{t-1}-\y^*_t\Vert\right) $$
		\State $\hat\x_t\gets \x_t+\theta(\x_t-\x_{t-1})+\tau(\x_t-\hat\x_{t-1})$
		\EndFor
	\end{algorithmic}
\end{algorithm}

The following theorem can be shown for Algorithm~\ref{algo:minmaxappa}.
The proof can be found in Appendix~\ref{append:thm2proof}, and is based on the proof of Theorem 4.1 in~\cite{lin2020near}.
\begin{theorem}
\label{thm:appa}
The number of iterations needed by Algorithm~\ref{algo:minmaxappa} to produce $(\x_T,\y_T)$ such that $$\Vert \x_T-\x^*\Vert+\Vert \y_T-\y^*\Vert \le \epsilon\left(\Vert \x_0-\x^*\Vert+\Vert \y_0-\y^*\Vert \right)$$
is at most ($\kappa=\beta/m_\x$)
\begin{equation}
\hat T=8\sqrt{\kappa}\cdot\ln\left(\frac{28\kappa^2 L}{m_\y}\sqrt{\frac{L^2}{m_\x m_\y}}\cdot\frac{1}{\epsilon}\right).
\end{equation}
\end{theorem}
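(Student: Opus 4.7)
The plan is to view Algorithm~\ref{algo:minmaxappa} as an inexact accelerated proximal point iteration on the minimization of $\phi(\x):=\max_\y f(\x,\y)$, and to control how the inexactness in solving each proximal subproblem propagates. First I would establish the structural properties of $\phi$. Since $f(\cdot,\y)$ is $m_\x$-strongly convex for every $\y$, the max $\phi(\x)$ is $m_\x$-strongly convex; by Danskin's theorem, $\nabla \phi(\x) = \nabla_\x f(\x,\y^*(\x))$, and because $\y^*(\x)$ is $L_{\x\y}/m_\y$-Lipschitz (this was already used in the ABR analysis, Fact 1 in the appendix), $\phi$ is smooth with constant at most $L + L_{\x\y}^2/m_\y$. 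Consequently the subproblem objective $\phi(\x) + \beta\Vert\x-\hat\x_{t-1}\Vert^2$ is $(m_\x+2\beta)$-strongly convex, and its unique minimizer is the $\x$-coordinate of the saddle point $(\x^*_t,\y^*_t)$ of the augmented minimax problem.

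Next I would argue that the update of $\hat\x_t$ together with the exact solution $\x^*_t$ is precisely an instance of G\"uler's accelerated proximal point method with parameters $\kappa=\beta/m_\x$, $\theta$, and $\tau$ chosen as in the algorithm. In the exact case one obtains a linear rate of the form $(1-1/(2\sqrt{\kappa}+1))$, which after $\hat T$ iterations drives $\Vert\x_T-\x^*\Vert$ below any target once $\hat T\gtrsim\sqrt{\kappa}\ln(1/\epsilon)$. The job is then to show that the inexactness allowed by the algorithm is small enough that this rate is essentially preserved, losing only a constant factor (so that the $8\sqrt{\kappa}$ prefactor is enough).

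To handle inexactness, I would translate the guarantee
\begin{equation*}
\Vert \x_t-\x^*_t\Vert+\Vert\y_t-\y^*_t\Vert\le \tfrac{1}{M}\left(\Vert \x_{t-1}-\x^*_t\Vert+\Vert\y_{t-1}-\y^*_t\Vert\right)
\end{equation*}
into a bound purely in terms of $\x$-distance. The key observation is that $\y^*_t = \y^*(\x^*_t)$ and $\y_{t-1}$ is close to $\y^*(\x_{t-1})$, so $\Vert \y_{t-1}-\y^*_t\Vert$ can be bounded by $(L_{\x\y}/m_\y)\Vert\x_{t-1}-\x^*_t\Vert$ plus residual error from the previous iteration. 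Iterating this gives an $\x$-only contraction of the form $\Vert\x_t-\x^*_t\Vert \le (c/M)\Vert\x_{t-1}-\x^*_t\Vert$ with $c=\mathrm{poly}(L/m_\x,L/m_\y,\beta/m_\x)$. Choosing $M$ as the algorithm does (polynomial in these same quantities) makes this contraction strong enough to plug into a standard inexact accelerated proximal point Lyapunov/potential-function analysis, following Theorem 4.1 of Lin et al., and yields linear convergence of $\Vert\x_T-\x^*\Vert$ at rate $(1-1/(4\sqrt{\kappa}))$, say.

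Finally, I would convert the resulting $\x$-only bound into the joint bound required by the theorem. Because $\y^*=\y^*(\x^*)$, a bound on $\Vert\x_T-\x^*\Vert$ together with the inner solve gives $\Vert\y_T-\y^*\Vert\le (L_{\x\y}/m_\y)\Vert\x_T-\x^*\Vert$ plus lower-order inner error, and initial $\y_0$ vs.\ $\y^*(\x_0)$ can be bounded by $\Vert\y_0-\y^*\Vert$ up to a factor $L/m_\y$. Collecting these factors produces the quantity $28\kappa^2 (L/m_\y)\sqrt{L^2/(m_\x m_\y)}$ inside the logarithm, after which solving the resulting geometric decay inequality $(1-1/(4\sqrt{\kappa}))^{\hat T}\cdot(\text{constant})\le\epsilon$ gives the stated $\hat T = 8\sqrt{\kappa}\ln(\cdot/\epsilon)$. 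The main technical obstacle is precisely the inexact accelerated proximal point analysis: controlling the interaction between the two error sequences (the outer proximal error and the inner $(\x,\y)$-error) so that the overall contraction survives with an explicit, cleanly-stated constant, which is what the numerical factor $8$ and the quantity $28\kappa^2 L/m_\y\cdot\sqrt{L^2/m_\x m_\y}$ are absorbing.
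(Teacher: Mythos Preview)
Your proposal is correct and follows essentially the same approach as the paper: view Algorithm~\ref{algo:minmaxappa} as inexact accelerated proximal point on $\phi(\x)=\max_\y f(\x,\y)$, invoke the Lyapunov/potential analysis underlying Theorem~4.1 of Lin et al., use the Lipschitzness of $\y^*(\cdot)$ to convert the joint $(\x,\y)$ inner-solve guarantee into control on the proximal suboptimality $\delta_t$, and finally recover the $\y$-error from the $\x$-error via $\Vert\y_T-\y^*\Vert\le\epsilon_T+(L_{\x\y}/m_\y)\Vert\x_T-\x^*\Vert$. The only organizational difference is that the paper runs a single coupled induction tracking both the Lyapunov gap $\Lambda_t-\phi(\x^*)$ and the inner error $\epsilon_t=\Vert\x_t-\x^*_t\Vert+\Vert\y_t-\y^*_t\Vert$ simultaneously (both decaying at rate $(1-1/(4\sqrt{\kappa}))^{t/2}$), rather than first isolating an $\x$-only contraction and then plugging it in; but this is a matter of presentation, not substance.
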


\subsection{Proximal Alternating Best Response}
With the two algorithmic components, namely Alternating Best Response and Accelerated Proximal Point in place, we can now combine them and design an efficient algorithm for general strongly convex-strongly concave functions. The high-level idea is to exploit the accelerated proximal point algorithm twice to reduce a general problem into one solvable by Alternating Best Response.

To start with, let us consider a strongly-convex-strongly-concave function $f(\x,\y)$, and apply Algorithm~\ref{algo:minmaxappa} for $f$ with proximal parameter $\beta=L_{\x\y}$. By Theorem~\ref{thm:appa}, 
the algorithm can converge in $\tilde{O}\left(\sqrt{\frac{L_{\x\y}}{m_\x}}\right)$ iterations, while in each iteration we need to solve a regularized minimax problem
$$\min_\x \max_\y \left\{f(\x,\y)+\beta\Vert\x-\hat\x_{t-1}\Vert^2\right\}.$$
This is equivalent to $\min_\y \max_\x \left\{-f(\x,\y)-\beta\Vert\x-\hat\x_{t-1}\Vert^2\right\}$~\footnote{Although Sion's Theorem does not apply here as we considered unconstrained problem, we can still exchange the order since the function is strongly-convex-strongly-concave~\cite{hartung1982extension}.}, so we can apply Algorithm~\ref{algo:minmaxappa} once more to this problem with parameter $\beta=L_{\x\y}$. This procedure would require $\tilde{O}\left(\sqrt{\frac{L_{\x\y}}{m_\y}}\right)$ iterations, and in each iteration, one need to solve a minimax problem of the form
\begin{align*}
  &\min_\y \max_\x \left\{-f(\x,\y)-\beta\Vert\x-\hat\x_{t-1}\Vert^2+\beta\Vert\y-\hat\y_{t'-1}\Vert^2\right\}\\
  =&-\min_\x\max_\y\left\{f(\x,\y)+\beta\Vert\x-\hat\x_{t-1}\Vert^2-\beta\Vert\y-\hat\y_{t'-1}\Vert^2\right\}.
\end{align*}
Hence, we reduced the original problem to a problem that is $2\beta$-strongly convex with respect to $\x$ and $2\beta$-strongly concave with respect to $\y$. 
Now the interaction between $\x$ and $\y$ is (relatively) much weaker and one can easily see that $L_{\x\y}\le \frac{1}{2}\sqrt{2\beta\cdot 2\beta}$.
Consequently the final problem can be solved in $\tilde{O}\left(\frac{L_\x}{L_{\x\y}}\right)$ gradient evaluations using the Alternating Best Response algorithm. 
We first consider the case where $L_{\x\y}>\max\{m_\x,m_\y\}$.
The total gradient complexity would thus be
\begin{align*}
\tilde{O}\left(\sqrt{\frac{L_{\x\y}}{m_\x}}\right)\cdot \tilde{O}\left(\sqrt{\frac{L_{\x\y}}{m_\y}}\right)\cdot \tilde{O}\left(\sqrt{\frac{L}{L_{\x\y}}}\right)=\tilde{O}\left(\sqrt{\frac{L\cdot L_{\x\y}}{m_\x m_\y}}\right).
\end{align*}
In order to deal with the case where $L_{\x\y}<\max\{m_\x,m_\y\}$, we shall choose $\beta_1=\max\{L_{\x\y},m_\x\}$ for the first level of proximal point, and $\beta_2=\max\{L_{\x\y},m_\y\}$ for the second level of proximal point. In this case, the total gradient complexity bound can be shown to be
\begin{align*}
\tilde{O}\left(\sqrt{\frac{\beta_1}{m_\x}}\right)\cdot \tilde{O}\left(\sqrt{\frac{\beta_2}{m_\y}}\right)\cdot \tilde{O}\left(\sqrt{\frac{L}{\beta_1}+\frac{L}{\beta_2}}\right)=\tilde{O}\left(\sqrt{\frac{L_\x}{m_\x}+\frac{L\cdot L_{\x\y}}{m_\x m_\y}+\frac{L_\y}{m_\y}}\right).
\end{align*}
A formal description of the algorithm is provided in Algorithm~\ref{algo:pbr}, and a formal statement of the complexity upper bound is provided in Theorem~\ref{thm:phr}.
The proof is deferred to Appendix~\ref{append:thm3proof}.
\begin{theorem}
\label{thm:phr}
Assume that $f\in\mathcal{F}(m_\x,m_\y,L_\x,L_{\x\y},L_\y)$. In Algorithm~\ref{algo:pbr}, the gradient complexity to produce $(\x_T,\y_T)$ such that $\Vert \z_T-\z^*\Vert\le \epsilon$ is
\begin{equation*}
O\left(\sqrt{\frac{L_\x}{m_\x}+\frac{L\cdot L_{\x\y}}{m_\x m_\y}+\frac{L_\y}{m_\y}}\cdot\ln^3\left(\frac{L^2}{m_\x m_\y}\right)\ln\left(\frac{L^2}{m_\x m_\y}\cdot\frac{\Vert\z_0-\z^*\Vert}{\epsilon}\right)\right).
\end{equation*}
\end{theorem}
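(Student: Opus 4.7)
The plan is to nest the two previous algorithmic pieces --- Accelerated Proximal Point for Minimax (Theorem~\ref{thm:appa}) applied twice and Alternating Best Response (Theorem~\ref{thm:br}) --- exactly along the three-level scheme sketched in the paragraphs preceding the theorem. Set $\beta_1=\max\{L_{\x\y},m_\x\}$ and $\beta_2=\max\{L_{\x\y},m_\y\}$. Applying Theorem~\ref{thm:appa} to $\min_\x\max_\y f(\x,\y)$ with proximal parameter $\beta_1$ gives an outer loop of length $\tilde O(\sqrt{\beta_1/m_\x})$, and the resulting subproblem $\min_\x\max_\y\{f(\x,\y)+\beta_1\Vert\x-\hat\x_{t-1}\Vert^2\}$ is strongly convex-strongly concave, so I may swap the order of $\min$ and $\max$ as justified in footnote~4 and invoke Theorem~\ref{thm:appa} a second time with parameter $\beta_2$. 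This produces a middle loop of length $\tilde O(\sqrt{\beta_2/m_\y})$ whose innermost subproblem
\[
\min_\x\max_\y\Bigl\{f(\x,\y)+\beta_1\Vert\x-\hat\x_{t-1}\Vert^2-\beta_2\Vert\y-\hat\y_{t'-1}\Vert^2\Bigr\}
\]
belongs to $\mathcal{F}(m_\x+2\beta_1,\,m_\y+2\beta_2,\,L_\x+2\beta_1,\,L_{\x\y},\,L_\y+2\beta_2)$.

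The structural observation that makes the scheme close is that, since $\beta_1,\beta_2\ge L_{\x\y}$, the new strong convexity/concavity moduli obey $(m_\x+2\beta_1)(m_\y+2\beta_2)\ge 4L_{\x\y}^2$, so the hypothesis $L_{\x\y}\le\tfrac{1}{2}\sqrt{m'_\x m'_\y}$ of Theorem~\ref{thm:br} is satisfied. Alternating Best Response therefore solves the innermost problem in $\tilde O(\sqrt{L'_\x/m'_\x+L'_\y/m'_\y})$ gradient evaluations, with $L'_\x,L'_\y\le 3L$. Multiplying the three counts gives
\[
\tilde O\!\left(\sqrt{\tfrac{\beta_1\beta_2}{m_\x m_\y}}\cdot\sqrt{\tfrac{L}{\beta_1}+\tfrac{L}{\beta_2}}\right)
=\tilde O\!\left(\sqrt{\tfrac{L\beta_2}{m_\x m_\y}+\tfrac{L\beta_1}{m_\x m_\y}}\right).
\]
A short case split on whether $L_{\x\y}$ exceeds $m_\x$ and/or $m_\y$ collapses the right side to $\tilde O(\sqrt{L_\x/m_\x+L\cdot L_{\x\y}/(m_\x m_\y)+L_\y/m_\y})$: when $L_{\x\y}\ge m_\x,m_\y$ both $\beta_i$ equal $L_{\x\y}$, yielding the cross term; when $L_{\x\y}<m_\x,m_\y$ the $\beta_i$ equal $m_\x,m_\y$, yielding only $\tilde O(\sqrt{L_\x/m_\x+L_\y/m_\y})$; mixed regimes interpolate.

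The main obstacle will be the careful precision propagation through the three nested loops. Theorem~\ref{thm:appa} requires each subproblem to be solved to relative accuracy $1/M$ with $M$ polynomial in the condition numbers of the current subproblem (including $\beta/m$), so the tolerance handed from the outer level to the middle and then to ABR shrinks by a factor of $M$ each time. Because $\log M = O(\log(L^2/(m_\x m_\y)))$, this telescopes into an extra $\ln^3(L^2/(m_\x m_\y))$ factor after the three levels, plus a single $\ln(L^2\Vert\z_0-\z^*\Vert/(m_\x m_\y\epsilon))$ term that comes from the target precision at the top. Finally, I would convert the relative-error guarantees produced by Theorems~\ref{thm:appa} and~\ref{thm:br} (of the form $\Vert\z_T-\z^*\Vert\le\epsilon(\Vert\z_0-\z^*\Vert+\cdots)$) into the absolute guarantee $\Vert\z_T-\z^*\Vert\le\epsilon$ by rescaling the requested tolerance at the outermost call by the initial distance $\Vert\z_0-\z^*\Vert$; this introduces at most a further logarithmic term that is absorbed in the stated bound.
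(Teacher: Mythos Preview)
Your proposal is correct and follows essentially the same approach as the paper's proof: the same choice of $\beta_1=\max\{L_{\x\y},m_\x\}$ and $\beta_2=\max\{L_{\x\y},m_\y\}$, the same three-level nesting (outer APPA on $\x$, middle APPA on $\y$, inner ABR), the same verification that the innermost problem satisfies the ABR hypothesis $L_{\x\y}\le\tfrac{1}{2}\sqrt{(2\beta_1)(2\beta_2)}$, and the same case split on $L_{\x\y}$ versus $m_\x,m_\y$ to simplify $\sqrt{L(\beta_1+\beta_2)/(m_\x m_\y)}$ into the stated bound. The paper additionally spells out the explicit polynomial choices of $M_1,M_2$ and uses a gradient-norm stopping criterion (via Fact~\ref{fact:gradnorm}) for the middle loop, but these are exactly the ``precision propagation'' details you flagged as the main bookkeeping obstacle.
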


\begin{algorithm}
	\caption{APPA-ABR}
	\label{algo:pbr2}
	\begin{algorithmic}[1]
		\Require $g(\cdot,\cdot)$, Initial point $\z_0=[\x_0;\y_0]$, precision parameter $M_1$
		
		\State $\beta_2\gets \max\{m_\y, L_{\x\y}\}$, $M_2\gets \frac{96L^{2.5}}{m_\x m_\y^{1.5}}$
		\State $\hat\y_0\gets \y_0$, $\kappa\gets \beta_2/m_\y$, $\theta\gets \frac{2\sqrt{\kappa}-1}{2\sqrt{\kappa}+1}$, $\tau\gets \frac{1}{2\sqrt{\kappa}+4\kappa}$, $t\gets 0$
		\Repeat
		\State $t\gets t+1$
		\State $(\x_t,\y_t)\gets$ABR($g(\x,\y)-\beta_2\Vert\y-\hat\y_{t-1}\Vert^2,[\x_{t-1};\y_{t-1}]$, $1/M_2$, $2\beta_1$, $2\beta_2$, $3L$, $3L$)
		\State $\hat\y_t\gets \y_t+\theta(\y_t-\y_{t-1})+\tau(\y_t-\hat\y_{t-1})$
		\Until{$\Vert \nabla g(\x_t,\y_t)\Vert\le \frac{\min\{m_\x,m_\y\}}{9LM_1}\Vert \nabla g(\x_0,\y_0)\Vert$}
	\end{algorithmic}
\end{algorithm}

\begin{algorithm}
	\caption{Proximal Best Response}
	\label{algo:pbr}
	\begin{algorithmic}[1]
		\Require Initial point $\z_0=[\x_0;\y_0]$
		\State $\beta_1\gets \max\{m_\x, L_{\x\y}\}$, $M_1\gets \frac{80L^3}{m_\x^{1.5} m_\y^{1.5}}$
		\State $\hat\x_0\gets \x_0$, $\kappa\gets \beta_1/m_\x$, $\theta\gets \frac{2\sqrt{\kappa}-1}{2\sqrt{\kappa}+1}$, $\tau\gets \frac{1}{2\sqrt{\kappa}+4\kappa}$
		\For{$t=1,\cdots,T$}
		\State $(\x_t,\y_t)\gets$ APPA-ABR($f(\x,\y)+\beta_1\Vert\x-\hat\x_{t-1}\Vert^2$, $[\x_{t-1},\y_{t-1}]$, $M_1$)
		\State $\hat\x_t\gets \x_t+\theta(\x_t-\x_{t-1})+\tau(\x_t-\hat\x_{t-1})$
		\EndFor
	\end{algorithmic}
\end{algorithm}

\subsection{Implications of Theorem~\ref{thm:phr}}

Theorem~\ref{thm:phr} improves over the results of Lin et al. in two ways. First, Lin et al.'s upper bound has a $\ln^3(1/\epsilon)$ factor, while our algorithm enjoys linear convergence. Second, our result has a better dependence on $L_{\x\y}$. To see this, note that when $L_{\x\y}\ll L$, $\frac{L_\x}{m_\x}+\frac{L\cdot L_{\x\y}}{m_\x m_\y}+\frac{L_\y}{m_\y}\ll \frac{L_\x}{m_\x}+\frac{L^2}{m_\x m_\y}+\frac{L_\y}{m_\y} \le \frac{3L^2}{m_\x m_\y}.$ This is also illustrated by Fig.~\ref{fig:comparison}, where Proximal Best Response (the red line) significantly outperforms Lin et al.'s result (the blue line) when $L_{\x\y}\ll L$. In particular, Proximal Best Response matches the lower bound when $L_{\x\y}>L_\x$ or when $L_{\x\y}<\max\{m_\x,m_\y\}$; in between, it is able to gracefully interpolate the two cases.

As shown by Lin et al.~\cite{lin2020near}, convex-concave problems and strongly convex-concave problems can be reduced to strongly convex-strongly concave problems. Hence, Theorem~\ref{thm:phr} naturally implies improved algorithms for convex-concave and strongly convex-concave problems.

\begin{corollary}
\label{cor:scc}
If $f(\x,\y)$ is $(L_\x,L_{\x\y},L_\y)$-smooth and $m_\x$-strongly convex w.r.t. $\x$, via reduction to Theorem~\ref{thm:phr}, the gradient complexity of finding an $\epsilon$-saddle point is $\tilde{O}\Bigl(\sqrt{\frac{m_\x\cdot L_\y+L\cdot L_{\x\y}}{m_\x \epsilon}}\Bigr)$.
\end{corollary}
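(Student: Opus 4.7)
The plan is a standard ``regularize, reduce, and convert'' argument: add a small strongly concave regularizer in $\y$ to place $f$ inside the function class handled by Theorem~\ref{thm:phr}, tune the regularization strength to balance the induced bias against the solver complexity, and then translate the distance-to-saddle guarantee into a duality-gap guarantee. Concretely, fix any reference point $\y_0$ and define $\tilde{f}(\x,\y):=f(\x,\y)-\tfrac{\mu}{2}\Vert\y-\y_0\Vert^2$ for a parameter $\mu>0$ to be chosen. From Definition~\ref{def:func} one checks directly that $\tilde{f}\in\mathcal{F}(m_\x,\mu,L_\x,L_{\x\y},L_\y+\mu)$. Let $D$ denote an upper bound on $\Vert\y-\y_0\Vert$ over the $\y$-values that appear when forming the duality gap of $f$ (the usual bounded-distance assumption in the convex-concave setting); this quantity will enter only inside $\tilde O(\cdot)$.

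For any candidate $(\hat\x,\hat\y)$, writing $\y^\star:=\argmax_\y f(\hat\x,\y)$, a direct expansion yields
\begin{align*}
\max_\y f(\hat\x,\y)-\min_\x f(\x,\hat\y)
&\le \max_\y \tilde{f}(\hat\x,\y)-\min_\x\tilde{f}(\x,\hat\y)+\tfrac{\mu}{2}\Vert\y^\star-\y_0\Vert^2 \\
&\le \bigl[\max_\y \tilde f(\hat\x,\y)-\min_\x\tilde f(\x,\hat\y)\bigr]+\tfrac{\mu D^2}{2}.
\end{align*}
Choosing $\mu=\Theta(\epsilon/D^2)$ makes the bias at most $\epsilon/2$, so it suffices to produce an $(\epsilon/2)$-saddle point of $\tilde f$. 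Applying Theorem~\ref{thm:phr} to $\tilde f$ (with $m_\y$ replaced by $\mu$ and $L_\y$ by $L_\y+\mu\le 2L_\y$), together with the $\Vert\hat\z-\tilde\z^*\Vert$-to-gap conversion noted after Definition~5, yields gradient complexity
\[
\tilde{O}\!\left(\sqrt{\tfrac{L_\x}{m_\x}+\tfrac{L\cdot L_{\x\y}}{m_\x\mu}+\tfrac{L_\y+\mu}{\mu}}\right)=\tilde O\!\left(\sqrt{\tfrac{L_\x}{m_\x}+\tfrac{L\cdot L_{\x\y}}{m_\x\epsilon}+\tfrac{L_\y}{\epsilon}}\right).
\]
In the high-accuracy regime $\epsilon\le m_\x$, the term $L_\x/m_\x$ is dominated by $L_\y/\epsilon$ (using $L_\x=L_\y$ by assumption), and the expression collapses to the advertised $\tilde O\bigl(\sqrt{(m_\x L_\y+L\cdot L_{\x\y})/(m_\x\epsilon)}\bigr)$.

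The one step I expect to require care is the conversion from $\Vert\hat\z-\tilde\z^*\Vert$ (what Theorem~\ref{thm:phr} delivers) to a $\tilde f$ duality gap of $\epsilon/2$: because the effective strong-concavity constant of $\tilde f$ is $\mu=\Theta(\epsilon/D^2)$ rather than the original $m_\y$, the conversion factor $L^2/\min\{m_\x,\mu\}$ blows up as $\epsilon\to 0$. Fortunately this quantity only enters inside a logarithm in the iteration count of Theorem~\ref{thm:phr}, so the overhead is poly-logarithmic in the problem parameters and is absorbed by $\tilde O$, leaving the stated bound intact. Everything else is routine substitution and the observation that $L_\y+\mu\le 2L_\y$ so the algorithmic constants of Theorem~\ref{thm:phr} remain of the same order.
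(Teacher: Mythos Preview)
Your proposal is correct and follows essentially the same route as the paper: the paper also regularizes in $\y$ by defining $f_{\epsilon,\y}(\x,\y):=f(\x,\y)-\frac{\epsilon}{2D_\y^2}\Vert\y-\y_0\Vert^2$ on a bounded domain (your $\mu=\Theta(\epsilon/D^2)$), shows that an $(\epsilon/2)$-saddle of the regularized function is an $\epsilon$-saddle of $f$, and then invokes Theorem~\ref{thm:phr} on $f_{\epsilon,\y}\in\mathcal{F}(m_\x,\epsilon/D_\y^2,L_\x,L_{\x\y},L_\y+\epsilon/D_\y^2)$. The only presentational difference is that the paper works explicitly in the constrained setting (compact $\mathcal{X},\mathcal{Y}$ with diameters $D_\x,D_\y$) and appeals to the constrained variant of Theorem~\ref{thm:phr} that directly outputs an $\epsilon$-saddle point, thereby packaging your distance-to-gap conversion step inside the theorem statement rather than arguing it separately.
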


\begin{corollary}
\label{cor:scc2}
If $f(\x,\y)$ is $(L_\x,L_{\x\y},L_\y)$-smooth and convex-concave, via reduction to Theorem~\ref{thm:phr}, the gradient complexity to produce an $\epsilon$-saddle point is $\tilde{O}\Bigl(\sqrt{\frac{L_\x+L_\y}{\epsilon}}+\frac{\sqrt{L\cdot L_{\x\y}}}{\epsilon}\Bigr)$.
\end{corollary}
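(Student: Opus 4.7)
The plan is the standard quadratic-regularization reduction: solve a strongly convex-strongly concave perturbation of $f$ via Theorem~\ref{thm:phr} and balance the perturbation strength against $\epsilon$. Fix $\z_0=[\x_0;\y_0]$, pick a parameter $\lambda>0$ (to be tuned at the end), and define
\[\tilde f(\x,\y):=f(\x,\y)+\tfrac{\lambda}{2}\Vert\x-\x_0\Vert^2-\tfrac{\lambda}{2}\Vert\y-\y_0\Vert^2.\]
Then $\tilde f\in\mathcal{F}(\lambda,\lambda,L_\x+\lambda,L_{\x\y},L_\y+\lambda)$, so Theorem~\ref{thm:phr} applies directly.

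Next I would show that a sufficiently accurate minimax solution of $\tilde f$ yields an $\epsilon$-saddle point of $f$. Letting $(\tilde{\x}^\star,\tilde{\y}^\star)$ denote the exact saddle point of $\tilde f$, rearranging its saddle inequality against any competitor $(\x,\y)$ gives
\[f(\tilde{\x}^\star,\y)-f(\x,\tilde{\y}^\star)\le \tfrac{\lambda}{2}\Vert\x-\x_0\Vert^2+\tfrac{\lambda}{2}\Vert\y-\y_0\Vert^2.\]
As is standard in convex-concave reductions~\cite{lin2020near}, the $\epsilon$-saddle gap is measured against competitors in a ball of some diameter $D$ containing the saddle set of $f$, so choosing $\lambda=\Theta(\epsilon/D^2)$ already contributes at most $\epsilon/2$ at $(\tilde{\x}^\star,\tilde{\y}^\star)$. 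The remaining $\epsilon/2$ accounts for the approximation $\Vert\hat\z-\tilde{\z}^\star\Vert\le\delta$ returned by Theorem~\ref{thm:phr}: by Fact 4 of Appendix~\ref{append:facts} the $\tilde f$-gap at $(\hat\x,\hat\y)$ is at most $L^2\delta^2/\lambda$, so $\delta=\tilde{\Theta}(\sqrt{\epsilon\lambda}/L)$ suffices and contributes only polylogarithmic factors.

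Plugging $m_\x=m_\y=\lambda$, $L_\x'=L_\x+\lambda$, $L_\y'=L_\y+\lambda$, and $L'\asymp L$ (since $\lambda\ll L$ once $\epsilon$ is small) into the bound of Theorem~\ref{thm:phr} yields
\[\tilde{O}\!\left(\sqrt{\tfrac{L_\x+\lambda}{\lambda}+\tfrac{L\cdot L_{\x\y}}{\lambda^2}+\tfrac{L_\y+\lambda}{\lambda}}\right)=\tilde{O}\!\left(\sqrt{\tfrac{L_\x+L_\y}{\lambda}}+\tfrac{\sqrt{L\cdot L_{\x\y}}}{\lambda}\right),\]
and substituting $\lambda=\Theta(\epsilon/D^2)$ (absorbing $D$ into $\tilde{O}$) recovers the claimed $\tilde{O}\!\bigl(\sqrt{(L_\x+L_\y)/\epsilon}+\sqrt{L\cdot L_{\x\y}}/\epsilon\bigr)$.

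The main delicate step I anticipate is the bounded-domain subtlety: for unconstrained convex-concave problems $\max_\y f(\hat\x,\y)$ can be $+\infty$, so one either posits a known diameter $D$ on the saddle set or restricts the gap to competitors in such a ball, as is standard in convex-concave complexity results~\cite{lin2020near,ouyang2019lower}. The rest is bookkeeping: tracking how $\Vert\hat\z-\tilde{\z}^\star\Vert$ propagates through the smoothness of $f$ into the $f$-gap, and verifying that augmenting the smoothness constants by $\lambda\le L$ perturbs $L$ only by a constant factor.
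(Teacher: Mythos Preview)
Your proposal is correct and follows essentially the same route as the paper: add a quadratic regularizer of strength $\Theta(\epsilon/D^2)$ in each variable, observe that an $\epsilon/2$-saddle of the regularized problem is an $\epsilon$-saddle of $f$, and invoke Theorem~\ref{thm:phr} on the resulting strongly convex--strongly concave problem. The paper's version makes the constrained setting (compact $\mathcal{X},\mathcal{Y}$ with diameters $D_\x,D_\y$) explicit from the outset and uses possibly different regularization strengths $\epsilon/(2D_\x^2)$ and $\epsilon/(2D_\y^2)$, but the structure of the argument and the final arithmetic are the same as yours.
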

The precise statement as well as the proofs can be found in Appendix~\ref{append:implication}. We remark that the reduction is for constrained minimax optimization, and Theorem~\ref{thm:phr} holds for constrained problems after simple modifications to the algorithm.

\section{Near Optimal Dependence on $L_{\x\y}$ in Quadratic Cases}
\label{sec:rhss}
We can see that proximal best response has near optimal dependence on condition numbers when $L_{\x\y}>L_\x$ or when $L_{\x\y}<\max\{m_\x,m_\y\}$. However, when $L_{\x\y}$ falls in between, there is still a significant gap between the upper bound and the lower bound.
In this section, we try to close this gap for quadratic functions; i.e. we assume that
\begin{equation}
\label{equ:quadratic}
f(\x,\y)=\frac{1}{2}\x^T\mathbf{A}\x +\x^T\mathbf{B}\y-\frac{1}{2}\y^T\mathbf{C}\y + \mathbf{u}^T\x + \mathbf{v}^T\y.
\end{equation} 
The reason to consider quadratic functions is threefold. First, the lower bound instance by~\cite{zhang2019lower} is a quadratic function; thus, this lower bound applies to quadratic functions as well, so it would be interesting to match the lower bound for quadratic functions first. Second, quadratic functions are considerably easier to analyze. Third, finding the saddle point of quadratic functions is an important problem on its own, and has many applications (see~\cite{benzi2005numerical} and references therein).

Our assumption that $f\in \mathcal{F}(m_\x,m_\y,L_\x,L_{\x\y},L_\y)$ now becomes assumptions on the singular values of matrices: $m_\x \iden\preccurlyeq \ma\preccurlyeq L_\x \iden$, $m_\y\iden\preccurlyeq \mc\preccurlyeq L_\y \iden$, $\Vert \mb\Vert_2\le L_{\x\y}$. In this case, the unique saddle point is given by the solution to a linear system
\begin{equation*}
\left[\begin{matrix}
\x^* \\ \y^*
\end{matrix}\right]=\jacobian^{-1}\bb=\left[\begin{matrix}
\ma & \mb \\ -\mb^T & \mc
\end{matrix}\right]^{-1}
\left[\begin{matrix}
-\mathbf{u} \\ \mathbf{v}
\end{matrix}\right].
\end{equation*}

Throughout this section we assume that $L_\x=L_\y$ and $m_\x<m_\y$, which are without loss of generality, and that $m_\y<L_{\x\y}$, as otherwise proximal best response is already near-optimal.

\subsection{Hermitian-Skew-Hermitian-Split}

We now focus on how to solve the linear system $\jacobian\z=\bb$, where  $\jacobian:=\left[\begin{matrix}
\ma & \mb \\ -\mb^T & \mc
\end{matrix}\right]$ is positive definite but not symmetric. A straightforward way to solve this asymmetric linear system is apply conjugate gradient to solve the normal equation $\jacobian^T\jacobian\z = \jacobian^T\bb$. However the complexity of this approach is $O\left(\frac{L}{\min\{m_\x,m_\y\}}\right)$, which is much worse than the lower bound. Instead, we utilize the Hermitian-Skew-Hermitian Split (HSS) algorithm~\cite{bai2003hermitian}, which is designed to solve positive definite asymmetric systems. Define
$$\mathbf{G}:=\left[\begin{matrix}
\ma & 0 \\ 0 & \mc
\end{matrix}\right],\;
\mathbf{S}:=\left[\begin{matrix}
0 & \mb \\ -\mb^T & 0
\end{matrix}\right],\;\mathbf{P}:=\left[\begin{matrix}
	\alpha\iden+\beta\ma &  \\  & \iden+\beta\mc
\end{matrix}\right],$$
where $\alpha$ and $\beta$ are constants to be determined. Let $\z_t:=[\x_t;\y_t]$. Then HSS runs as
\begin{equation}
\label{equ:rule}
\begin{cases}
(\eta \matp + \mg) \z_{t+1/2} &= (\eta \matp - \mathbf{S})\z_{t} + \mathbf{b},\\
(\eta \matp + \mathbf{S}) \z_{t+1} &= (\eta \matp - \mathbf{G})\z_{t+1/2} + \mathbf{b}.
\end{cases}
\end{equation}
Here $\eta>0$ is another constant. In this procedure, it can be shown that
\begin{equation*}
\z_{t+1}-\z^*=\left(\eta\matp+\mathbf{S}\right)^{-1}\left(\eta\matp-\mathbf{G}\right)\left(\eta\matp+\mathbf{G}\right)^{-1}\left(\eta\matp-\mathbf{S}\right)(\z_t-\z^*).
\end{equation*}
The key observation of HSS is that the equation above is a contraction.
\begin{lemma}[\cite{bai2003hermitian}]
	\label{lem:hsscontraction}
Define $M(\eta):=\left(\eta\matp+\mathbf{S}\right)^{-1}\left(\eta\matp-\mathbf{G}\right)\left(\eta\matp+\mathbf{G}\right)^{-1}\left(\eta\matp-\mathbf{S}\right)$. Then\footnote{Here $\rho(\cdot)$ stands for the spectral radius of a matrix, and $sp(\cdot)$ stands for its spectrum.}
	\begin{equation*}
	\rho(\mathbf{M}(\eta))\le \Vert \mathbf{M}(\eta)\Vert_2\le \max_{\lambda_i\in sp(\matp^{-1}\mathbf{G})}\left|\frac{\lambda_i-\eta}{\lambda_i+\eta}\right|<1.
	\end{equation*}
\end{lemma}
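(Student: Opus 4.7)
The plan is to establish the chain $\rho(M(\eta))\le \|M(\eta)\|_2 \le \sigma(\eta):=\max_{\lambda_i\in sp(\matp^{-1}\mg)} |\tfrac{\lambda_i-\eta}{\lambda_i+\eta}| < 1$, where the leftmost inequality is the standard Gelfand-type bound that holds for every matrix, and the rightmost is immediate because $\matp^{-1}\mg$ is similar to the SPD matrix $\matp^{-1/2}\mg\matp^{-1/2}$, so every $\lambda_i>0$ and together with $\eta>0$ this forces $|\lambda_i-\eta|<\lambda_i+\eta$. All the work lies in the middle inequality.

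First I would symmetrize the problem by the similarity transform $\widetilde{M}(\eta):=\matp^{1/2}M(\eta)\matp^{-1/2}$. Writing $\eta\matp\pm\mg=\matp^{1/2}(\eta\iden\pm\widetilde{\mg})\matp^{1/2}$ with $\widetilde{\mg}:=\matp^{-1/2}\mg\matp^{-1/2}$, and analogously defining $\widetilde{\mathbf{S}}:=\matp^{-1/2}\mathbf{S}\matp^{-1/2}$, a short calculation gives
\[
\widetilde{M}(\eta)=(\eta\iden+\widetilde{\mathbf{S}})^{-1}(\eta\iden-\widetilde{\mg})(\eta\iden+\widetilde{\mg})^{-1}(\eta\iden-\widetilde{\mathbf{S}}).
\]
By construction $\widetilde{\mg}$ is SPD with the same spectrum as $\matp^{-1}\mg$, and $\widetilde{\mathbf{S}}$ is real skew-symmetric (inheriting skew-symmetry from $\mathbf{S}$ since $\matp^{-1/2}$ is symmetric).

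The second step uses the cyclic invariance $\rho(ABCD)=\rho(BCDA)$ to move the leading factor $(\eta\iden+\widetilde{\mathbf{S}})^{-1}$ to the right, giving $\rho(\widetilde{M}(\eta))=\rho(V(\eta)Q(\eta))$ where
\[
V(\eta):=(\eta\iden-\widetilde{\mg})(\eta\iden+\widetilde{\mg})^{-1},\qquad Q(\eta):=(\eta\iden-\widetilde{\mathbf{S}})(\eta\iden+\widetilde{\mathbf{S}})^{-1}.
\]
Because $V(\eta)$ is a rational function of the symmetric matrix $\widetilde{\mg}$ it is itself symmetric, and simultaneous diagonalization with $\widetilde{\mg}$ gives $\|V(\eta)\|_2=\max_i|\tfrac{\eta-\lambda_i}{\eta+\lambda_i}|$. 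For $Q(\eta)$, the standard Cayley-transform calculation (using that $(\eta\iden\pm\widetilde{\mathbf{S}})$ commute as polynomials in $\widetilde{\mathbf{S}}$ and that $(\eta\iden-\widetilde{\mathbf{S}})^{\top}=\eta\iden+\widetilde{\mathbf{S}}$) yields $Q(\eta)^{\top}Q(\eta)=\iden$, so $Q(\eta)$ is orthogonal and $\|Q(\eta)\|_2=1$. Submultiplicativity then gives $\rho(V(\eta)Q(\eta))\le\|V(\eta)Q(\eta)\|_2\le\|V(\eta)\|_2=\sigma(\eta)$.

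The only delicate point is that similarity transforms preserve spectra but not operator norms, so the argument above directly controls $\rho(M(\eta))=\rho(\widetilde{M}(\eta))=\rho(V(\eta)Q(\eta))$; the operator-norm bound $\|M(\eta)\|_2\le\sigma(\eta)$ as written in the statement must be read in the $\matp$-weighted norm $\|x\|_\matp:=\|\matp^{1/2}x\|_2$, under which $\|M(\eta)\|_\matp=\|\widetilde{M}(\eta)\|_2\le\|V(\eta)Q(\eta)\|_2\le\sigma(\eta)$. I expect this reinterpretation (or an equivalent appeal to a single unified norm) to be the only real obstacle; once settled, the proof is a three-line composition of the spectrum-preserving similarity, the cyclic rearrangement, and the Cayley-transform orthogonality.
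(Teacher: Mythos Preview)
Your proposal is correct and follows essentially the same route as the paper: the similarity transform via $\matp^{1/2}$ to introduce $\widetilde{\mg},\widetilde{\mathbf{S}}$, the cyclic rearrangement, and the Cayley-transform orthogonality of $(\eta\iden-\widetilde{\mathbf{S}})(\eta\iden+\widetilde{\mathbf{S}})^{-1}$ are exactly the three steps the paper uses. You are in fact more careful than the paper on one point: you correctly observe that the chain as stated only controls $\rho(M(\eta))$ via the $2$-norm of a \emph{similar} matrix, so the middle inequality $\|M(\eta)\|_2\le\sigma(\eta)$ is really a bound in the $\matp$-weighted norm rather than the Euclidean one; the paper's proof silently makes the same move without flagging it.
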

Lemma~\ref{lem:hsscontraction} provides an upper bound on the iteration complexity of HSS, as in the original analysis of HSS~\cite{bai2003hermitian}. However, it does not consider the computational cost per iteration. In particular, the matrix $\eta\matp+\mathbf{S}$ is also asymmetric, and in fact corresponds to another quadratic minimax optimization problem. The original HSS paper did not consider how to solve this subproblem for general $\matp$. Our idea is to solve the subproblem recursively, as explained in the next subsection.

\subsection{Recursive HSS}

\begin{algorithm}[htb]
	\caption{RHSS($k$) (Recursive Hermitian-skew-Hermitian Split)}
	\begin{algorithmic}
		\Require Initial point $[\x_0;\y_0]$, precision $\epsilon$, parameters $m_\x$, $m_\y$, $L_{\x\y}$
		\State $t\gets 0$, $M_1\gets \frac{192L^5}{m_\x^2 m_\y^3}$, $M_2\gets \frac{16L_{\x\y}}{m_\y}$, $\alpha\gets \frac{m_\x}{m_\y}$, $\beta\gets L_{\x\y}^{-\frac{2}{k}}m_\y^{-\frac{k-2}{k}}$, $\eta\gets L_{\x\y}^{\frac{1}{k}}m_\y^{1-\frac{1}{k}}$, $\tilde{\epsilon}\gets \frac{m_\x\epsilon}{L_{\x\y}+L_\x}$
		\Repeat
		$$\left[\begin{matrix}
		\mathbf{r}_1\\
		\mathbf{r}_2
		\end{matrix}\right]\gets \left[\begin{matrix}
		\eta\left(\alpha\iden+\beta\ma\right) & -\mb \\ \mb^T & \eta\left(\iden+\beta\mc\right)
		\end{matrix}\right]\left[\begin{matrix}
		\mathbf{x}_t\\
		\mathbf{y}_t
		\end{matrix}\right]+\left[\begin{matrix}
		-\mathbf{u}\\
		\mathbf{v}
		\end{matrix}\right]$$
		\State Use conjugate gradient with initial point $[\x_{t};\y_t]$ and precision $1/M_1$ to solve $$\left[\begin{matrix}
		\x_{t+1/2}\\
		\y_{t+1/2}
		\end{matrix}\right]\gets\left[\begin{matrix}
		\eta\left(\alpha\iden+\beta\ma\right)+\ma & \\  & \eta\left(\iden+\beta\mc\right)+\mc
		\end{matrix}\right]^{-1}\left[\begin{matrix}
		\mathbf{r}_1\\
		\mathbf{r}_2
		\end{matrix}\right]$$
		
		$$\left[\begin{matrix}
		\mathbf{{w}}_1\\
		\mathbf{{w}}_2
		\end{matrix}\right]\gets \left[\begin{matrix}
		\eta\alpha\iden+\eta\beta\ma-\ma & 0 \\ 0 & \eta\left(\iden+\beta\mc\right)-\mc
		\end{matrix}\right]\left[\begin{matrix}
		{\x}_{t+1/2}\\
		{\y}_{t+1/2}
		\end{matrix}\right]+\left[\begin{matrix}
		-\mathbf{u}\\
		\mathbf{v}
		\end{matrix}\right]$$
		\State Call RHSS($k-1$) with initial point $[\x_t;\y_t]$ and precision $1/M_2$ to solve
		$$\left[\begin{matrix}
		\x_{t+1}\\
		\y_{t+1}
		\end{matrix}\right]\gets \left[\begin{matrix}
		\eta\left(\alpha\iden+\beta\ma\right) & \mb\\  -\mb^T& \eta\left(\iden+\beta\mc\right)
		\end{matrix}\right]^{-1}\left[\begin{matrix}
		\mathbf{w}_1\\
		\mathbf{w}_2
		\end{matrix}\right]$$
		\State $t\gets t+1$
		\Until{$\Vert \jacobian\z_{t}-\bb\Vert\le \tilde{\epsilon}\Vert\jacobian\z_0-\bb\Vert$}
	\end{algorithmic}
\label{algo:rhss}
\end{algorithm}

In this subsection, we describe our algorithm Recursive Hermitian-skew-Hermitian Split, or RHSS($k$), which uses HSS in $k-1$ levels of recursion. Specifically, RHSS($k$) calls HSS with parameters $\alpha=m_\x/m_\y$, $\beta=L_{\x\y}^{-\frac{2}{k}}m_\y^{-\frac{k-2}{k}}$, $\eta=L_{\x\y}^{\frac{1}{k}}m_\y^{\frac{k-1}{k}}$. In each iteration, it solves two linear systems. The first one, which is associated with $\eta\matp+\mathbf{G}$, can be solved with Conjugate Gradient~\cite{hestenes1952methods} as $\eta\matp+\mathbf{G}$ is symmetric positive definite. The second one is associated with
$$\eta\matp+\mathbf{S}=\left[\begin{matrix}
\eta\left(\alpha\iden+\beta\ma\right) & \mb \\ -\mb^T & \eta\left(\iden+\beta\mc\right)
\end{matrix}\right],$$
which is equivalent to a quadratic minimax optimization problem. RHSS($k$) then makes a recursive call RHSS($k-1$) to solve this subproblem. When $k=1$, we  simply run the Proximal Best Response algorithm (Algorithm~\ref{algo:pbr}). A detailed description of RHSS($k$) for $k\ge 2$ is given in Algorithm~\ref{algo:rhss}.

Our main result for RHSS($k$) is the following theorem. Note that for an algorithm on quadratic functions, the number of matrix-vector products is the same as the gradient complexity.
\begin{theorem}
\label{thm:rhss}
There exists constants $C_1$, $C_2$, such that the number of matrix-vector products needed to find $(\x_T,\y_T)$ such that $\Vert\z_T-\z^*\Vert\le \epsilon$ is at most
\begin{equation}
\label{equ:rhssbound}
\sqrt{\frac{L_{\x\y}^2}{m_\x m_\y}+\left(\frac{L_\x}{m_\x}+\frac{L_\y}{m_\y}\right)\left(1+\left(\frac{L_{\x\y}}{\max\{m_\x,m_\y\}}\right)^{\frac{1}{k}}\right)}\cdot \left(C_1\ln\left(\frac{C_2 L^2}{m_\x m_\y}\right)\right)^{k+3}\ln\left(\frac{\Vert \z_0-\z^*\Vert}{\epsilon}\right).
\end{equation}
\end{theorem}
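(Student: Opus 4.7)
The plan is to prove the theorem by induction on $k\ge 1$. The base case $k=1$ follows directly from Theorem~\ref{thm:phr}, since RHSS(1) is defined to be Proximal Best Response; expanding $(L_\x/m_\x)(1+L_{\x\y}/m_\y) = L_\x/m_\x + L_\x L_{\x\y}/(m_\x m_\y)$ and using $L_\x \le L$ shows that the PBR bound $\sqrt{L_\x/m_\x + L\cdot L_{\x\y}/(m_\x m_\y) + L_\y/m_\y}$ is dominated by the expression in \eqref{equ:rhssbound} with $k=1$. For the inductive step $k\ge 2$ I would analyse one outer iteration of Algorithm~\ref{algo:rhss} and invoke the inductive hypothesis on its internal RHSS($k-1$) call.

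The first task is to pin down the outer contraction rate. Since $\matp^{-1}\mg = \mathrm{diag}((\alpha\iden+\beta\ma)^{-1}\ma,\,(\iden+\beta\mc)^{-1}\mc)$ is block diagonal with spectrum of the form $\{\lambda/(\alpha+\beta\lambda)\}\cup\{\mu/(1+\beta\mu)\}$, a direct computation gives $\mathrm{sp}(\matp^{-1}\mg)\subseteq[\lambda_m,\lambda_M]$ where the choice $\alpha=m_\x/m_\y$ aligns the two blocks' minima at $\lambda_m=\Theta(m_\y)$ and $\lambda_M \le 1/\beta = L_{\x\y}^{2/k} m_\y^{(k-2)/k}$. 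Setting $\eta=L_{\x\y}^{1/k}m_\y^{(k-1)/k}=\sqrt{\lambda_m\lambda_M}$ is optimal in Lemma~\ref{lem:hsscontraction} and yields contraction $\rho = 1 - \Theta((m_\y/L_{\x\y})^{1/k})$, so $T_{\mathrm{out}} = O((L_{\x\y}/m_\y)^{1/k}\ln(1/\tilde\epsilon))$ outer iterations suffice. The first inner solve is a symmetric positive definite block-diagonal system $\eta\matp+\mg$, with condition number bounded by $\kappa_1 = O(L/(m_\x(L_{\x\y}/m_\y)^{1/k})+L/m_\y)$, requiring $O(\sqrt{\kappa_1}\ln M_1)$ CG steps. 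The second inner solve is another quadratic minimax with parameters
\[
m_\x' = \Theta(m_\x(L_{\x\y}/m_\y)^{1/k}),\; m_\y' = \Theta(L_{\x\y}^{1/k}m_\y^{(k-1)/k}),\; L_\x',L_\y' = \Theta(L(m_\y/L_{\x\y})^{1/k}),\; L_{\x\y}' = L_{\x\y},
\]
and one checks $m_\x'\le m_\y'\le L_{\x\y}'$ and $L':=\max\{L_\x',L_\y',L_{\x\y}'\}=O(L)$, so the inductive hypothesis applies to RHSS($k-1$) on this subproblem at precision $1/M_2$.

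The crucial identity powering the telescoping is $(L_{\x\y}'/m_\y')^{1/(k-1)} = (L_{\x\y}/m_\y)^{1/k}$, which is immediate from the formulas above; combined with $L_{\x\y}'>m_\y'$, this lets one replace $1+(L_{\x\y}'/m_\y')^{1/(k-1)}$ by $\Theta((L_{\x\y}/m_\y)^{1/k})$ inside the inductive bound. Multiplying the inner budget by $T_{\mathrm{out}}$ and squaring the corresponding term-by-term contributions gives
\begin{align*}
T_{\mathrm{out}}^2 \cdot \frac{(L_{\x\y}')^2}{m_\x'm_\y'} &= \Theta\Bigl(\frac{L_{\x\y}^2}{m_\x m_\y}\Bigr), \\
T_{\mathrm{out}}^2 \cdot \frac{L_\x'}{m_\x'}(L_{\x\y}/m_\y)^{1/k} &= \Theta\Bigl(\frac{L_\x}{m_\x}(L_{\x\y}/m_\y)^{1/k}\Bigr), \\
T_{\mathrm{out}}^2 \cdot \frac{L_\y'}{m_\y'}(L_{\x\y}/m_\y)^{1/k} &= \Theta\Bigl(\frac{L_\y}{m_\y}(L_{\x\y}/m_\y)^{1/k}\Bigr), \\
T_{\mathrm{out}}^2 \cdot \kappa_1 &= O\Bigl(\frac{L_\x}{m_\x}(L_{\x\y}/m_\y)^{1/k} + \frac{L_\y}{m_\y}\Bigr),
\end{align*}
whose sum under the square root is exactly the expression inside the square root of \eqref{equ:rhssbound}, completing the inductive algebraic step.

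Finally, a perturbation argument on the HSS recurrence shows that if the two inner solves are returned with relative accuracy $1/M_1$ and $1/M_2$, the effective one-step contraction is at most $\rho + O((M_1^{-1}+M_2^{-1})\Vert\eta\matp+\mg\Vert\Vert\eta\matp-\mathbf{S}\Vert/\lambda_{\min}(\eta\matp+\mg))$; the explicit choices $M_1=192L^5/(m_\x^2 m_\y^3)$ and $M_2=16L_{\x\y}/m_\y$ in Algorithm~\ref{algo:rhss} are polynomially large enough in the problem parameters that this perturbed contraction remains $1-\Omega((m_\y/L_{\x\y})^{1/k})$, leaving $T_{\mathrm{out}}$ unchanged up to constants. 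Each level of recursion contributes an additional $\ln M_2 = O(\ln(L^2/(m_\x m_\y)))$ factor on top of the $(\ln)^{k+2}$ factor from the inductive hypothesis, yielding the $(C_1\ln(C_2 L^2/(m_\x m_\y)))^{k+3}$ factor, while the final $\ln(\Vert\z_0-\z^*\Vert/\epsilon)$ is contributed by $T_{\mathrm{out}}$ at the top level. The main obstacle is the tight spectral bookkeeping of the previous paragraph: the cancellations $(L_{\x\y}'/m_\y')^{1/(k-1)} = (L_{\x\y}/m_\y)^{1/k}$ and $T_{\mathrm{out}}^2(L_{\x\y}')^2/(m_\x'm_\y') = \Theta(L_{\x\y}^2/(m_\x m_\y))$ must be exact up to absolute constants, since otherwise even a small slack would compound across the $k-1$ recursive levels and turn into a multiplicative factor $(L_{\x\y}/m_\y)^{\Theta(1)}$ that would destroy the near-matching with the lower bound.
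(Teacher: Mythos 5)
Your proposal is correct and follows essentially the same route as the paper's proof: induction on $k$ with RHSS(1) $=$ Proximal Best Response as the base case, the HSS contraction lemma with the specific $\alpha,\beta,\eta$ giving $O((L_{\x\y}/m_\y)^{1/k})$ outer iterations, conjugate gradient for the symmetric subsystem, the identity $(L_{\x\y}/m_\y')^{1/(k-1)}=(L_{\x\y}/m_\y)^{1/k}$ powering the recursive call, and a perturbation argument absorbing the inexact inner solves into the contraction factor. The only nit is in your base case: to dominate the $L\cdot L_{\x\y}/(m_\x m_\y)$ term of the PBR bound you should split on whether $L=L_{\x\y}$ (then it equals $L_{\x\y}^2/(m_\x m_\y)$) or $L=L_\x=L_\y$ (then it equals $(L_\x/m_\x)(L_{\x\y}/m_\y)$), rather than invoking $L_\x\le L$, which points the inequality the wrong way.
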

If $k$ is chosen as a fixed constant, the comparison of (\ref{equ:rhssbound}) and the lower bound \cite{zhang2019lower} is illustrated in Fig.~\ref{fig:comparison}. One can see that as $k$ increases, the upper bound of RHSS($k$) gradually fits the lower bound (as long as $k$ is a constant).

By optimizing $k$, we can also show the following corollary.
\begin{corollary}
\label{cor:rhss}
When $k=\Theta\left(\sqrt{\ln\left(\frac{L^2}{m_\x m_\y}\right)/\ln\ln\left(\frac{L^2}{m_\x m_\y}\right)}\right)$, the number of matrix vector products that RHSS($k$) needs to find $\z_T$ such that $\Vert\z_T-\z^*\Vert\le\epsilon$ is
\begin{equation*}
\sqrt{\frac{L_{\x\y}^2}{m_\x m_\y}+\frac{L_\x}{m_\x}+\frac{L_\y}{m_\y}}\cdot\ln\left(\frac{\Vert \z_0-\z^*\Vert}{\epsilon}\right)\cdot\left(\frac{L^2}{m_\x m_\y}\right)^{o(1)}.
\end{equation*}
\end{corollary}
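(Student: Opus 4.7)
The plan is to substitute the prescribed value of $k$ into the upper bound (\ref{equ:rhssbound}) of Theorem~\ref{thm:rhss} and show that all of the $k$-dependent factors collapse into the sub-polynomial factor $R^{o(1)}$, where $R:=L^2/(m_\x m_\y)$. Since $L_{\x\y}\le L$ and $\max\{m_\x,m_\y\}\ge\sqrt{m_\x m_\y}$, one has $L_{\x\y}/\max\{m_\x,m_\y\}\le L/\sqrt{m_\x m_\y}=\sqrt{R}$, so the inner exponential base satisfies $\bigl(L_{\x\y}/\max\{m_\x,m_\y\}\bigr)^{1/k}\le R^{1/(2k)}$.

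To separate the target factor $\sqrt{A+B}$ with $A:=L_{\x\y}^2/(m_\x m_\y)$ and $B:=L_\x/m_\x+L_\y/m_\y$ from the $k$-dependent parts, I would apply the elementary inequality $\sqrt{A+B(1+C)}\le\sqrt{A+B}\cdot\sqrt{1+C}$ (valid for $A,B,C\ge 0$) with $C=(L_{\x\y}/\max\{m_\x,m_\y\})^{1/k}$. Bound (\ref{equ:rhssbound}) is then upper bounded by
$$\sqrt{A+B}\cdot\sqrt{1+R^{1/(2k)}}\cdot\bigl(C_1\ln(C_2 R)\bigr)^{k+3}\cdot\ln\!\left(\frac{\Vert\z_0-\z^*\Vert}{\epsilon}\right),$$
so it remains to prove that the two middle factors multiply to $R^{o(1)}$, equivalently that their logs sum to $o(\ln R)$.

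Taking logarithms, $\tfrac{1}{2}\ln(1+R^{1/(2k)})\le\tfrac{\ln 2}{2}+\tfrac{\ln R}{4k}$, and $(k+3)\ln(C_1\ln(C_2 R))=(k+3)(\ln\ln R+O(1))$. The choice $k=\Theta\bigl(\sqrt{\ln R/\ln\ln R}\bigr)$ balances the dominant pieces $\tfrac{\ln R}{4k}$ and $k\ln\ln R$; both become $\Theta\bigl(\sqrt{\ln R\cdot\ln\ln R}\bigr)$. Since $\sqrt{\ln R\cdot\ln\ln R}/\ln R=\sqrt{\ln\ln R/\ln R}\to 0$ as $R\to\infty$, the total is $o(\ln R)$, so exponentiating yields $R^{o(1)}$, completing the argument. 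The main (essentially only) obstacle is identifying the right $k$: it is the AM--GM minimizer of an exponent of the form $a/k+bk$, and the answer is precisely the value stated in the corollary. Rounding $k$ to the nearest positive integer perturbs each dominant term by at most $O(\ln\ln R)=o(\ln R)$ and is absorbed into the $o(1)$.
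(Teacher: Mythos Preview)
Your proposal is correct and follows essentially the same approach as the paper: both start from the bound of Theorem~\ref{thm:rhss}, use $L_{\x\y}/\max\{m_\x,m_\y\}\le\sqrt{R}$ to control the $k$-dependent factor inside the square root, factor out $\sqrt{A+B}$, and then balance the two competing log-terms $\Theta\bigl(\tfrac{\ln R}{k}\bigr)$ and $\Theta(k\ln\ln R)$ via the choice $k=\Theta\bigl(\sqrt{\ln R/\ln\ln R}\bigr)$ to get $o(\ln R)$ in the exponent. The only cosmetic difference is that the paper bounds $\sqrt{1+C}$ directly by $R^{1/(2k)}$ while you keep the sharper $\sqrt{1+R^{1/(2k)}}$ and take logs, but this does not affect the argument.
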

In other words, for the quadratic saddle point problem, RHSS($k$) 
with the optimal choice of $k$
matches the lower bound up to a sub-polynomial factor.

The proof of both Theorem~\ref{thm:rhss} and Corollary~\ref{cor:rhss} can be found in Appendix~\ref{append:thm4proof}.

\section{Conclusion}
In this work, we studied convex-concave minimax optimization problems. For general strongly convex-strongly concave problems, our Proximal Best Response algorithm achieves linear convergence and better dependence on $L_{\x\y}$, the interaction parameter. Via known reductions~\cite{lin2020near}, this result implies better upper bounds for strongly convex-concave and convex-concave problems. For quadratic functions, our algorithm RHSS($k$) is able to match the lower bound up to a sub-polynomial factor.

In future research, one interesting direction is to extend RHSS($k$) to general strongly convex-strongly concave functions. Another important direction would be to shave the remaining sub-polynomial factor from the upper bound for quadratic functions.

\section*{Broader Impact}
This work is purely theoretical and does not present foreseeable societal consequences.

\begin{ack}
The research is supported in part by the National Natural Science Foundation of China Grant 61822203, 61772297, 61632016, 61761146003, and the Zhongguancun Haihua Institute for Frontier Information Technology, Turing AI Institute of Nanjing and Xi'an Institute for Interdisciplinary Information Core Technology. The authors thank Kefan Dong, Guodong Zhang and Chi Jin for helpful discussions. 
\end{ack}

\bibliographystyle{plain}
\bibliography{minmax.bib}

\appendix
\newpage

\section{Some Useful Properties}
\label{append:facts}
In this section, we review some useful properties of functions in $\mathcal{F}(m_\x,m_\y,L_\x,L_{\x\y},L_\y)$.
Some of the facts are known (see e.g., \cite{lin2020near}, \cite{zhang2019lower}) and we provide the proofs for completeness.

\begin{fact}
\label{prop:facts}
Suppose $f\in\mathcal{F}(m_\x,m_\y,L_\x,L_{\x\y},L_\y)$.
Let us define $\y^*(\x):=\argmax_\y f(\x,\y)$, $\x^*(\y):=\argmin_\x f(\x,\y)$, $\phi(\x):=\max_\y f(\x,\y)$ and $\psi(\y):=\min_\x f(\x,\y)$. Then, we have that
\begin{enumerate}
	\item $\y^*$ is $L_{\x\y}/m_\y$-Lipschitz, $\x^*$ is $L_{\x\y}/m_\x$-Lipschitz;
	\item $\phi(\x)$ is $m_\x$-strongly convex and $L_\x+L_{\x\y}^2/m_\y$-smooth; $\psi(\y)$ is $m_\y$-strongly concave and $L_\y+L_{\x\y}^2/m_\x$-smooth.
\end{enumerate}
\end{fact}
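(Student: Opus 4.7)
The plan is to treat the two parts in turn, using first-order optimality conditions together with the smoothness and strong convexity/concavity assumptions.

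For the Lipschitzness of $\y^*(\x)$, I would begin by recording that, by strong concavity of $f(\x,\cdot)$ and unconstrained optimality, the map $\y^*$ is well-defined and satisfies $\nabla_\y f(\x,\y^*(\x))=\zero$ for every $\x$. Given $\x_1,\x_2$, I would invoke strong concavity of $f(\x_2,\cdot)$ at the two points $\y^*(\x_1)$ and $\y^*(\x_2)$ to obtain
\[
m_\y \Vert \y^*(\x_1)-\y^*(\x_2)\Vert^2 \le \langle \nabla_\y f(\x_2,\y^*(\x_2))-\nabla_\y f(\x_2,\y^*(\x_1)),\,\y^*(\x_1)-\y^*(\x_2)\rangle.
\]
Substituting $\nabla_\y f(\x_2,\y^*(\x_2))=\zero=\nabla_\y f(\x_1,\y^*(\x_1))$ turns the right-hand side into an inner product of $\nabla_\y f(\x_1,\y^*(\x_1))-\nabla_\y f(\x_2,\y^*(\x_1))$ with $\y^*(\x_1)-\y^*(\x_2)$; Cauchy--Schwarz plus the $L_{\x\y}$-Lipschitz property of $\nabla_\y f(\cdot,\y^*(\x_1))$ (item 4 of Definition~\ref{def:smooth}) then yields $\Vert\y^*(\x_1)-\y^*(\x_2)\Vert\le (L_{\x\y}/m_\y)\Vert\x_1-\x_2\Vert$ after cancelling one factor of $\Vert\y^*(\x_1)-\y^*(\x_2)\Vert$. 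The Lipschitz estimate for $\x^*(\y)$ is entirely symmetric, swapping the roles of the two variables and using $m_\x$-strong convexity of $f(\cdot,\y)$.

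For the properties of $\phi$, the cornerstone is Danskin's theorem, which, since $\y^*(\x)$ is a unique maximizer, gives $\nabla \phi(\x)=\nabla_\x f(\x,\y^*(\x))$. Strong convexity of $\phi$ then follows quickly: from $\phi(\x_2)=f(\x_2,\y^*(\x_2))\ge f(\x_2,\y^*(\x_1))$ and $m_\x$-strong convexity of $f(\cdot,\y^*(\x_1))$ around $\x_1$, one gets
\[
\phi(\x_2)\ge f(\x_1,\y^*(\x_1))+\langle \nabla_\x f(\x_1,\y^*(\x_1)),\,\x_2-\x_1\rangle+\tfrac{m_\x}{2}\Vert\x_2-\x_1\Vert^2,
\]
and the Danskin identity rewrites the right-hand side as $\phi(\x_1)+\langle\nabla\phi(\x_1),\x_2-\x_1\rangle+\tfrac{m_\x}{2}\Vert\x_2-\x_1\Vert^2$.

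For smoothness of $\phi$, I would split the gradient difference using the triangle inequality:
\[
\Vert\nabla\phi(\x_1)-\nabla\phi(\x_2)\Vert \le \Vert\nabla_\x f(\x_1,\y^*(\x_1))-\nabla_\x f(\x_2,\y^*(\x_1))\Vert + \Vert\nabla_\x f(\x_2,\y^*(\x_1))-\nabla_\x f(\x_2,\y^*(\x_2))\Vert.
\]
The first term is bounded by $L_\x\Vert\x_1-\x_2\Vert$ by item 1 of Definition~\ref{def:smooth}, and the second by $L_{\x\y}\Vert\y^*(\x_1)-\y^*(\x_2)\Vert$ by item 3; applying the Lipschitz bound from part 1 converts the latter into $(L_{\x\y}^2/m_\y)\Vert\x_1-\x_2\Vert$, giving the claimed smoothness constant $L_\x+L_{\x\y}^2/m_\y$. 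The analogous statement for $\psi$ is obtained by exchanging the two coordinates.

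None of the steps is really delicate: the only subtle point is making sure that $\y^*$ is single-valued and that Danskin's envelope formula applies, both of which follow from strict concavity of $f(\x,\cdot)$ under our strong-concavity assumption. The main bookkeeping obstacle is simply keeping track of which variable's Lipschitz constant is being used at each step, so I would be careful to cite items 1--4 of Definition~\ref{def:smooth} explicitly.
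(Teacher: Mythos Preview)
Your proposal is correct and follows essentially the same route as the paper: Danskin's theorem for $\nabla\phi$, the same triangle-inequality splitting for smoothness, and the same one-line strong-convexity argument via $\phi(\x_2)\ge f(\x_2,\y^*(\x_1))$. The only cosmetic difference is in part~1, where you use the monotonicity form of strong concavity (an inner-product inequality) whereas the paper uses the equivalent bound $m_\y\Vert\y^*(\x)-\y^*(\x')\Vert\le\Vert\nabla_\y f(\x',\y^*(\x))\Vert$ directly; both lead to the same conclusion in one step.
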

\begin{proof}
1. Consider arbitrary $\x$ and $\x'$. By definition, $\nabla_\y f(\x,\y^*(\x))=\nabla_\y f(\x',\y^*(\x'))=\zero$.  By the definition of $(L_\x,L_{\x\y},L_\y)$-smoothness, $\Vert \nabla_\y f(\x',\y^*(\x))\Vert \le L_{\x\y}\Vert \x-\x'\Vert$. Thus
\begin{equation*}
    m_\y \Vert\y^*(\x)-\y^*(\x')\Vert \le \Vert \nabla_\y f(\x',\y^*(\x))\Vert \le L_{\x\y}\Vert \x-\x'\Vert.
\end{equation*}
This proves that $y^*(\cdot)$ is $L_{\x\y}/m_\y$-Lipschitz. Similarly $\x^*(\cdot)$ is $L_{\x\y}/m_\x$-Lipschitz.

2. By Danskin's Theorem, $\nabla \phi(\x)=\nabla_\x f(\x,\y^*(\x))$. Thus, $\forall \x,\x'$
\begin{align*}
        \Vert \nabla\phi(\x)-\nabla\phi(\x')\Vert &= \Vert \nabla_\x f(\x,\y^*(\x))-\nabla_\x f(\x',\y^*(\x'))\Vert\\
    &\le \Vert \nabla_\x f(\x,\y^*(\x))-\nabla_\x f(\x,\y^*(\x'))\Vert+\Vert\nabla_\x f(\x,\y^*(\x'))-\nabla_\x f(\x',\y^*(\x'))\Vert\\
    &\le L_{\x\y}\cdot \Vert \y^*(\x)-\y^*(\x')\Vert + L_\x \Vert \x-\x'\Vert\\
    &\le \left(L_\x+\frac{L_{\x\y}^2}{m_\y}\right)\Vert \x-\x'\Vert.
\end{align*}
On the other hand, $\forall \x,\x'$,
\begin{align*}
\phi(\x')-\phi(\x)-(\x'-\x)^T\nabla \phi(\x)&=f(\x',\y^*(\x'))-f(\x,\y^*(\x))-(\x'-\x)^T\nabla_\x f(\x,\y^*(\x))\\
&\ge f(\x',\y^*(\x))-f(\x,\y^*(\x))-(\x'-\x)^T\nabla_\x f(\x,\y^*(\x))\\
&\ge \frac{m_\x}{2}\Vert \x'-\x\Vert^2.
\end{align*}
Thus $\phi(\x)$ is $m_\x$-strongly convex and $\Bigl(L_\x+\frac{L_{\x\y}^2}{m_\y}\Bigr)$-smooth. By symmetric arguments, one can show that $\psi(\y)$ is $m_\y$-strongly concave and $\Bigl(L_\y+\frac{L_{\x\y}^2}{m_\x}\Bigr)$-smooth.
\end{proof}

\begin{fact}
\label{fact:z}
Let $\z:=[\x;\y]$ and $\z^*:=[\x^*;\y^*]$. Then
$$\frac{1}{\sqrt{2}}\left( \Vert \x-\x^*\Vert+\Vert\y-\y^*\Vert \right)\le \Vert \z-\z^*\Vert \le \Vert \x-\x^*\Vert+\Vert\y-\y^*\Vert.$$
\end{fact}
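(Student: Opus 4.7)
The plan is to reduce the claim to a purely elementary inequality in $\R^2$. Set $a := \Vert \x - \x^* \Vert$ and $b := \Vert \y - \y^* \Vert$. Since $\z = [\x;\y]$ is just the concatenation of the two blocks, the Euclidean norm on $\R^{n+m}$ decomposes as
\[
\Vert \z - \z^* \Vert^2 = \Vert \x - \x^* \Vert^2 + \Vert \y - \y^* \Vert^2 = a^2 + b^2,
\]
so the statement reduces to showing $\tfrac{1}{\sqrt{2}}(a+b) \le \sqrt{a^2+b^2} \le a+b$ for nonnegative reals $a,b$.

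For the upper bound, I would simply square both sides: since $a,b \ge 0$, we have $(a+b)^2 = a^2 + 2ab + b^2 \ge a^2 + b^2$, and taking square roots gives $\sqrt{a^2+b^2}\le a+b$. For the lower bound, squaring gives the equivalent inequality $(a+b)^2 \le 2(a^2+b^2)$, which rearranges to $0 \le (a-b)^2$ and so is trivially true; equivalently, one could cite Cauchy--Schwarz applied to the vectors $(1,1)$ and $(a,b)$ to get $a+b \le \sqrt{2}\sqrt{a^2+b^2}$.

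There is no real obstacle here; the only thing to verify carefully is the first displayed identity, which is just the definition of the norm on the product space $\R^n \times \R^m$. The two resulting scalar inequalities are standard norm-equivalence bounds between $\ell_1$ and $\ell_2$ on $\R^2$, with the constant $\sqrt{2}$ being tight at $a=b$.
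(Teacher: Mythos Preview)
Your proof is correct and essentially the same as the paper's. The paper's one-line proof invokes the AM-GM inequality, which amounts to $a^2+b^2\ge 2ab$ for the lower bound (equivalently your $(a-b)^2\ge 0$) and $2ab\ge 0$ for the upper bound; you have simply written out these elementary steps explicitly.
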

\begin{proof}
This can be easily proven using the AM-GM inequality.
\end{proof}

\begin{fact}
\label{fact:gradnorm}
Let $\z:=[\x;\y]\in\R^{m+n}$, $\z^*:=[\x^*;\y^*]$. Then
$$\min\{m_\x,m_\y\}\Vert\z-\z^*\Vert\le \Vert\nabla f(\x,\y)\Vert \le 2L\Vert \z-\z^*\Vert.$$
\end{fact}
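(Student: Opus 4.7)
My plan is to prove the two inequalities separately, using the fact that $\nabla f(\x^*,\y^*)=\zero$ as the common starting point.

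For the upper bound, I would decompose $\Vert\nabla f(\x,\y)\Vert^2 = \Vert\nabla_\x f(\x,\y)\Vert^2 + \Vert\nabla_\y f(\x,\y)\Vert^2$ and bound each piece using the $(L_\x,L_{\x\y},L_\y)$-smoothness of $f$ together with $\nabla f(\x^*,\y^*)=\zero$. Inserting the intermediate point $(\x^*,\y)$ and applying the triangle inequality gives $\Vert\nabla_\x f(\x,\y)\Vert \le L_\x\Vert\x-\x^*\Vert + L_{\x\y}\Vert\y-\y^*\Vert$, and symmetrically $\Vert\nabla_\y f(\x,\y)\Vert \le L_{\x\y}\Vert\x-\x^*\Vert + L_\y\Vert\y-\y^*\Vert$. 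Upper-bounding each coefficient by $L=\max\{L_\x,L_{\x\y},L_\y\}$ and using $(a+b)^2\le 2(a^2+b^2)$ with Fact \ref{fact:z} yields $\Vert\nabla f(\x,\y)\Vert \le 2L\Vert\z-\z^*\Vert$.

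For the lower bound, the key idea is to show that the saddle-point operator $F(\z):=[\nabla_\x f(\x,\y);\,-\nabla_\y f(\x,\y)]$ is $\min\{m_\x,m_\y\}$-strongly monotone, and then apply Cauchy--Schwarz to $\langle F(\z)-F(\z^*),\z-\z^*\rangle$ using $F(\z^*)=\zero$ and $\Vert F(\z)\Vert = \Vert\nabla f(\x,\y)\Vert$. To get strong monotonicity, I would first argue that for any convex-concave differentiable $g$, the operator $[\nabla_\x g;-\nabla_\y g]$ is monotone. The standard four-line argument writes down the convexity inequality for $g(\cdot,\y)$ at the points $(\x,\y),(\x',\y')$ and the concavity inequality for $g(\x,\cdot)$ at $(\x,\y),(\x',\y')$, sums them appropriately, and cancels $g(\x',\y)+g(\x,\y')$ to obtain the monotonicity. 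Applying this to $g(\x,\y):=f(\x,\y)-\tfrac{m_\x}{2}\Vert\x\Vert^2+\tfrac{m_\y}{2}\Vert\y\Vert^2$, which is convex-concave by assumption, and adding back the linear terms $[m_\x\x;m_\y\y]$ yields $\langle F(\z)-F(\z'),\z-\z'\rangle \ge m_\x\Vert\x-\x'\Vert^2 + m_\y\Vert\y-\y'\Vert^2 \ge \min\{m_\x,m_\y\}\Vert\z-\z'\Vert^2$. Setting $\z'=\z^*$ and applying Cauchy--Schwarz gives $\Vert\nabla f(\x,\y)\Vert \ge \min\{m_\x,m_\y\}\Vert\z-\z^*\Vert$.

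The routine direction is the upper bound, which is a direct application of smoothness. The only mildly delicate step is the strong monotonicity of $F$, since convexity-concavity alone does not obviously control the cross terms $\langle\nabla_\x f(\x,\y)-\nabla_\x f(\x,\y'),\x-\x'\rangle$; the four-inequality cancellation trick is what actually makes the cross terms drop out. Once that lemma is in hand, both bounds follow in a few lines.
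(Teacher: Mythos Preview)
Your proposal is correct. The upper bound argument matches the paper's proof essentially line for line.

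For the lower bound you take a genuinely different route. The paper goes through the duality gap: it first uses the standard strong-convexity inequality $g(\x)-\min g \le \tfrac{1}{2m}\Vert\nabla g(\x)\Vert^2$ separately in $\x$ and $\y$ to obtain $\Vert\nabla f(\x,\y)\Vert^2 \ge 2\min\{m_\x,m_\y\}\bigl(\phi(\x)-\psi(\y)\bigr)$, and then invokes the strong convexity of $\phi$ and strong concavity of $\psi$ (Fact~\ref{prop:facts}) to lower-bound $\phi(\x)-\psi(\y)$ by $\tfrac{\min\{m_\x,m_\y\}}{2}\Vert\z-\z^*\Vert^2$. Your argument instead establishes $\min\{m_\x,m_\y\}$-strong monotonicity of the saddle operator $F$ directly and applies Cauchy--Schwarz once. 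Your approach is shorter, avoids the detour through $\phi,\psi$ and Fact~\ref{prop:facts}, and yields the sharper intermediate inequality $\langle F(\z),\z-\z^*\rangle \ge m_\x\Vert\x-\x^*\Vert^2 + m_\y\Vert\y-\y^*\Vert^2$ before passing to $\min\{m_\x,m_\y\}$. The paper's route, on the other hand, produces as a byproduct the relation between gradient norm and duality gap, which is independently useful elsewhere (e.g., in Fact~\ref{fact:dualitygap}).
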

\begin{proof}
By properties of strong convexity~\cite{nesterov2013introductory}, $\forall \x,\y$
$$f(\x,\y^*(\x))-f(\x,\y)\le \frac{1}{2m_\y}\Vert \nabla_\y f(\x,\y)\Vert^2.$$
Similarly,
$$f(\x,\y)-f(\x^*(\y),\y)\le \frac{1}{2m_\x}\Vert \nabla_\x f(\x,\y)\Vert^2.$$
Thus,
\begin{align*}
\Vert\nabla f(\x,\y)\Vert^2&=\Vert \nabla_\x f(\x,\y)\Vert^2+\Vert\nabla_\y f(\x,\y)\Vert^2\\
&\ge 2\min\{m_\x,m_\y\}\left(\phi(\x)-\psi(\y)\right).
\end{align*}
Here $\phi(\cdot)=\max_\y f(\cdot,\y)$, $\psi(\cdot)=\min_\x f(\x,\cdot)$. By Proposition~\ref{prop:facts}, $\phi$ is $m_\x$-strongly convex while $\psi$ is $m_\y$-strongly concave. Hence
\begin{equation*}
\phi(\x)-\psi(\y)\ge \frac{\min\{m_\x,m_\y\}}{2}\left(\Vert\x-\x^*\Vert^2+\Vert\y-\y^*\Vert^2\right)=\frac{\min\{m_\x,m_\y\}}{2}\Vert\z-\z^*\Vert^2.
\end{equation*}
It follows that $\Vert \nabla f(\x,\y)\Vert \ge \min\{m_\x,m_\y\}\Vert \z-\z^*\Vert$. On the other hand,
\begin{align*}
\Vert \nabla_\x f(\x,\y)\Vert &\le L_{\x\y}\Vert\y-\y^*\Vert + L_\x \Vert\x-\x^*\Vert,\\
\Vert \nabla_\y f(\x,\y)\Vert &\le L_{\x\y}\Vert\x-\x^*\Vert + L_\y\Vert \y-\y^*\Vert.
\end{align*}
As a result $\Vert \nabla f(\x,\y)\Vert^2\le L\left(\Vert\x-\x^*\Vert+\Vert\y-\y^*\Vert\right)^2\le 4L^2\Vert \z-\z^*\Vert^2$.

\end{proof}

\begin{fact}
\label{fact:dualitygap}
Let $\hat\z=[\hat\x;\hat\y]$. Then $\Vert\hat\z-\z^*\Vert\le \epsilon$ implies
$$\max_\y f(\hat\x,\y)-\min_\x f(\x,\hat\y)\le \frac{L^2}{\min\{m_\x,m_\y\}}\epsilon^2.$$
\end{fact}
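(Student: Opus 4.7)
The plan is to split the duality gap into two parts and bound each using the smoothness of the primal and dual envelopes established in Fact~\ref{prop:facts}. Let $\phi(\x):=\max_\y f(\x,\y)$ and $\psi(\y):=\min_\x f(\x,\y)$. By strong duality (which holds here because $f$ is strongly convex-strongly concave), $\phi(\x^*)=\psi(\y^*)=f(\x^*,\y^*)$, and $\x^*$ minimizes $\phi$ while $\y^*$ maximizes $\psi$. Hence
\begin{equation*}
\max_\y f(\hat\x,\y)-\min_\x f(\x,\hat\y)=\phi(\hat\x)-\psi(\hat\y)=\bigl[\phi(\hat\x)-\phi(\x^*)\bigr]+\bigl[\psi(\y^*)-\psi(\hat\y)\bigr].
\end{equation*}

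Next I would invoke Fact~\ref{prop:facts} to get that $\phi$ is $\bigl(L_\x+L_{\x\y}^2/m_\y\bigr)$-smooth and $\psi$ is $\bigl(L_\y+L_{\x\y}^2/m_\x\bigr)$-smooth. Since $\nabla\phi(\x^*)=0$ and $\nabla\psi(\y^*)=0$ (these are unconstrained optimizers of a smooth function), the standard quadratic upper bound for smooth functions gives
\begin{equation*}
\phi(\hat\x)-\phi(\x^*)\le \tfrac{1}{2}\bigl(L_\x+L_{\x\y}^2/m_\y\bigr)\Vert\hat\x-\x^*\Vert^2,\quad \psi(\y^*)-\psi(\hat\y)\le \tfrac{1}{2}\bigl(L_\y+L_{\x\y}^2/m_\x\bigr)\Vert\hat\y-\y^*\Vert^2.
\end{equation*}

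Finally, I would bound both smoothness constants uniformly: since $L\ge \min\{m_\x,m_\y\}$, we have $L\le L^2/\min\{m_\x,m_\y\}$, so each of $L_\x+L_{\x\y}^2/m_\y$ and $L_\y+L_{\x\y}^2/m_\x$ is at most $2L^2/\min\{m_\x,m_\y\}$. Summing the two inequalities yields
\begin{equation*}
\phi(\hat\x)-\psi(\hat\y)\le \frac{L^2}{\min\{m_\x,m_\y\}}\bigl(\Vert\hat\x-\x^*\Vert^2+\Vert\hat\y-\y^*\Vert^2\bigr)=\frac{L^2}{\min\{m_\x,m_\y\}}\Vert\hat\z-\z^*\Vert^2\le \frac{L^2}{\min\{m_\x,m_\y\}}\epsilon^2,
\end{equation*}
which is the claimed bound. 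There is no real obstacle here: all the machinery is already packaged in Fact~\ref{prop:facts}, and the only minor bookkeeping is confirming that $\nabla\phi(\x^*)=\nabla\psi(\y^*)=0$ (which follows because $(\x^*,\y^*)$ is an interior saddle point of $f$ and Danskin's theorem identifies $\nabla\phi(\x)=\nabla_\x f(\x,\y^*(\x))$, analogously for $\psi$).
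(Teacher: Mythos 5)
Your proposal is correct and follows essentially the same route as the paper's proof: decompose the duality gap as $[\phi(\hat\x)-\phi(\x^*)]+[\psi(\y^*)-\psi(\hat\y)]$, apply the smoothness of $\phi$ and $\psi$ from Fact~\ref{prop:facts} together with $\nabla\phi(\x^*)=\nabla\psi(\y^*)=0$, and bound the smoothness constants by $2L^2/\min\{m_\x,m_\y\}$. Your constant bookkeeping is in fact slightly cleaner than the paper's.
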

\begin{proof}
Define $\phi(\x)=\max_\y f(\x,\y)$ and $\psi(\y)=\min_\x f(\x,\y)$. Then
\begin{align*}
\max_\y f(\hat\x,\y)-\min_\x f(\x,\hat\y) &= \phi(\hat\x)-\psi(\hat\y).
\end{align*}
By Fact~\ref{prop:facts}, $\phi$ is $(L_\x+L_{\x\y}^2/m_\x)$-smooth while $\psi$ is $(L_\y+L_{\x\y}^2/m_\x)$-smooth. Since $\phi(\x^*)=\psi(\y^*)$, $\nabla \phi(\x^*)=\zero$, $\nabla\psi(\y^*)=\zero$,
\begin{align*}
\phi(\hat\x)-\psi(\hat\y)&\le \frac{1}{2}\left(L_\x+\frac{L_{\x\y^2}}{m_\x}\right)\Vert\hat\x-\x^*\Vert^2+\frac{1}{2}\left(L_\y+\frac{L_{\x\y^2}}{m_\y}\right)\Vert\hat\y-\y^*\Vert^2\\
&\le \frac{1}{2}\left(L+\frac{L_{\x\y^2}}{\min\{m_\x,m_\y\}}\right)(\Vert \hat\x-\x^*\Vert^2+\Vert\hat\y-\y^*\Vert^2)\\
&\le \frac{L^2}{\min\{m_\x,m_\y\}}\epsilon^2.
\end{align*}
\end{proof}

\subsection{Accelerated Gradient Descent}
Nesterov's Accelerated Gradient Descent~\cite{nesterov1983a} is an optimal first-order algorithm for smooth and convex functions. Here we present a version of AGD for minimizing an $l$-smooth and $m$-strongly convex functions $g(\cdot)$. It is a crucial building block for the algorithms in this work.

\begin{algorithm}
\renewcommand{\thealgorithm}{I}
	\caption{AGD($g$, $\x_0$, $T$)~\cite{nesterov2013introductory}}
	\begin{algorithmic}
		\Require Initial point $\x_0$, smoothness constant $l$, strongly-convex modulus $m$, number of iterations $T$
		\State $\tilde\x_0\gets \x_0$, $\eta\gets 1/l$, $\kappa\gets l/m$, $\theta\gets (\sqrt{\kappa}-1)/(\sqrt{\kappa}+1)$
		\For{$t=1,\cdots,T$}
			\State $\x_t\gets \tilde{\x}_{t-1}-\eta\nabla g(\tilde{\x}_{t-1})$
			\State $\tilde{\x}_t\gets \x_t+\theta(\x_t-\x_{t-1})$
		\EndFor
	\end{algorithmic}
\end{algorithm}

The following classical theorem holds for AGD. It implies that the complexity is $O\left(\sqrt{\kappa}\ln\left(\frac{1}{\epsilon}\right)\right)$, which greatly improves over the $O\left(\kappa\ln\left(\frac{1}{\epsilon}\right)\right)$ bound for gradient descent.
\begin{lemma}
(\cite[Theorem 2.2.3]{nesterov2013introductory}) In the AGD algorithm,
\label{thm:nesterov}
\begin{equation*}
\Vert \x_T-\x^*\Vert^2\le (\kappa+1) \Vert \x_0-\x^*\Vert^2\cdot\left(1-\frac{1}{\sqrt{\kappa}}\right)^T.
\end{equation*}
\end{lemma}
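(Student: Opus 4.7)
The plan is to prove the stated bound via Nesterov's estimate-sequence technique, then convert from function values to squared distances using strong convexity on the output side and smoothness on the input side.

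\textbf{Step 1: Construct an estimate sequence.} I would build a sequence of quadratic functions
\[
\phi_t(x) = \phi_t^* + \frac{m}{2}\|x - v_t\|^2
\]
that majorize $g$ up to a geometric factor. Set $\phi_0(x) = g(x_0) + \frac{m}{2}\|x-x_0\|^2$ so that $v_0=x_0$ and $\phi_0^*=g(x_0)$. With $\alpha := 1/\sqrt{\kappa}$, define the recursion
\[
\phi_{t+1}(x) = (1-\alpha)\phi_t(x) + \alpha\bigl[g(\tilde x_t) + \nabla g(\tilde x_t)^{\!\top}(x-\tilde x_t) + \tfrac{m}{2}\|x-\tilde x_t\|^2\bigr].
\]
Because the right-hand side is the sum of a $(1-\alpha)$-weighted quadratic and the strong-convexity lower bound of $g$ at $\tilde x_t$, a direct induction gives $\phi_t(x) \le (1-\alpha)^t\phi_0(x) + (1-(1-\alpha)^t) g(x)$ for every $x$. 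I would also read off from the recursion a closed form for $v_{t+1}$ in terms of $v_t$ and $\tilde x_t$ (a convex combination), which pins down the algebra used below.

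\textbf{Step 2: Verify $\phi_t^* \ge g(x_t)$ by induction.} This is the heart of the argument and the step I expect to be the main obstacle, because it is where the particular AGD update and the particular choice of $\theta = (\sqrt\kappa-1)/(\sqrt\kappa+1)$ must be shown to be consistent. Expanding the recursion yields
\[
\phi_{t+1}^* = (1-\alpha)\phi_t^* + \alpha g(\tilde x_t) - \frac{\alpha^2}{2m}\|\nabla g(\tilde x_t)\|^2 + \alpha(1-\alpha)\nabla g(\tilde x_t)^{\!\top}(v_t - \tilde x_t).
\]
Using the inductive hypothesis and convexity $\phi_t^* \ge g(x_t) \ge g(\tilde x_t) + \nabla g(\tilde x_t)^{\!\top}(x_t - \tilde x_t)$, plus the $l$-smoothness descent inequality $g(\tilde x_t) - \frac{1}{2l}\|\nabla g(\tilde x_t)\|^2 \ge g(x_{t+1})$ (since $x_{t+1} = \tilde x_t - (1/l)\nabla g(\tilde x_t)$), the claim reduces to choosing $\tilde x_t$ so that the inner-product cross terms cancel. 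A direct computation shows this forces the extrapolation $\tilde x_t = x_t + \theta(x_t - x_{t-1})$ with precisely $\theta = (\sqrt\kappa-1)/(\sqrt\kappa+1)$ after eliminating $v_t$, which matches the algorithm.

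\textbf{Step 3: Convert to the claimed distance bound.} Evaluating the estimate-sequence inequality at $x = x^*$ gives
\[
g(x_T) - g(x^*) \le (1-\alpha)^T\bigl(\phi_0(x^*) - g(x^*)\bigr).
\]
Since $\nabla g(x^*) = 0$ and $g$ is $l$-smooth, $g(x_0) - g(x^*) \le \tfrac{l}{2}\|x_0-x^*\|^2$, so $\phi_0(x^*) - g(x^*) \le \tfrac{l+m}{2}\|x_0-x^*\|^2$. By $m$-strong convexity, $\tfrac{m}{2}\|x_T-x^*\|^2 \le g(x_T) - g(x^*)$. Combining gives
\[
\|x_T - x^*\|^2 \le \frac{l+m}{m}\Bigl(1 - \frac{1}{\sqrt\kappa}\Bigr)^T \|x_0-x^*\|^2 = (\kappa+1)\Bigl(1-\frac{1}{\sqrt\kappa}\Bigr)^T \|x_0-x^*\|^2,
\]
which is the claimed inequality. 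The only delicate part is Step 2; Steps 1 and 3 are essentially bookkeeping once the recursion and the convex-combination form of $v_t$ are written down.
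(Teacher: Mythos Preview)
Your proposal is correct and follows exactly the estimate-sequence argument of Nesterov's Theorem~2.2.3, which is what the paper cites (the paper does not supply its own proof of this lemma). Your Step~3 conversion via $g(x_0)-g(x^*)\le \tfrac{l}{2}\|x_0-x^*\|^2$ and $\tfrac{m}{2}\|x_T-x^*\|^2\le g(x_T)-g(x^*)$ is precisely how the $(\kappa+1)$ factor arises, and your Step~2 sketch matches Nesterov's induction (you drop the nonnegative $\tfrac{m}{2}\|\tilde x_t-v_t\|^2$ term from the exact $\phi_{t+1}^*$ recursion, which is harmless for the lower bound you need).
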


\section{Proof of Theorem 1}
\label{append:brproof}
We will start by giving a precise statement of Algorithm~1.
\addtocounter{algorithm}{-6}
\begin{algorithm}[ht]
	\caption{Alternating Best Response (ABR)}
	\begin{algorithmic}
		\Require $g(\cdot,\cdot)$, Initial point $\z_0=[\x_0;\y_0]$, precision $\epsilon$, parameters $m_\x$, $m_\y$, $L_\x$, $L_\y$
		\State $\kappa_\x:=L_\x/m_\x$, $\kappa_\y:=L_\y/m_\y$, $T\gets \left\lceil \log_2\left(\frac{4\sqrt{\kappa_\x+\kappa_\y}}{\epsilon}\right)\right\rceil$
		\For{$t=0,\cdots,T$}
		\State $\x_{t+1}\gets $ AGD($g(\cdot,\y_t),\x_t,2\sqrt{\kappa_\x}\ln(24\kappa_\x))$
		\State $\y_{t+1}\gets $ AGD($-g(\x_{t+1},\cdot),\y_t,2\sqrt{\kappa_\y}\ln(24\kappa_\y))$
		\EndFor
	\end{algorithmic}
\end{algorithm}

We proceed to prove Theorem~\ref{thm:br}.

\begin{manualtheorem}{1}
	If $g\in\mathcal{F}(m_\x,m_\y,L_\x,L_{\x\y},L_\y)$ and $L_{\x\y}<\frac{1}{2}\sqrt{m_\x m_\y}$, Alternating Best Response returns $(\x_T,\y_T)$ such that 
	$$\Vert \x_T-\x^*\Vert+\Vert \y_T-\y^*\Vert\le \epsilon\left(\Vert \x_0-\x^*\Vert+\Vert \y_0-\y^*\Vert\right),$$
	using ($\kappa_\x=L_\x/m_\x$, $\kappa_\y=L_\y/m_\y$)
	$$O\left(\left(\sqrt{\kappa_\x+\kappa_\y}\right)\cdot \ln\left(\kappa_\x\kappa_\y\right)\ln\left(\frac{\kappa_\x\kappa_\y}{\epsilon}\right)\right).$$
	gradient evaluations.
\end{manualtheorem}
\begin{proof}
	Define $\tilde{\x}_{t+1}:=\argmin_\x f(\x,\y_t)$. Let us define $\y^*(\x):=\argmax_\y f(\x,\y)$, $\x^*(\y):=\argmin_\x f(\x,\y)$ and $\phi(\x):=\max_\y f(\x,\y)$. Also define $\tilde\x_{t+1}:=\argmin_\x f(\x,\y^*(\x_t))$ and $\hat\x_{t+1}:=\argmin_\x f(\x,\y_t)$. 
	
	The basic idea is the following. Because $\y^*(\cdot)$ is $L_{\x\y}/m_\y$-Lipschitz and $\x^*(\cdot)$ is $L_{\x\y}/m_\x$-Lipschitz (Fact~\ref{prop:facts}),
	\begin{align*}
	\Vert \x^*(\y_t)-\x^*\Vert &= \Vert \x^*(\y_t)-\x^*(\y^*)\Vert \le \frac{L_{\x\y}}{m_\x} \Vert \y_t-\y^*\Vert,\\
	\Vert \y^*(\x_{t+1})-\y^*\Vert &= \Vert \y^*(\x_{t+1})-\y^*(\x^*)\Vert \le \frac{L_{\x\y}}{m_\y} \Vert \x_{t+1}-\x^*\Vert.
	\end{align*}
	By a standard analysis of accelerated gradient descent (Lemma~\ref{thm:nesterov}), since $\hat\x_{t+1}=\x^*(\y_t)$ is the minimum of $f(\cdot,\y_t)$ and $\x_t$ is the initial point,
	\begin{align*}
	    	\Vert \x_{t+1}-\hat\x_{t+1}\Vert^2&\le (\kappa_\x+1)\Vert \x_t-\hat\x_{t+1}\Vert^2\cdot\left(1-\frac{1}{\sqrt{\kappa_\x}}\right)^{2\sqrt{\kappa_\x}\ln(24\kappa_\x)}\\
	&\le \Vert \x_t-\hat\x_{t+1}\Vert^2\cdot (\kappa_\x+1)\cdot \exp\left\{-2\ln(24\kappa_\x)\right\}\\
	&\le \frac{1}{256}\Vert \x_t-\hat\x_{t+1}\Vert^2.
	\end{align*}
	That is,
	\begin{align*}
	    \Vert \x_{t+1}-\x^*(\y_t)\Vert\le \frac{1}{16}\Vert \x_t-\x^*(\y_t)\Vert \le \frac{1}{16}\left(\Vert \x_t-\x^*\Vert+\Vert \x^*(\y_t)-\x^*\Vert\right).
	\end{align*}
	Thus
	\begin{equation}
	\label{equ:xxcontract}
	    \Vert \x_{t+1}-\x^*\Vert \le \Vert \x_{t+1}-\x^*(\y_t)\Vert + \Vert \x^*(\y_t)-\x^*\Vert\le  \frac{17}{16}\cdot\frac{L_{\x\y}}{m_\x}\Vert \y_t-\y^*\Vert + \frac{1}{16}\Vert \x_t-\x^*\Vert.
	\end{equation}
	Similarly,
	\begin{equation*}
	    \Vert \y_{t+1}-\y^*(\x_{t+1})\Vert \le \frac{1}{16}\Vert \y_t-\y^*(\x_{t+1})\Vert \le \frac{1}{16}\left(\Vert \y_t-\y^*\Vert+\Vert \y^*(\x_{t+1})-\y^*\Vert\right).
	\end{equation*}
	Thus
	\begin{align}
	    \Vert \y_{t+1}-\y^*\Vert &\le \Vert \y_{t+1}-\y^*(\x_{t+1})\Vert + \Vert \y^*(\x_{t+1})-\y^*\Vert \nonumber \\
	    &\le \frac{17}{16}\cdot\frac{L_{\x\y}}{m_\y}\Vert \x_{t+1}-\x^*\Vert + \frac{1}{16}\Vert \y_t-\y^*\Vert \nonumber \\
	    &\le \left(\frac{17^2}{16^2}\cdot\frac{L_{\x\y}^2}{m_\x m_\y}+\frac{1}{16}\right)\Vert \y_t-\y^*\Vert + \frac{17L_{\x\y}}{256 m_\y}\Vert \x_t-\x^*\Vert \nonumber\\
	    &\le 0.35\Vert \y_t-\y^*\Vert + \frac{17L_{\x\y}}{256 m_\y}\Vert \x_t-\x^*\Vert.\label{equ:yycontract}
	\end{align}
	Define $C:=4\sqrt{m_\y/m_\x}$. By adding (\ref{equ:xxcontract}) and $C$ times (\ref{equ:yycontract}), one gets
\begin{align*}
\Vert \x_{t+1}-\x^*\Vert + C\Vert \y_{t+1}-\y^*\Vert &\le \left(\frac{1}{16}+\frac{17L_{\x\y}}{64\sqrt{m_\x m_\y}}\right)\Vert \x_t-\x^*\Vert + \left(0.35C+\frac{17}{16}\cdot\frac{L_{\x\y}}{m_\x}\right)\Vert \y_t-\y^*\Vert\\
&\le \frac{1}{2}\Vert \x_t-\x^*\Vert + \left(0.35+\frac{17L_{\x\y}}{64\sqrt{m_\x m_\y}}\right)C\Vert \y_t-\y^*\Vert\\
&\le \frac{1}{2}\left(\Vert\x_t-\x^*\Vert+C\Vert\y_t-\y^*\Vert\right).
\end{align*}
	It follows that
	\begin{align*}
	 \Vert \x_{T}-\x^*\Vert + C\Vert \y_{T}-\y^*\Vert \le 2^{-T}\left(\Vert \x_0-\x^*\Vert + C\Vert\y_0-\y^*\Vert\right).
	\end{align*}
If $C\ge 1$, then
\begin{align*}
    \Vert \x_T-\x^*\Vert + \Vert \y_T-\y^*\Vert \le 4\sqrt{\frac{m_\y}{m_\x}}\cdot 2^{-T}\cdot\left(\Vert \x_0-\x^*\Vert+\Vert\y_0-\y^*\Vert\right).
\end{align*}
On the other hand, if $C<1$, then
\begin{align*}
    \Vert \x_T-\x^*\Vert + \Vert \y_T-\y^*\Vert \le \frac{2^{-T}}{C}\left(\Vert \x_0-\x^*\Vert+\Vert\y_0-\y^*\Vert\right)=\sqrt{\frac{m_\x}{m_\y}}\cdot 2^{-T-1}\left(\Vert \x_0-\x^*\Vert+\Vert\y_0-\y^*\Vert\right).
\end{align*}
Since $\max\{m_\x/m_\y, m_\y/m_\x\} \le L_\x / \min\{m_\x,m_\y\}$,
\begin{equation}
    \Vert \x_T-\x^*\Vert + \Vert \y_T-\y^*\Vert \le 4\sqrt{\frac{L_\x}{\min\{m_\x,m_\y\}}}\cdot 2^{-T}\left(\Vert \x_0-\x^*\Vert+\Vert\y_0-\y^*\Vert\right).
\end{equation}

	The theorem follows from this inequality.
\end{proof}

\section{Proof of Theorem 2}
\label{append:thm2proof}
\begin{manualtheorem}{2}
Assume that $M\ge 20\kappa\sqrt{2\kappa+\frac{L}{m_\x}+\frac{L_{\x\y}^2}{m_\x m_\y}}\left(1+\frac{L}{m_\y}\right).$
The number of iterations needed by Algorithm~2 to produce $(\x_T,\y_T)$ such that 
$$\Vert \x_T-\x^*\Vert+\Vert \y_T-\y^*\Vert \le \epsilon\left(\Vert \x_0-\x^*\Vert+\Vert \y_0-\y^*\Vert \right)$$
is at most ($\kappa=\beta/m_\x$)
\begin{equation}
\hat T=8\sqrt{\kappa}\cdot\ln\left(\frac{28\kappa^2 L}{m_\y}\sqrt{\frac{L^2}{m_\x m_\y}}\cdot\frac{1}{\epsilon}\right).
\end{equation}
\end{manualtheorem}

Before proving the theorem, we would first state the inexact accelerated proximal point algorithm~\cite{lin2020near}, which is the basis of Algorithm 2.

\begin{algorithm}
\renewcommand{\thealgorithm}{II}
	\caption{Inexact Accelerated Proximal Point Algorithm (Inexact APPA)}
	\label{algo:inexactappa}
	\begin{algorithmic}
		\Require Initial point $\x_0$, proximal parameter $\beta$, strongly convex module $m$
		\State $\hat\x_0\gets \x_0$, $\kappa\gets \beta/m$, $\theta\gets\frac{2\sqrt{\kappa}-1}{2\sqrt{\kappa}+1}$, $\tau\gets \frac{1}{2\sqrt{\kappa}+4\kappa}$
		\For{$t=1,\cdots,T$}
			\State Find $\x_t$ such that $g(\x_t)+\beta\Vert\x_t-\hat\x_{t-1}\Vert^2\le \min_\x \{g(\x)+\beta\Vert\x-\hat\x_{t-1}\Vert^2\}+\delta_t$
			\State $\hat\x_t\gets \x_t+\theta(\x_t-\x_{t-1})+\tau(\x_t-\hat\x_{t-1})$
		\EndFor
	\end{algorithmic}
\end{algorithm}

The following two lemmas about the inexact APPA algorithm follow from the proof of Theorem 4.1~\cite{lin2020near} in an earlier version of the paper. Here we provide their proofs for completeness.

\begin{lemma}
\label{lem:b1}
Suppose that $\{(\x_t,\hat\x_t)\}_{t\ge 0}$ are generated by running the inexact APPA algorithm on $g(\cdot)$. Then $\forall t\ge1, \forall\x$,
$$g(\x)\ge g(\x_t)-2\beta(\x-\x_t)^T(\x_t-\hat\x_{t-1})+\frac{m}{4}\Vert\x-\x_t\Vert^2-7\kappa\delta_t.$$
\end{lemma}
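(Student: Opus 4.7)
The plan is to derive everything from strong convexity of the proximal objective $G_t(\x) := g(\x) + \beta\Vert\x - \hat\x_{t-1}\Vert^2$. Since $g$ is $m$-strongly convex and the quadratic penalty is $2\beta$-strongly convex, $G_t$ is $(m+2\beta)$-strongly convex. Writing $\x_t^* := \argmin_\x G_t(\x)$, the inexactness condition $G_t(\x_t) \le G_t(\x_t^*) + \delta_t$ together with strong convexity immediately gives the key closeness estimate $\Vert \x_t - \x_t^*\Vert^2 \le 2\delta_t/(m+2\beta)$, which controls the slack introduced by approximate minimization.

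I would then apply strong convexity of $G_t$ at its minimizer $\x_t^*$ and substitute the inexactness bound to get, for arbitrary $\x$, the inequality $G_t(\x) \ge G_t(\x_t) - \delta_t + \tfrac{m+2\beta}{2}\Vert \x - \x_t^*\Vert^2$. Expanding $G_t(\x) - G_t(\x_t)$ with the elementary identity
\[
\Vert \x - \hat\x_{t-1}\Vert^2 - \Vert \x_t - \hat\x_{t-1}\Vert^2 \;=\; \Vert \x - \x_t\Vert^2 + 2(\x - \x_t)^T(\x_t - \hat\x_{t-1})
\]
yields
\[
g(\x) \ge g(\x_t) - 2\beta(\x - \x_t)^T(\x_t - \hat\x_{t-1}) - \beta\Vert \x - \x_t\Vert^2 - \delta_t + \tfrac{m+2\beta}{2}\Vert \x - \x_t^*\Vert^2.
\]
This already reproduces the desired cross term, but leaves a $-\beta\Vert \x - \x_t\Vert^2$ residual and expresses the strong convexity term in $\x_t^*$ rather than $\x_t$.

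The final step converts $\Vert \x - \x_t^*\Vert^2$ into $\Vert \x - \x_t\Vert^2$ using the Young-type inequality $\Vert \x - \x_t^*\Vert^2 \ge \tfrac{1}{1+\lambda}\Vert \x - \x_t\Vert^2 - \tfrac{1}{\lambda}\Vert \x_t - \x_t^*\Vert^2$ for any $\lambda > 0$, and invokes the closeness estimate to bound the last term by $\tfrac{2\delta_t}{\lambda(m+2\beta)}$. The main obstacle is picking $\lambda$ so that the residual $-\beta\Vert \x - \x_t\Vert^2$ is absorbed with coefficient at least $m/4$ remaining, while the cumulative $\delta_t$ overhead stays bounded by $7\kappa\,\delta_t$. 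A concrete balance is $\lambda = 1/(6\kappa)$: the coefficient of $\Vert \x - \x_t\Vert^2$ then simplifies to $\tfrac{m+2\beta}{2(1+\lambda)} - \beta = \tfrac{2\beta}{6\kappa+1} \ge \tfrac{m}{4}$ (valid for $\kappa \ge 1/2$), and the two $\delta_t$ contributions sum to $(1 + 6\kappa)\delta_t \le 7\kappa\,\delta_t$ (valid for $\kappa \ge 1$, the regime of interest for proximal acceleration). Combining these pieces gives the claimed inequality.
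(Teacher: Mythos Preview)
Your proof is correct and follows essentially the same approach as the paper: both use strong convexity of $G_t$ at its minimizer $\x_t^*$, the closeness estimate $\Vert\x_t-\x_t^*\Vert^2\le 2\delta_t/(m+2\beta)$, and a Young/Cauchy--Schwarz inequality to replace $\Vert\x-\x_t^*\Vert^2$ by $\Vert\x-\x_t\Vert^2$. The only cosmetic difference is packaging: the paper first expands $(\beta+\tfrac{m}{2})\Vert\x-\x_t^*\Vert^2-\beta\Vert\x-\x_t\Vert^2$ algebraically and then applies Cauchy--Schwarz to the cross term $(\x-\x_t)^T(\x_t-\x_t^*)$, whereas you bound $\Vert\x-\x_t^*\Vert^2$ from below directly via Young with parameter $\lambda=1/(6\kappa)$; both choices land on the same $\tfrac{m}{4}$ coefficient and $O(\kappa)\delta_t$ slack.
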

\begin{proof}[Proof of Lemma~\ref{lem:b1}]
By definition
$$g(\x_t)+\beta\Vert\x_t-\hat\x_{t-1}\Vert\le \min_\x\{g(\x)+\beta\Vert\x-\hat\x_{t-1}\Vert^2\}+\delta_t.$$
Define $\x^*_t:=\argmin_{\x}\{g(\x)+\beta\Vert\x-\hat\x_{t-1}\Vert^2\}$. By the $m$-strong convexity of $g(\cdot)$, we have $\forall \x$,
$$g(\x)+\beta\Vert\x-\hat\x_{t-1}\Vert^2\ge g(\x^*_t)+\beta\Vert\x^*_t-\hat\x_{t-1}\Vert^2+\left(\frac{m}{2}+\beta\right)\Vert\x-\x_t^*\Vert^2.$$
Equivalently,
\begin{align*}
g(\x)&\ge g(\x_t)+\beta\Vert\x_t-\hat\x_{t-1}\Vert^2-\beta\Vert\x-\hat\x_{t-1}\Vert^2+\left(\beta+\frac{m}{2}\right)\Vert\x-\x^*_t\Vert^2-\delta_t\\
&= g(\x_t)-2\beta(\x-\x_t)^T(\x_t-\hat\x_{t-1})-\beta\Vert\x-\x_t\Vert^2+\left(\beta+\frac{m}{2}\right)\Vert\x-\x^*_t\Vert^2-\delta_t.
\end{align*}
On the other hand, we have
\begin{align*}
\left(\beta+\frac{m}{2}\right)\Vert\x-\x^*_t\Vert^2-\beta\Vert\x-\x_t\Vert^2 = \frac{m\Vert\x-\x_t\Vert^2}{2}+(2\beta+m)(\x-\x_t)^T(\x_t-\x^*_t)+(\beta+\frac{m}{2})\Vert\x_t-\x^*_t\Vert^2.
\end{align*}
By Cauchy-Schwarz Inequality,
\begin{equation*}
(\x-\x_t)^T(\x_t-\x^*_t)\ge -\frac{m\Vert\x-\x_t\Vert^2}{4(2\beta+m)}-(1+2\kappa)\Vert\x_t-\x^*_t\Vert^2.
\end{equation*}
Putting the pieces together yields
\begin{align*}
g(\x)&\ge g(\x_t)-2\beta(\x-\x_t)^T(\x_t-\hat\x_{t-1})+\frac{m\Vert\x-\x_t\Vert^2}{2}+(\beta+\frac{m}{2})\Vert\x_t-\x^*_t\Vert^2\\
&\quad -\frac{m\Vert\x-\x_t\Vert^2}{4}-(1+2\kappa)(2\beta+m)\Vert\x_t-\x^*_t\Vert^2-\delta_t\\
&=g(\x_t) - 2\beta(\x-\x_t)^T(\x_t-\hat\x_{t-1}) + \frac{m\Vert\x-\x_t\Vert^2}{4} - (2\beta+m)(\frac{1}{2}+2\kappa)\Vert\x_t-\x^*_t\Vert^2-\delta_t.
\end{align*}
Also, since $g(\x)+\beta\Vert\x-\hat\x_{t-1}\Vert^2$ is $(2\beta+m)$-strongly convex, 
\begin{equation*}
	\Vert \x_t-\x^*_t\Vert^2 \le \frac{2}{2\beta+m}\left(g(\x_t)+\beta\Vert\x_t-\hat\x_{t-1}\Vert^2-\min_{\x}\{g(\x)+\beta\Vert\x-\hat\x_{t-1}\Vert^2\}\right)\le \frac{2\delta_t}{2\beta+m}.
\end{equation*}
Thus
\begin{align*}
g(\x)\ge g(\x_t)- 2\beta(\x-\x_t)^T(\x_t-\hat\x_{t-1}) + \frac{m\Vert\x-\x_t\Vert^2}{4} - 7\kappa\delta_t.
\end{align*}
\end{proof}

\begin{lemma}
	\label{lem:appamain}
	Suppose that $\{\x_t\}_{t\ge 0}$ is generated by running the inexact APPA algorithm on $g(\cdot)$. There exists a sequence $\{\Lambda_t\}_{t\ge 0}$ such that
	\begin{enumerate}
		\item $\Lambda_t\ge g(\x_t)$
		\item $\Lambda_0-g(\x^*)\le 2(g(\x_0)-g(\x^*))$
		\item $\Lambda_{t+1}-g(\x^*)\le \left(1-\frac{1}{2\sqrt{\kappa}}\right)\left(\Lambda_t-g(\x^*)\right)+11\kappa\delta_{t+1}$
	\end{enumerate}
\end{lemma}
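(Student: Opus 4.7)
The plan is to prove Lemma~\ref{lem:appamain} by a Lyapunov argument derived from Nesterov's estimating-sequence framework, starting from the quadratic lower bound
$$\ell_t(\x) \;:=\; g(\x_t)-2\beta(\x-\x_t)^{\top}(\x_t-\hat\x_{t-1})+\tfrac{m}{4}\|\x-\x_t\|^2-7\kappa\delta_t,\qquad g(\x)\ge\ell_t(\x),$$
supplied by Lemma~\ref{lem:b1}. I would look for $\Lambda_t$ in the Lyapunov form
$$\Lambda_t \;=\; g(\x_t)+\tfrac{\gamma}{2}\|\mathbf v_t-\x^*\|^2,$$
where $\mathbf v_t$ is an auxiliary sequence built from $\{\x_s,\hat\x_s\}_{s\le t}$ and $\gamma$ is a constant of order $m$. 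Property~1 is then immediate, and Property~2 reduces to the strong-convexity bound $\tfrac{m}{2}\|\x_0-\x^*\|^2\le g(\x_0)-g(\x^*)$ once I take $\mathbf v_0=\x_0$ and $\gamma=m$.

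The heart of the proof is Property~3. I would carry out a single-step calculation by writing
$$\Lambda_{t+1}-(1-\alpha)\Lambda_t \;=\; g(\x_{t+1})-(1-\alpha)g(\x_t)+\tfrac{\gamma}{2}\|\mathbf v_{t+1}-\x^*\|^2-(1-\alpha)\tfrac{\gamma}{2}\|\mathbf v_t-\x^*\|^2$$
with $\alpha=1/(2\sqrt\kappa)$, then applying Lemma~\ref{lem:b1} at $(\x_{t+1},\x^*)$ and at $(\x_{t+1},\mathbf v_t)$ to re-express $g(\x^*)$ and $g(\x_t)$, and expanding the squared norms so that the $\x^*$-linear cross terms can be cancelled. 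The two uses of Lemma~\ref{lem:b1} each contribute a slack of $7\kappa\delta_{t+1}$, and an additional $\delta_{t+1}$ enters from the $\delta_{t+1}$-inexact optimality of $\x_{t+1}$; bounding these three contributions together gives the $11\kappa\delta_{t+1}$ term. For the $\x^*$-linear cross terms to cancel, $\mathbf v_{t+1}$ must be a specific affine combination of $\x_{t+1}$, $\x_t$ and $\mathbf v_t$, and matching the coefficients forces
$\mathbf v_{t+1}=\x_{t+1}+\theta(\x_{t+1}-\x_t)+\tau(\x_{t+1}-\mathbf v_t)$
with exactly $\theta=(2\sqrt\kappa-1)/(2\sqrt\kappa+1)$ and $\tau=1/(2\sqrt\kappa+4\kappa)$. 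This is precisely the update of $\hat\x_t$ in Algorithm~\ref{algo:inexactappa}, so the identification $\mathbf v_t\equiv\hat\x_t$ is self-consistent and the recursion closes.

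The main obstacle is the simultaneous algebraic matching: $\gamma$, $\alpha$, $\theta$ and $\tau$ have to be chosen so that (i) the $\x^*$-linear terms cancel exactly, (ii) the coefficient on $\|\mathbf v_t-\x^*\|^2$ contracts by the desired factor $1-\alpha=1-1/(2\sqrt\kappa)$, (iii) the recursion for $\mathbf v_{t+1}$ agrees with the two-momentum rule of the algorithm, and (iv) the three $\delta_{t+1}$-contributions combine into a coefficient of at most $11\kappa$ rather than something growing faster in $\kappa$. The particular values of $\theta$ and $\tau$ in Algorithm~\ref{algo:inexactappa} are exactly those forced by this matching, which is why the proof goes through for this specific two-momentum update and not for generic accelerated schemes.
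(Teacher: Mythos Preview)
Your overall strategy---build a Lyapunov/estimating sequence from the quadratic lower model $\ell_t$ supplied by Lemma~\ref{lem:b1}, and check that it contracts with rate $1-1/(2\sqrt\kappa)$ up to a $O(\kappa\delta_{t+1})$ error---is exactly the right one, and it is what the paper does. But there is a genuine gap in the identification of the auxiliary sequence.

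You take $\Lambda_t=g(\x_t)+\tfrac{\gamma}{2}\|\mathbf v_t-\x^*\|^2$ and claim that the cancellation of the $\x^*$-linear terms forces $\mathbf v_{t+1}=\x_{t+1}+\theta(\x_{t+1}-\x_t)+\tau(\x_{t+1}-\mathbf v_t)$, i.e.\ $\mathbf v_t\equiv\hat\x_t$. This is not correct. In any Nesterov-type analysis the Lyapunov potential uses the \emph{mirror} sequence (the paper's $\w_t:=\argmin_\x\Lambda_t(\x)$), not the lookahead point $\hat\x_t$. The recursion that the single-step algebra actually forces is the paper's
\[
\w_{t+1}=\Bigl(1-\tfrac{1}{2\sqrt\kappa}\Bigr)\w_t+2\sqrt\kappa(\x_{t+1}-\hat\x_t)+\tfrac{\x_{t+1}}{2\sqrt\kappa},
\]
which is \emph{not} the $\hat\x_t$ update. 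The two sequences are related by the coupling $\hat\x_t=\dfrac{2\sqrt\kappa\,\x_t+\w_t}{2\sqrt\kappa+1}$ (equivalently, $(\x_t-\hat\x_t)+\tfrac{1}{2\sqrt\kappa}(\w_t-\hat\x_t)=0$), and it is only after substituting this coupling into the $\w_t$ recursion and eliminating $\w_t$ that one recovers the two-momentum rule for $\hat\x_t$. If you try to run the Lyapunov step directly with $\mathbf v_t=\hat\x_t$, the $\x^*$-linear terms will not cancel with the stated $\theta,\tau$, and neither Property~1 nor Property~3 will close.

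Concretely, the paper defines a sequence of quadratic functions $\Lambda_t(\x)=\Lambda_t^*+\tfrac{m}{4}\|\x-\w_t\|^2$ via the estimating-sequence recursion, sets $\Lambda_t:=\Lambda_t(\x^*)$, and proves $\Lambda_t^*\ge g(\x_t)$ by induction using exactly the coupling identity above; Property~3 then falls out of the recursion for $\Lambda_t(\x)$ plugged at $\x=\x^*$ together with Lemma~\ref{lem:b1}. Your plan becomes correct if you replace $\mathbf v_t$ by this $\w_t$ and add the coupling step linking $\w_t$ to $\hat\x_t$; without that, the argument does not go through.
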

\begin{proof}[Proof of Lemma~\ref{lem:appamain}]
Let us slightly abuse notation, and define a sequence of functions $\{\Lambda(\x)\}_{t\ge 0}$ first:
\begin{align*}
\Lambda_0(\x)&:=g(\x_0)+\frac{m\Vert\x-\x_0\Vert^2}{4},\\
\Lambda_{t+1}(\x)&:=\frac{1}{2\sqrt{\kappa}}\left(g(\x_{t+1})+2\beta(\hat\x_t-\x_{t+1})^T(\x-\x_{t+1})+\frac{m\Vert\x-\x_{t+1}\Vert^2}{4}+14\kappa^{3/2}\delta_{t+1}\right)+\left(1-\frac{1}{2\sqrt{\kappa}}\right)\Lambda_t(\x).
\end{align*}
The sequence $\{\Lambda_t\}_{t\ge 0}$ in the lemma is then defined as $\Lambda_t:=\Lambda_t(\x^*)$. Note that later we do not need to make use of the explicit definition of $\Lambda_t$.

From the definition, Property 2 is straightforward, as
\begin{align*}
\Lambda_0-g(\x^*) = \frac{m\Vert\x^*-\x_0\Vert^2}{4}+g(\x_0)-g(\x^*) \le \frac{1}{2}(g(\x_0)-g(\x^*))+g(\x_0)-g(\x^*).
\end{align*}

Now, let us show $\Lambda_t\ge \min_{\x}\Lambda_t(\x)\ge g(\x_t)$ using induction. Let $\w_t:=\argmin_{\x}\Lambda_t(\x)$ and $\Lambda_t^*:=\min_\x \Lambda_t(\x)$. Observe that $\Lambda_t(\x)$ is always a quadratic function of the form $\Lambda_t(\x)=\Lambda^*_t+\frac{m}{4}\Vert\x-\w_t\Vert^2$. Then the following recursions hold for $\w_t$ and $\Lambda_t^*$:
\begin{align}
\w_{t+1}&=(1-\frac{1}{2\sqrt{\kappa}})\w_t+2\sqrt{\kappa}(\x_{t+1}-\hat\x_t)+\frac{\x_{t+1}}{2\sqrt{\kappa}}, \nonumber \\ 
\Lambda_{t+1}^* &= \left(1-\frac{1}{2\sqrt{\kappa}}\right)\Lambda^*_t+\frac{1}{2\sqrt{\kappa}}\left(g(\x_{t+1})+14\kappa^{3/2}\delta_{t+1}\right) \nonumber \\
&\quad + \frac{1}{2\sqrt{\kappa}}\left(1-\frac{1}{2\sqrt{\kappa}}\right)\left(\frac{m\Vert\x_{t+1}-\w_t\Vert^2}{4}+2\beta(\hat\x_t-\x_{t+1})^T(\w_t-\x_{t+1})\right). \nonumber
\end{align}
The recursion for $\w_{t+1}$ can be derived by differentiating both sides in the recusion of $\Lambda_t(\x)$, while the recursion for $\Lambda^*_{t+1}$ can be derived by plugging the recursion for $\w_{t+1}$ into $\Lambda^*_{t+1}=\Lambda_{t+1}(\w_{t+1})$.

Now, assume that $\Lambda_t^*\ge g(\x_t)$ for $t\le T-1$. Then
\begin{align}
\Lambda^*_T&\ge \left(1-\frac{1}{2\sqrt{\kappa}}\right)g(\x_{T-1})+\frac{1}{2\sqrt{\kappa}}\left(g(\x_{T})+14\kappa^{3/2}\delta_{T}\right) \nonumber \\
&\quad + \frac{1}{2\sqrt{\kappa}}\left(1-\frac{1}{2\sqrt{\kappa}}\right)\left(\frac{m\Vert\x_{T}-\w_{T-1}\Vert^2}{4}+2\beta(\hat\x_{T-1}-\x_{T})^T(\w_{T-1}-\x_{T})\right).
\label{equ:step1}
\end{align}

Applying Lemma~\ref{lem:b1} with $\x=\x_{T-1}$ yields
\begin{equation}
\label{equ:step2}
g(\x_{T-1}) \ge g(\x_T)+2\beta(\x_{T-1}-\x_T)^T(\hat\x_{T-1}-\x_T)+\frac{m\Vert\x_{T-1}-\x_T\Vert^2}{4}-7\kappa\delta_T.
\end{equation}
 Summing (\ref{equ:step1}) and (\ref{equ:step2}) gives
 \begin{align*}
 \Lambda^*_T&\ge g(\x_T)+2\beta(1-\frac{1}{2\sqrt{\kappa}})(\hat\x_{T-1}-\x_T)^T\left[(\x_{T-1}-\x_T)+\frac{\w_{T-1}-\x_T}{2\sqrt{\kappa}}\right]\\
 &\ge g(\x_T)  +2\beta(1-\frac{1}{2\sqrt{\kappa}})(\hat\x_{T-1}-\x_T)^T\left[(\x_{T-1}-\hat\x_{T-1})+\frac{\w_{T-1}-\hat\x_{T-1}}{2\sqrt{\kappa}}\right].
 \end{align*}
The second inequality follows from
$$(\hat\x_{T-1}-\x_T)^T\left(\x_T-\hat\x_{T-1}+\frac{\x_T-\hat\x_{T-1}}{2\sqrt{\kappa}}\right)\le 0.$$
By the update formula
$$\hat\x_{t+1}=\x_{t+1}+\frac{2\sqrt{\kappa}-1}{2\sqrt{\kappa}+1}(\x_{t+1}-\x_t)+\frac{1}{2\sqrt{\kappa}+4\kappa}(\x_{t+1}-\hat\x_t)$$ 
and the recursive rule for $\w_t$, we get
\begin{align*}
&(\x_{t+1}-\hat\x_{t+1})+\frac{1}{2\sqrt{\kappa}}(\w_{t+1}-\hat\x_{t+1})\\
=&\x_{t+1}+\frac{1}{2\sqrt{\kappa}}\left[\left(1-\frac{1}{2\sqrt{\kappa}}\w_t+2\sqrt{\kappa}(\x_{t+1}-\hat\x_t)+\frac{\x_{t+1}}{2\sqrt{\kappa}}\right)\right]\\
&\quad -\left(1+\frac{2}{\sqrt{\kappa}}\right)\left(\x_{t+1}+\frac{2\sqrt{\kappa}-1}{2\sqrt{\kappa}+1}(\x_{t+1}-\x_t)+\frac{1}{2\sqrt{\kappa}+4\kappa}(\x_{t+1}-\hat\x_t)\right)\\
=&\left(1-\frac{2}{\sqrt{\kappa}}\right)\left[\x_t+\frac{1}{2\sqrt{\kappa}}\w_t-\left(1+\frac{1}{2\sqrt{\kappa}}\hat\x_t\right)\right].
\end{align*}
Meanwhile, when $t=0$, $\x_t=\hat\x_t=\w_t=\x_0$. Thus, by induction, we have for any $t$, $(\x_t-\hat\x_t)+\frac{1}{2\sqrt{\kappa}}(\w_t-\hat\x_t)=0.$ As a result $\Lambda^*_T\ge g(\x_T)$. Again, by induction, this holds for all $T$. This proves Property 1 in the lemma.

Let us now focus on the final property. Combining Lemma~\ref{lem:b1} and the recursion for $\Lambda_t(\x)$,
\begin{align*}
\Lambda_{t+1}=\Lambda_{t+1}(\x^*)\le \left(1-\frac{1}{2\sqrt{\kappa}}\right)\Lambda_t+\frac{1}{2\sqrt{\kappa}}\left(g(\x^*)+14\kappa^{3/2}\delta_{t+1}+7\kappa\delta_{t+1}\right).
\end{align*}
It follows that
\begin{align*}
\Lambda_{t+1}-g(\x^*)\le \left(1-\frac{1}{2\sqrt{\kappa}}\right)(\Lambda_t-g(\x^*))+11\kappa\delta_{t+1}.
\end{align*}
This is exactly Property 3.
\end{proof}

Now we are ready to prove Theorem~\ref{thm:appa}. 

\begin{proof}
	Define $\phi(\x):=\max_\y f(\x,\y)$ and $\hat{L}:=L+L_{\x\y}^2/m_\y$. Then $\phi(\x)$ is $m_\x$-strongly convex and $\hat{L}$-smooth. Observe that
	\begin{align*}
	\x^*_t&=\argmin_\x\left[\phi(\x)+\beta\Vert\x-\hat\x_{t-1}\Vert^2\right],\\
	\y^*_t&=\argmax_\y\left[f(\x^*_t,\y)\right].
	\end{align*}
	Thus Algorithm 2 is an instance of the inexact APPA algorithm on $\phi(\x)$ with proximal parameter $\beta$ and strongly convex module $m_\x$, and with
	\begin{align}
	\delta_t&=\phi(\x_t)+\beta\Vert\x_t-\hat\x_{t-1}\Vert^2-\min_\x\left\{\phi(\x)+\beta\Vert\x-\hat\x_{t-1}\Vert^2\right\}\nonumber \\ \label{equ:deltat}
	&\le \frac{\hat L+2\beta}{2}\Vert \x_t-\x^*_t\Vert^2.
	\end{align}
	Here we used the fact that, for a $L$-smooth function $g(\cdot)$ whose minimum is $\x^*$, 	$g(\x)-g(\x^*)\le \frac{L}{2}\Vert \x-\x^*\Vert^2$.
	Define $C_1:=\Vert \x_0-\x^*\Vert+\Vert\y_0-\y^*\Vert$ and 
	$C_0:=44\kappa\sqrt{\kappa}\frac{\hat L+2\beta}{2}C_1^2$. Let us state the following induction hypothesis
	\begin{equation}
	\label{equ:valuehypo}
	\Delta_t:=\Lambda_t-\phi(\x^*)\le C_0\left(1-\frac{1}{4\sqrt{\kappa}}\right)^t,
	\end{equation}
	\begin{equation}
	\label{equ:distancehypo}
	\epsilon_t:=\Vert \x_t-\x^*_t\Vert+\Vert\y_t-\y^*_t\Vert\le C_1\left(1-\frac{1}{4\sqrt{\kappa}}\right)^{\frac{t}{2}}. 
	\end{equation}
	It is easy to verify that with our choice of $C_0$ and $C_1$, both (\ref{equ:valuehypo}) and (\ref{equ:distancehypo}) hold for $t=0$.
	
	Now, assume that (\ref{equ:valuehypo}) and (\ref{equ:distancehypo}) hold for $\tau=1,2,\cdots,t$. Define $\y^*(\cdot):=\argmax_\y f(\cdot,\y)$. By Fact~\ref{prop:facts}, $\y^*(\cdot)$ is $(L/m_\y)$-Lipschitz. Thus
	\begin{align*}
	\Vert \y_t-\y^*_{t+1}\Vert &\le \Vert \y^*_t-\y^*_{t+1}\Vert+\Vert \y_t-\y^*_t\Vert\\
	&\le \Vert \y^*(\x^*_t)-\y^*(\x^*_{t+1})\Vert + \epsilon_t\\
	&\le \frac{L}{m_\y}\cdot\left(\Vert \x^*_t-\x_t\Vert+\Vert\x_t-\x^*_{t+1}\Vert \right)+\epsilon_t\\
	&\le \left(\frac{L}{m_\y}+1\right)\epsilon_t+\frac{L}{m_\y}\Vert\x_t-\x^*_{t+1}\Vert.
	\end{align*}
	It follows that
	\begin{equation}
	\label{equ:equ6}
	\epsilon_{t+1}\le \frac{1}{M}\left[\Vert \x_t-\x^*_{t+1}\Vert+\Vert\y_t-\y^*_{t+1}\Vert\right]\le \frac{1+\frac{L}{m_\y}}{M}\cdot\left(\Vert\x_t-\x^*_{t+1}\Vert+\epsilon_t\right).
	\end{equation}
	
	
	Note that by Lemma~\ref{lem:appamain} and the induction hypothesis (\ref{equ:valuehypo})
	\begin{align*}
	    \phi(\x^*_{t+1})-\phi(\x^*)\le \left(1-\frac{1}{2\sqrt{\kappa}}\right)\Delta_t\le C_0\left(1-\frac{1}{4\sqrt{\kappa}}\right)^t.
	\end{align*}
	By the $m_\x$-strong convexity of $\phi(\cdot)$ (Fact~\ref{prop:facts}),
	\begin{align*}
	    \Vert \x^*_{t+1}-\x^*\Vert \le \sqrt{\frac{2}{m_\x}\left(\phi(\x^*_{t+1})-\phi(\x^*)\right)}\le \sqrt{\frac{2C_0}{m_\x }}\left(1-\frac{1}{4\sqrt{\kappa}}\right)^{\frac{t}{2}}.
	\end{align*}
	Meanwhile
	\begin{align*}
	    \Vert \x_t-\x^*\Vert \le \sqrt{\frac{2}{m_\x}\left(\phi(\x_t)-\phi(\x^*)\right)} \le \sqrt{\frac{2C_0}{m_\x }}\left(1-\frac{1}{4\sqrt{\kappa}}\right)^{\frac{t}{2}}.
	\end{align*}
	Therefore
	\begin{equation}
	    \Vert \x_t-\x^*_{t+1}\Vert \le \Vert\x_t-\x^*\Vert+\Vert \x^*_{t+1}-\x^*\Vert \le 2\sqrt{\frac{2C_0}{m_\x}}\left(1-\frac{1}{4\sqrt{\kappa}}\right)^{\frac{t}{2}}.
	\end{equation}
	By (\ref{equ:equ6}), (\ref{equ:distancehypo}) and the fact that $M\ge 20\kappa\sqrt{2\kappa+\frac{\hat L}{m_\x}}(1+L/m_\y)$
    \begin{align*}
        \epsilon_{t+1}&\le \frac{1+\frac{L}{m_\y}}{M}\left(2\sqrt{\frac{2C_0}{m_\x}}+C_1\right)\left(1-\frac{1}{4\sqrt{\kappa}}\right)^{\frac{t}{2}}\\
        &\le \frac{1+\frac{L}{m_\y}}{M}\cdot\left(1+2\sqrt{\frac{44\kappa^{1.5}(\hat L+2\beta)}{m_\x}}\right)C_1\left(1-\frac{1}{4\sqrt{\kappa}}\right)^{\frac{t}{2}} \tag{$C_0=44\kappa^{1.5}\frac{\hat L+2\beta}{2}C_1^2$}\\
        &\le \frac{1+2\sqrt{44}\kappa\sqrt{\frac{\hat L+2\beta }{m_\x}}}{20\kappa\sqrt{2\kappa+\frac{\hat L}{m_\x}}}\cdot C_1\left(1-\frac{1}{4\sqrt{\kappa}}\right)^{\frac{t}{2}} \tag{$2\sqrt{44}+1<15$}\\
        &\le \frac{3}{4}C_1\left(1-\frac{1}{4\sqrt{\kappa}}\right)^{\frac{t}{2}}\le C_1\left(1-\frac{1}{4\sqrt{\kappa}}\right)^{\frac{t+1}{2}}.
    \end{align*}	
	Therefore (\ref{equ:distancehypo}) holds for $t+1$. Meanwhile, by (\ref{equ:deltat}) and Lemma~\ref{lem:appamain},
	\begin{align*}
	\Delta_{t+1}&\le \left(1-\frac{1}{2\sqrt{\kappa}}\right)\Delta_t+11\kappa\cdot\frac{\hat L+2\beta}{2}\epsilon_{t+1}^2\\
	&\le \left(1-\frac{1}{2\sqrt{\kappa}}\right)C_0\left(1-\frac{1}{4\sqrt{\kappa}}\right)^t+11\kappa\cdot\frac{\hat L+2\beta}{2}\cdot C_1^2\left(1-\frac{1}{4\sqrt{\kappa}}\right)^t\\
	&=C_0\left(1-\frac{1}{4\sqrt{\kappa}}\right)^{t+1},
	\end{align*}
	where we used the fact that 
	\begin{align*}
	11\kappa\cdot\frac{\hat L+2\beta}{2}\cdot C_1^2= \frac{1}{4\sqrt{\kappa}}\cdot 44\kappa^{1.5}\frac{\hat L+2\beta}{2}C_1^2=\frac{C_0}{4\sqrt{\kappa}}.
	\end{align*}
	Thus (\ref{equ:valuehypo}) also holds for $t+1$. By induction on $t$, we can see that (\ref{equ:valuehypo}) and (\ref{equ:distancehypo}) both hold for all $t\ge 0$.
	
	As a result,
	\begin{align*}
		\Vert \x_T-\x^*\Vert &\le \sqrt{\frac{2}{m_\x}\left[\phi(\x_T)-\phi(\x^*)\right]} \le \sqrt{\frac{2}{m_\x}\cdot 44\kappa\sqrt{\kappa}\frac{\hat L + 2\beta}{2}C_1^2\left(1-\frac{1}{4\sqrt{\kappa}}\right)^{\frac{T}{2}}}\\
		&\le C_1\left(1-\frac{1}{4\sqrt{\kappa}}\right)^{\frac{T}{2}}\sqrt{88\kappa\sqrt{\kappa}\cdot\left(\frac{L^2}{m_\x m_\y}+\kappa\right)}.
	\end{align*}
	Meanwhile,
	\begin{align*}
		\Vert \y_T-\y^*\Vert &\le \Vert \y_T-\y^*(\x_T)\Vert + \Vert\y^*-\y^*(\x_T)\Vert\le \epsilon_T+\frac{L_{\x\y}}{m_\y}\Vert \x_T-\x^*\Vert.
	\end{align*}
	Therefore
	\begin{align*}
		\Vert \x_T-\x^*\Vert +\Vert \y_T-\y^*\Vert &\le \epsilon_T+ \left(\frac{L_{\x\y}}{m_\y}+1\right)\Vert \x_T-\x^*\Vert\\
		&\le C_1\left(1-\frac{1}{4\sqrt{\kappa}}\right)^{\frac{T}{2}}+\frac{2L}{m_\y}\cdot C_1\left(1-\frac{1}{4\sqrt{\kappa}}\right)^{\frac{T}{2}}\cdot \sqrt{88\kappa\sqrt{\kappa}\cdot\left(\frac{L^2}{m_\x m_\y}+\kappa\right)}\\
		&\le C_1\left(1-\frac{1}{4\sqrt{\kappa}}\right)^{\frac{T}{2}}\cdot\left[1+\frac{27\kappa^2 L}{m_\y}\sqrt{\frac{L^2}{m_\x m_\y}}\right]\\
		&\le \frac{28\kappa^2 L}{m_\y}\sqrt{\frac{L^2}{m_\x m_\y}}\cdot \left(1-\frac{1}{4\sqrt{\kappa}}\right)^{\frac{T}{2}}\cdot\left(\Vert \x_0-\x^*\Vert+\Vert\y_0-\y^*\Vert\right),
	\end{align*}
	which proves the theorem.

\end{proof}

\section{Proof of Theorem 3}
\label{append:thm3proof}
\begin{manualtheorem}{3}
    Assume that $f\in\mathcal{F}(m_\x,m_\y,L_\x,L_{\x\y},L_\y)$. In Algorithm~4, the gradient complexity to produce $(\x_T,\y_T)$ such that $\Vert \z_T-\z^*\Vert\le \epsilon$ is
\begin{equation*}
O\left(\sqrt{\frac{L_\x}{m_\x}+\frac{L\cdot L_{\x\y}}{m_\x m_\y}+\frac{L_\y}{m_\y}}\cdot\ln^3\left(\frac{L^2}{m_\x m_\y}\right)\ln\left(\frac{L^2}{m_\x m_\y}\cdot\frac{\Vert\z_0-\z^*\Vert}{\epsilon}\right)\right).
\end{equation*}
\end{manualtheorem}

\begin{proof}
We start the proof by verifying $f(\x,\y)+\beta_1\Vert \x-\hat\x\Vert^2-\beta_2\Vert \y-\hat\y\Vert^2$ can indeed be solved by calling ABR($\cdot$,$[\x_0;\y_0]$,$1/M_2$, $2\beta_1$, $2\beta_2$, $3L$, $3L$). Observe that $L_{\x\y}\le\beta_1,\beta_2\le L$. Since $f(\x,\y)+\beta_1\Vert \x-\hat\x\Vert^2-\beta_2\Vert \y-\hat\y\Vert^2$ is $2\beta_1$-strongly convex w.r.t. $\x$ and $2\beta_2$-strongly concave w.r.t. $\y$, we can see that $\frac{1}{2}\sqrt{2\beta_1\cdot 2\beta_2}\ge L_{\x\y}$. We can also verify that $f(\x,\y)+\beta_1\Vert \x-\hat\x\Vert^2-\beta_2\Vert \y-\hat\y\Vert^2$ is $3L$-smooth, which follows from the fact that $L+\max\{2\beta_1,2\beta_2\}\le 3L$.

Therefore, we can apply Theorem~\ref{thm:br} and conclude that at line $5$ of Algorithm~3
\begin{equation*}
\Vert \x_t-\x^*_t\Vert + \Vert \y_t-\y^*_t\Vert \le \frac{1}{M_2}\left(\Vert \x_{t-1}-\x^*_t\Vert + \Vert \y_{t-1}-\y^*_t\Vert\right),
\end{equation*}
where $(\x^*_t,\y^*_t):=\min_\x\max_\y\{g(\x,\y)-\beta_2\Vert\y-\y_{t-1}\Vert^2\}$, \footnote{Here $g(\x,\y)$ refers to the argument passed to Algorithm~3, which in our case has the form $f(\x,\y)+\beta\Vert\x-\hat\x_{t'-1}\Vert^2$.}
and such $(\x_t,\y_t)$ is found in a gradient complexity of 
\begin{equation*}
\label{equ:iterbound3}
O\left(\sqrt{\frac{L}{\beta_1}+\frac{L}{\beta_2}}\cdot\ln\left(\frac{L^2}{\beta_1\beta_2}\right)\ln\left(\frac{L^2}{\beta_1\beta_2}\cdot M_2\right)\right)=O\left(\sqrt{\frac{L}{\beta_1}+\frac{L}{\beta_2}}\cdot\ln^2\left(\frac{L^2}{m_\x m_\y}\right)\right).
\end{equation*}

Next, we verify that Algorithm~3 is an instance of Algorithm~2 on the function $\hat{g}(\x,\y):=-g(\y,\x).$ Notice that
$$\min_\y \max_\x \left\{-g(\x,\y)+\beta\Vert\y-\hat\y\Vert^2\right\}=-\min_\x\max_\y\left\{g(\x,\y)-\beta\Vert\y-\hat\y\Vert^2\right\}.$$
That is, $\min_\x \max_\y\left\{g(\x,\y)-\Vert\y-\hat\y\Vert^2\right\}$ has the same saddle point as $-g(\x,\y)+\beta\Vert\y-\hat\y\Vert^2$. Thus, we only need to verify that
\begin{equation}
\label{equ:m2satisfy}
M_2\ge 20\cdot \frac{\beta_2}{m'_\y}\left(1+\frac{L'}{m'_\x}\right)\sqrt{\frac{2\beta_2}{m'_\y}+\frac{L'}{m'_\y}+\frac{L_{\x\y}^2}{m_\x m_\y}},
\end{equation}
where $(m'_\x,m'_\y,L'_\x,L_{\x\y},L'_\y)$ are parameters for $f(\x,\y)+\beta_1\Vert \x\Vert^2$, and $L'=\max\{L_{\x\y},L'_\x,L'_\y\}$. Note that $m'_\x \ge m_\x + 2\beta_1$, $m'_\y=m_\y$, $L'_\x=L'_\y\le L+2\beta_1$, $L_{\x\y}\le \beta_1,\beta_2\le L$. Thus
\begin{align*}
\text{RHS of (\ref{equ:m2satisfy})} &\le 20\cdot\frac{\beta_2}{m_\y}\sqrt{\frac{2\beta_2}{m'_\y}+\frac{L+2\beta_1}{m_\y}+\frac{L_{\x\y}^2}{m_\y (m_\x+2\beta_1)}}\cdot\left(1+\frac{L+2\beta_1}{m_\x+2\beta_1}\right)\\
&\le 20\cdot\frac{L}{m_\y}\sqrt{\frac{2L}{m_\y}+\frac{3L}{m_\y}+\frac{L_{\x\y}}{2m_\y}}\left(1+\frac{L}{m_\x}\right)\\
&\le \frac{96L^{2.5}}{m_\x m_\y^{1.5}}=M_2.
\end{align*}
Therefore, Algorithm~3 is indeed an instance of Inexact APPA (Algorithm~\ref{algo:inexactappa}). Notice that by the stopping condition of Algorithm~3,
\begin{align*}
\left(\Vert \x_t-\x^*\Vert+\Vert\y_t-\y^*\Vert\right)&\le \frac{\sqrt{2}}{\min\{m_\x,m_\y\}}\Vert \nabla g(\x_t,\y_t)\Vert \tag{Fact \ref{fact:gradnorm} and \ref{fact:z}}\\
&\le \frac{\sqrt{2}}{\min\{m_\x,m_\y\}}\cdot \frac{\min\{m_\x,m_\y\}}{9LM_1}\Vert \nabla g(\x_0,\y_0)\Vert\\
&\le \frac{\sqrt{2}}{\min\{m_\x,m_\y\}}\cdot \frac{\min\{m_\x,m_\y\}}{9LM_1}\cdot 6L\left(\Vert\x_0-\x^*\Vert+\Vert\y_0-\y^*\Vert\right)\\
&\le \frac{1}{M_1}\left(\Vert \x_0-\x^*\Vert+\Vert \y_0-\y^*\Vert\right).
\end{align*}
Thus when Algorithm~3 returns,
\begin{equation}
\label{equ:m1good}
\Vert \x_t-\x^*\Vert+\Vert\y_t-\y^*\Vert\le \frac{1}{M_1}\left(\Vert \x_0-\x^*\Vert+\Vert \y_0-\y^*\Vert\right)
\end{equation}
On the other hand, suppose that
\begin{equation*}
\Vert \x_t-\x^*\Vert+\Vert\y_t-\y^*\Vert\le \frac{1}{M_1}\frac{\min\{m_\x,m_\y\}}{12L}\cdot\left(\Vert \x_0-\x^*\Vert+\Vert \y_0-\y^*\Vert\right),
\end{equation*}
we can show that
\begin{align*}
\Vert \nabla g(\x_t,\y_t)\Vert &\le 6L \left(\Vert \x_t-\x^*\Vert+\Vert\y_t-\y^*\Vert\right)\\
&\le \frac{\min\{m_\x,m_\y\}}{2M_1}\left(\Vert \x_0-\x^*\Vert+\Vert\y_0-\y^*\Vert\right)\\
&\le \frac{1}{M_1}\Vert \nabla g(\x_0,\y_0)\Vert.
\end{align*}
Thus in this case Algorithm~3 must return. By Theorem~\ref{thm:appa}, we can see that Algorithm~3 always returns in at most
\begin{equation}
\label{equ:iterbound2}
O\left(\sqrt{\frac{\beta_2}{m_\y}}\cdot\ln\left(\frac{L^2}{m_\x m_\y}\cdot \frac{12L}{\min\{m_\x,m_\y\}}M_1\right)\right)=O\left(\sqrt{\frac{\beta_2}{m_\y}}\cdot\ln\left(\frac{L^2}{m_\x m_\y}\right)\right)
\end{equation}
iterations.

Finally, we verify that Algorithm~4 is an instance of Algorithm~2 on $f(\x,\y)$ with parameter $\beta_1$. Note that by (\ref{equ:m1good}), we only need to verify that
\begin{equation*}
    M_1=\frac{80L^3}{m_\x^{1.5}m_\y^{1.5}}\ge 20\cdot\frac{\beta_1}{m_\x}\sqrt{\frac{2\beta_1}{m_\x}+\frac{L}{m_\x}+\frac{L_{\x\y}^2}{m_\x m_\y}}\left(1+\frac{L}{m_\y}\right).
\end{equation*}
Observe that
\begin{align*}
    20\cdot\frac{\beta_1}{m_\x}\sqrt{\frac{2\beta_1}{m_\x}+\frac{L}{m_\x}+\frac{L_{\x\y}^2}{m_\x m_\y}}\left(1+\frac{L}{m_\y}\right)&\le  20\cdot\frac{L}{m_\x}\sqrt{\frac{2L}{m_\x}+\frac{L}{m_\x}+\frac{L^2}{m_\x m_\y}}\cdot \frac{2L}{m_\y}\\
    &\le  20\cdot \frac{L}{m_\x}\cdot\sqrt{\frac{4L^2}{m_\x m_\y}} \cdot\frac{2L}{m_\y}=M_1.
\end{align*}
Therefore Algorithm~4 is indeed an instance of Algorithm~2 on $f(\x,\y)$. As a result, by Theorem~\ref{thm:appa}, the number of iterations needed such that $\Vert \z_T-\z^*\Vert \le \epsilon$ is
\begin{equation}
\label{equ:iterbound1}
O\left(\sqrt{\frac{\beta_1}{m_\x}}\cdot\ln\left(\frac{L^2}{m_\x m_\y}\cdot\frac{\Vert \z_0-\z^*\Vert}{\epsilon}\right)\right).
\end{equation}
We now compute the total gradient complexity. Recall that $\beta_1=\max\{m_\x, L_{\x\y}\}$, while $\beta_2=\max\{m_\y,L_{\x\y}\}$. By (\ref{equ:iterbound1}), (\ref{equ:iterbound2}) and (\ref{equ:iterbound3}), the total gradient complexity of Algorithm~4 to reach $\Vert \z_T-\z^*\Vert \le \epsilon$ is
\begin{equation*}
\begin{aligned}
&O\left(\sqrt{\frac{\beta_1}{m_\x}}\cdot\ln\left(\frac{L^2}{m_\x m_\y}\cdot\frac{\Vert \z_0-\z^*\Vert}{\epsilon}\right)\cdot \sqrt{\frac{\beta_2}{m_\y}}\cdot\ln\left(\frac{L^2}{m_\x m_\y}\right)\cdot \sqrt{\frac{L}{\beta_1}+\frac{L}{\beta_2}}\cdot\ln^2\left(\frac{L^2}{m_\x m_\y}\right)\right)\\
=&O\left(\sqrt{\frac{L(\beta_1+\beta_2)}{m_\x m_\y}}\cdot\ln^3\left(\frac{L^2}{m_\x m_\y}\right)\ln\left(\frac{L^2}{m_\x m_\y}\cdot\frac{\Vert \z_0-\z^*\Vert}{\epsilon}\right)\right).
\end{aligned}
\end{equation*}
If $L_{\x\y}\ge \max\{m_\x, m_\y\}$, then $\beta_1=\beta_2=L_{\x\y}$, so 
$$\sqrt{\frac{L(\beta_1+\beta_2)}{m_\x m_\y}}=\sqrt{\frac{2L\cdot L_{\x\y}}{m_\x m_\y}}\le 2\sqrt{\frac{L_\x}{m_\x}+\frac{L\cdot L_{\x\y}}{m_\x m_\y}+\frac{L_\y}{m_\y}}.$$
Now consider the case where $L_{\x\y}< \max\{m_\x,m_\y\}$. Without loss of generality, assume that $m_\x\le m_\y$. Suppose that $L_{\x\y}<m_\y$, then $L=L_\x$, $\beta_2=m_\y$, while $\beta_1\le m_\y$. Hence
$$\sqrt{\frac{L(\beta_1+\beta_2)}{m_\x m_\y}}\le \sqrt{\frac{L_\x\cdot 2m_\y}{m_\x m_\y}}=\sqrt{\frac{2L_\x}{m_\x}}\le 2\sqrt{\frac{L_\x}{m_\x}+\frac{L\cdot L_{\x\y}}{m_\x m_\y}+\frac{L_\y}{m_\y}}.$$
Thus, in either case, $\sqrt{\frac{L(\beta_1+\beta_2)}{m_\x m_\y}}=O\left(\sqrt{\frac{L_\x}{m_\x}+\frac{L\cdot L_{\x\y}}{m_\x m_\y}+\frac{L_\y}{m_\y}}\right)$. We conclude that the total gradient complexity of Algorithm~4 to find a point $\z_T=[\x_T;\y_T]$ such that $\Vert\z_T-\z^*\Vert \le \epsilon$ is
\begin{equation*}
O\left(\sqrt{\frac{L_\x}{m_\x}+\frac{L\cdot L_{\x\y}}{m_\x m_\y}+\frac{L_\y}{m_\y}}\cdot\ln^3\left(\frac{L^2}{m_\x m_\y}\right)\ln\left(\frac{L^2}{m_\x m_\y}\cdot\frac{\Vert \z_0-\z^*\Vert}{\epsilon}\right)\right).
\end{equation*} 
\end{proof}

\section{Application to Constrained Problems}
In the constrained minimax optimization problem, $\x$ is constrained to a compact convex set $\mathcal{X}\subseteq \R^n$ while $\y$ is constrained to a compact convex set $\mathcal{Y}\subseteq \R^m$. For constrained minimax optimization problems, saddle points are defined as follows.

\addtocounter{definition}{4}
\begin{definition}
$(\x^*,\y^*)$ is a saddle point of $f:\mathcal{X}\times\mathcal{Y}\to\R$ if $\forall \x\in\mathcal{X}$, $\y\in\mathcal{Y}$,
$$f(\x,\y^*)\ge f(\x^*,\y^*)\ge f(\x^*,\y).$$
\end{definition}

\begin{definition}
$(\hat\x,\hat\y)$ is an $\epsilon$-saddle point of $f:\mathcal{X}\times\mathcal{Y}\to\R$ if
$$\max_{\y\in\mathcal{Y}} f(\hat\x,\y)-\min_{\x\in\mathcal{X}} f(\x,\hat\y)\le\epsilon.$$
\end{definition}
We will use $P_{\mathcal{X}}\left[\cdot\right]$ to denote the projection onto convex set $\mathcal{X}$. Assuming efficient projection oracles, our algorithms can all be easily adapted to the constrained case. In particular, for Algorithm 1, we only need to replace AGD with the constrained version; that is, set $\x_t\gets P_{\mathcal{X}}\left[\tilde\x_{t-1}-\eta\nabla g(\tilde{x}_{t-1})\right]$.

For Algorithm 3 and 4, the modified versions are presented below. The only significant change is the addition of a projected gradient descent-ascent step in line 5-6 of Algorithm 3 and line 5-6 and 9-10 of Algorithm 4.

\subsection{Algorithmic Modifications}

\begin{algorithm}
\renewcommand{\thealgorithm}{III}
	\caption{AGD($g$, $\x_0$, $T$) with Projections~\cite[(2.2.63)]{nesterov2013introductory}}
	\begin{algorithmic}[1]
		\Require Initial point $\x_0$, smoothness constant $l$, strongly-convex modulus $m$, number of iterations $T$
		\State $\eta\gets 1/l$, $\kappa\gets l/m$, $\theta\gets (\sqrt{\kappa}-1)/(\sqrt{\kappa}+1)$
		\State 	$\x_1\gets P_{\mathcal{X}}\left[\x_0-\eta\nabla g(\x_0)\right]$, $\tilde\x_1\gets \x_1$
		\For{$t=2,\cdots,T+1$}
			\State $\x_t\gets P_{\mathcal{X}}\left[\tilde{\x}_{t-1}-\eta\nabla g(\tilde{\x}_{t-1})\right]$
			\State $\tilde{\x}_t\gets \x_t+\theta(\x_t-\x_{t-1})$
		\EndFor
	\end{algorithmic}
\end{algorithm}
For Algorithm 1, the only necessary modification is to add projection steps to the Accelerated Gradient Descent Procedure. The reason for the extra gradient step on line 2 is technical. From the original analysis~\cite[Theorem 2.2.3]{nesterov2013introductory}, it only follows that
$$\Vert \x_{T+1}-\x^*\Vert^2\le \left[\Vert\x_1-\x^*\Vert^2+\frac{2}{m}\left(f(\x_1)-f(\x^*)\right)\right]\cdot \left(1-\frac{1}{\sqrt{\kappa}}\right)^T.$$
For constrained problems, $f(\x_1)-f(\x^*)\le \frac{L}{2}\Vert\x_1-\x^*\Vert^2$ does not hold. However, with the initial projected gradient step, it can be shown that $\Vert\x_1-\x^*\Vert\le \Vert\x_0-\x^*\Vert$ and that $f(\x_1)-f(\x^*)\le \frac{L}{2}\Vert \x_0-\x^*\Vert^2$ (see Lemma~\ref{lem:projectedgd}). Thus
$$\Vert \x_{T+1}-\x^*\Vert^2 \le (\kappa+1)\Vert \x_0-\x^*\Vert^2\left(1-\frac{1}{\sqrt{\kappa}}\right)^T.$$

For Algorithm 3 and 4, the modified versions are presented below. 
\addtocounter{algorithm}{-1}
\begin{algorithm}
	\caption{APPA-ABR (for Constrained Optimization)}
	\begin{algorithmic}[1]
		\Require $g(\cdot,\cdot)$, Initial point $\z_0=[\x_0;\y_0]$, precision parameter $M_1$
		
		\State $\beta_2\gets \max\{m_\y, L_{\x\y}\}$, $M_2\gets \frac{200L^{3}}{m_\x m_\y^{2}}$
		\State $\hat\y_0\gets \y_0$ $\kappa\gets \beta_2/m_\y$, $\theta\gets \frac{2\sqrt{\kappa}-1}{2\sqrt{\kappa}+1}$, $\tau\gets \frac{1}{2\sqrt{\kappa}+4\kappa}$, $T\gets \left\lceil 8\sqrt{\kappa}\ln\left(\frac{400\kappa^2 L^2 M_1}{m_\x \sqrt{m_\x m_\y}}\right)\right\rceil$  
		\For{$t=1,\cdots,T$}
		\State $(\x'_t,\y'_t)\gets$ABR($g(\x,\y)-\beta_2\Vert\y-\hat\y_{t-1}\Vert^2,[\x_{t-1};\y_{t-1}]$, $1/M_2$, $2\beta_1$, $2\beta_2$, $3L$, $3L$)
		\State $\x_t\gets P_{\mathcal{X}}\left[\x'_t-\frac{1}{6L}\nabla_{\x}g(\x'_t,\y'_t)\right]$
		\State $\y_t\gets P_{\mathcal{Y}}\left[\y'_t+\frac{1}{6L}\left(\nabla_{\y}g(\x'_t,\y'_t)-2\beta_2(\y'_t-\hat\y_{t-1})\right)\right]$
		\State $\hat\y_t\gets \y_t+\theta(\y_t-\y_{t-1})+\tau(\y_t-\hat\y_{t-1})$
		\EndFor
	\end{algorithmic}
\end{algorithm}

\begin{algorithm}
	\caption{Proximal Best Response (for Constrained Optimization)}
	\begin{algorithmic}[1]
		\Require Initial point $\z_0=[\x_0;\y_0]$
		\State $\beta_1\gets \max\{m_\x, L_{\x\y}\}$, $M_1\gets \frac{120L^{3.5}}{m_\x^{2} m_\y^{1.5}}$
		\State $\hat\x_0\gets \x_0$, $\kappa\gets \beta_1/m_\x$, $\theta\gets \frac{2\sqrt{\kappa}-1}{2\sqrt{\kappa}+1}$, $\tau\gets \frac{1}{2\sqrt{\kappa}+4\kappa}$
		\For{$t=1,\cdots,T$}
		\State $(\x'_t,\y'_t)\gets$ APPA-ABR($f(\x,\y)+\beta_1\Vert\x-\hat\x_{t-1}\Vert^2$, $[\x_{t-1},\y_{t-1}]$, $M_1$)
		\State $\x_t\gets P_{\mathcal{X}}\left[\x'_t-\frac{1}{6L}\left(\nabla_{\x}f(\x'_t,\y'_t)+2\beta_1(\x'_t-\hat\x_{t-1})\right)\right]$
		\State $\y_t\gets P_{\mathcal{Y}}\left[\y'_t+\frac{1}{6L}\nabla_\y f(\x'_t,\y'_t)\right]$
		\State $\hat\x_t\gets \x_t+\theta(\x_t-\x_{t-1})+\tau(\x_t-\hat\x_{t-1})$
		\EndFor
		\State $ \hat\x \gets P_{\mathcal{X}}\left[\x_T-\frac{1}{2L}\nabla_{\x}f(\x_T,\y_T)\right]$
		\State $\hat\y \gets P_{\mathcal{Y}}\left[\y_T+\frac{1}{2L}\nabla_\y f(\x_T,\y_T)\right]$
	\end{algorithmic}
\end{algorithm}

The most significant change is the addition of a projected gradient descent-ascent step in line 5-6 of Algorithm 3 and line 5-6 and 9-10 of Algorithm 4. The reason for this modification is very similar to that of the initial projected gradient descent step for AGD. For unconstrained problems, a small distance to the saddle point implies a small duality gap (Fact~\ref{fact:dualitygap}); however this may not be true for constrained problems, since the saddle point may no longer be a stationary point. This is also true for minimization: if $\x^*=\argmin_{\x\in\mathcal{X}}g(\x)$ where $g(\x)$ is a $L$-smooth function $g(\x)-g(\x^*)\le \frac{L}{2}\Vert\x-\x^*\Vert^2$ may not hold.

Fortunately, there is a simple fix to this problem. By applying projected gradient descent-ascent once, we can assure that a small distance implies small duality gap. This is specified by the following lemma, which is the key reason why our result can be adapted to the constrained problem.
\begin{lemma}
\label{lem:projectedgda}
Suppose that $f\in\mathcal{F}(m_\x,m_\y,L_\x,L_{\x\y},L_\y)$, $(\x^*,\y^*)$ is a saddle point of $f$, $\z_0=(\x_0,\y_0)$ satisfies $\Vert \z_0-\z^*\Vert \le \epsilon$. Let $\hat\z=(\hat\x,\hat\y)$ be the result of one projected GDA update, i.e.
\begin{align*}
    \hat\x &\gets P_{\mathcal{X}}\left[\x_0-\frac{1}{2L}\nabla_{\x}f(\x_0,\y_0)\right],\\
    \hat\y &\gets P_{\mathcal{Y}}\left[\y_0+\frac{1}{2L}\nabla_\y f(\x_0,\y_0)\right].
\end{align*}
Then $\Vert\hat\z-\z^*\Vert\le\epsilon$, and
\begin{equation*}
    \max_{\y\in\mathcal{Y}}f(\hat\x,\y)-\min_{\x\in\mathcal{X}}f(\x,\hat\y)\le 2\left(1+\frac{L_{\x\y}^2}{\min\{m_\x,m_\y\}^2}\right)L\epsilon^2.
\end{equation*}
\end{lemma}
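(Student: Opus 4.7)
My plan is to prove the two conclusions of the lemma in sequence: first the stability claim $\Vert\hat\z-\z^*\Vert\le\epsilon$, and then the duality-gap bound. Throughout, let $F(\z):=(\nabla_\x f(\x,\y),-\nabla_\y f(\x,\y))$.

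For the stability claim, I would use that the saddle point satisfies the variational inequality $\langle F(\z^*),\z-\z^*\rangle\ge 0$ for all $\z\in\mathcal{X}\times\mathcal{Y}$, which is equivalent to the fixed-point identity $\z^* = P_{\mathcal{X}\times\mathcal{Y}}[\z^*-\eta F(\z^*)]$ for every $\eta>0$. Combined with non-expansiveness of projection,
\begin{equation*}
\Vert\hat\z-\z^*\Vert^2 \le \Vert(\z_0-\eta F(\z_0))-(\z^*-\eta F(\z^*))\Vert^2.
\end{equation*}
Expanding the RHS, I would combine (a) the coordinate-wise strong monotonicity $\langle\z_0-\z^*,F(\z_0)-F(\z^*)\rangle\ge m_\x\Vert\x_0-\x^*\Vert^2+m_\y\Vert\y_0-\y^*\Vert^2$, derived by pairing the four strong convex/concave inequalities for $f$ at the vertices of the rectangle with corners $(\x_0,\y_0)$ and $(\x^*,\y^*)$, with (b) the componentwise Lipschitz bound $\Vert F(\z_0)-F(\z^*)\Vert^2 \le (L_\x a+L_{\x\y}b)^2+(L_{\x\y}a+L_\y b)^2$ where $a:=\Vert\x_0-\x^*\Vert$ and $b:=\Vert\y_0-\y^*\Vert$. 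The choice $\eta=1/(2L)$ should then make the resulting quadratic form in $(a,b)$ non-positive, yielding $\Vert\hat\z-\z^*\Vert\le\Vert\z_0-\z^*\Vert\le\epsilon$.

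For the duality-gap bound, introduce $\phi(\x):=\max_{\y\in\mathcal{Y}}f(\x,\y)$ and $\psi(\y):=\min_{\x\in\mathcal{X}}f(\x,\y)$. The constrained analogue of Fact~\ref{prop:facts} (whose proof via Danskin's theorem carries over to a compact set) shows $\phi$ is $m_\x$-strongly convex and $(L_\x+L_{\x\y}^2/m_\y)$-smooth, and $\psi$ is $m_\y$-strongly concave and $(L_\y+L_{\x\y}^2/m_\x)$-smooth. Since $\phi(\x^*)=f(\x^*,\y^*)=\psi(\y^*)$,
\begin{equation*}
\max_\y f(\hat\x,\y)-\min_\x f(\x,\hat\y)=[\phi(\hat\x)-\phi(\x^*)]+[\psi(\y^*)-\psi(\hat\y)].
\end{equation*}
The crucial observation is that $\hat\x$ is not an arbitrary nearby point but a projected gradient step on the $L_\x$-smooth function $f(\cdot,\y_0)$. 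This lets me invoke the projected-GD descent lemma with step $\eta=1/(2L)\le 1/L_\x$ to obtain $f(\hat\x,\y_0)-f(\x^*,\y_0)\le L\Vert\x_0-\x^*\Vert^2$, and then pass from $f(\hat\x,\y_0)$ to $\phi(\hat\x)$ using $m_\y$-strong concavity in $\y$ (via $\phi(\hat\x)-f(\hat\x,\y_0)\le (2m_\y)^{-1}\Vert\nabla_\y f(\hat\x,\y_0)\Vert^2$ together with Lipschitz control of the gradient). A symmetric argument bounds $\psi(\y^*)-\psi(\hat\y)$. Summing the two and invoking Part~1 (so $\Vert\hat\x-\x^*\Vert\le\epsilon$, $\Vert\hat\y-\y^*\Vert\le\epsilon$) gives a bound of order $(L+L_{\x\y}^2/\min\{m_\x,m_\y\})\epsilon^2$, which is $\le 2L(1+L_{\x\y}^2/\min\{m_\x,m_\y\}^2)\epsilon^2$ since $L\ge\min\{m_\x,m_\y\}$.

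The main anticipated obstacle is Part~2 in the constrained setting: the clean identity $\nabla\phi(\x^*)=\mathbf{0}$ used in Fact~\ref{fact:dualitygap} fails, because $\nabla\phi(\x^*)$ may lie nontrivially in the normal cone of $\mathcal{X}$ at $\x^*$. The projected gradient step in the lemma's hypothesis exists precisely to bypass this: the gradient-mapping $(\x_0-\hat\x)/\eta$ plays the role of a constrained gradient at $\hat\x$ whose norm is Lipschitz-controlled by $\Vert\z_0-\z^*\Vert$, and the saddle-point variational inequality ensures that the normal-cone contributions cancel when the $\phi$ and $\psi$ gaps are summed.
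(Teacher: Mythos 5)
Your proposal takes a genuinely different route from the paper in both halves, and each half has a gap that I do not see how to close as described.

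\textbf{Part 1.} The claim that $\eta=1/(2L)$ makes the quadratic form non-positive is false. After non-expansiveness you need $\eta\Vert F(\z_0)-F(\z^*)\Vert^2\le 2\langle \z_0-\z^*,F(\z_0)-F(\z^*)\rangle$. The right-hand side is only guaranteed to be $2\bigl(m_\x\Vert\x_0-\x^*\Vert^2+m_\y\Vert\y_0-\y^*\Vert^2\bigr)$, while the left-hand side can be of order $\eta L_{\x\y}^2\Vert\z_0-\z^*\Vert^2$; with $\eta=1/(2L)$ and $L_{\x\y}=L\gg\min\{m_\x,m_\y\}$ the inequality fails --- a monotone-operator contraction of this type requires $\eta=O(\min\{m_\x,m_\y\}/L^2)$, not $1/(2L)$. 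Concretely, for $f(x,y)=\tfrac m2 x^2+Lxy-\tfrac m2 y^2$ (so $L_{\x\y}=L$, $m_\x=m_\y=m$) with $(x_0,y_0)=(\epsilon,0)$ and non-binding constraints, one step gives $\hat x=\epsilon(1-\tfrac{m}{2L})$, $\hat y=\epsilon/2$, so $\Vert\hat\z-\z^*\Vert^2=\epsilon^2\bigl((1-\tfrac{m}{2L})^2+\tfrac14\bigr)>\epsilon^2$: the joint GDA step is simply not non-expansive toward the saddle point at this step size, so no refinement of your estimate can rescue the argument. The paper's own proof of this part does not go through monotonicity of $F$ at all; it argues each coordinate separately via the gradient-mapping inequality $(\x_0-\hat\x)^T(\x_0-\x^*)\ge\tfrac12\Vert\hat\x-\x_0\Vert^2$ (Nesterov, Corollary 2.2.1) applied to $f(\cdot,\y_0)$ and $-f(\x_0,\cdot)$.

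\textbf{Part 2.} The decomposition into $[\phi(\hat\x)-\phi(\x^*)]+[\psi(\y^*)-\psi(\hat\y)]$ runs squarely into the obstacle you flag at the end, and your proposed escape is an assertion rather than an argument. The step ``$\phi(\hat\x)-f(\hat\x,\y_0)\le(2m_\y)^{-1}\Vert\nabla_\y f(\hat\x,\y_0)\Vert^2$ together with Lipschitz control of the gradient'' fails in the constrained setting because $\nabla_\y f(\x^*,\y^*)$ need not vanish --- it only lies in a normal cone --- so $\Vert\nabla_\y f(\hat\x,\y_0)\Vert$ is not $O(L\Vert\z_0-\z^*\Vert)$ and can be $\Theta(1)$ even at the saddle point. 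The claimed ``cancellation of normal-cone contributions'' would have to occur between inner products involving $\nabla_\y f(\hat\x,\y^*(\hat\x))$ and $\nabla_\x f(\x^*(\hat\y),\hat\y)$, evaluated at unrelated points, and you give no mechanism for it. The paper avoids splitting the gap altogether: it views $\hat\z$ as $\argmin_{\z\in\mathcal{X}\times\mathcal{Y}}\{L\Vert\z-\z_0\Vert^2+F(\z_0)^T\z\}$, introduces the auxiliary extragradient point $\z'=\argmin_{\z}\{L\Vert\z-\z_0\Vert^2+F(\hat\z)^T\z\}$, combines the two three-point optimality inequalities with the $2L$-Lipschitzness of $F$ to obtain $F(\hat\z)^T(\hat\z-\z)\le L\Vert\z-\z_0\Vert^2$ for all feasible $\z$, lower-bounds the left side by $f(\hat\x,\y)-f(\x,\hat\y)$ via convexity--concavity, and sets $\z=(\x^*(\hat\y),\y^*(\hat\x))$, bounding $\Vert\z-\z_0\Vert$ by the Lipschitzness of the best responses. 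There the normal cones are absorbed into the prox optimality conditions and never surface. To complete your route you would essentially have to reproduce this Mirror-Prox-style estimate; the per-coordinate descent lemmas do not suffice.
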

The proof of Lemma~\ref{lem:projectedgda} is deferred to Sec.~\ref{sec:projectedgda}. 

Because we would use Lemma~\ref{lem:projectedgda} to replace (\ref{equ:deltat}) in the analysis of Algorithm 3 and 4, we would need to accordingly increase $M_1$ to $\frac{120L^{3.5}}{m_\x^2m_\y^{1.5}}$ and $M_2$ to $\frac{200L^3}{m_\x m_\y^2}$. Apart from this, another minor change in Algorithm 3 is that it would terminate after a fixed number of iterations instead of based on a termination criterion. The number of iterations is chosen such that
$\Vert \x_T-\x^*\Vert+\Vert\y_T-\y^*\Vert\le \frac{1}{M_1}\left[\Vert\x_0-\x^*\Vert+\Vert\y_0-\y^*\Vert\right]$
is guaranteed.

\subsection{Modification of Analysis}
We now claim that after modifications to the algorithms, Theorem~\ref{thm:phr} holds for constrained cases.
\begin{manualtheorem}{3}
(Modified) Assume that $f\in\mathcal{F}(m_\x,m_\y,L_\x,L_{\x\y},L_\y)$. In Algorithm~4, the gradient complexity to find an $\epsilon$-saddle point
\begin{equation*}
O\left(\sqrt{\frac{L_\x}{m_\x}+\frac{L\cdot L_{\x\y}}{m_\x m_\y}+\frac{L_\y}{m_\y}}\cdot\ln^3\left(\frac{L^2}{m_\x m_\y}\right)\ln\left(\frac{L^2}{m_\x m_\y}\cdot\frac{L\Vert\z_0-\z^*\Vert^2}{\epsilon}\right)\right).
\end{equation*}
\end{manualtheorem}
The proof of this theorem is, for the most part, the same as the unconstrained version. Hence, we only need to point out parts of the original proof that need to be modified for the constrained case.

To start with, Theorem~\ref{thm:br} holds in the constrained case. The proof of Theorem~\ref{thm:br} only relies on the analysis of AGD and the Lipschitz properties in Fact~\ref{prop:facts}, and both still hold for constrained problems. (See \cite[Lemma B.2]{lin2020near} for the proof of Fact~\ref{prop:facts} in constrained problems.)

As for Theorem~\ref{thm:appa}, the key modification is about (\ref{equ:deltat}). As argued above, (\ref{equ:deltat}) uses the property $g(\x)-g(\x^*)\le \frac{L}{2}\Vert\x-\x^*\Vert^2$, which does not hold in constrained problems, since the optimum may not be a stationary point. Here, we would use Lemma~\ref{lem:projectedgda} to derive a similar bound to replace (\ref{equ:deltat}). Note that originally (\ref{equ:deltat}) is only used to derive $\delta_t\le  \frac{\hat L+2\beta}{2}\epsilon_t^2.$ Using Lemma~\ref{lem:projectedgda}, we can replace this with
\begin{align*}
    \delta_t &\le \max_{\y\in\mathcal{Y}}\left\{f(\x_t,\y)+\beta\Vert\x_t-\hat\x_{t-1}\Vert^2\right\}-\min_{\x\in\mathcal{X}}\left\{f(\x,\y_t)+\beta\Vert\x-\hat\x_{t-1}\Vert^2\right\}\\
    &\le 2\left(1+\frac{L_{\x\y}^2}{m_\x m_\y}\right)L\epsilon_t^2.
\end{align*}
Accordingly, we can change $C_0$ to $44\kappa\sqrt{\kappa}\cdot 2L\left(1+\frac{L_{\x\y}^2}{m_\x m_\y}\right)C_1^2$, and the assumption on $M$ to $M\ge 20\kappa \sqrt{\frac{4L}{m_\x}\left(1+\frac{L_{\x\y}^2}{m_\x m_\y}\right)}\left(1+\frac{L}{m_\y}\right)$. Then Theorem~\ref{thm:appa} would hold for the constrained case as well.

Finally, as for Theorem 3, we need to re-verify that $M_1$ and $M_2$ satisfy the new assumptions of $M$ in order to apply Theorem 2. Observe that
\begin{align*}
&20\cdot \frac{\beta_2}{m_\y}\cdot\sqrt{\frac{4(L+2\beta_1)}{m_\y}\cdot\left(1+\frac{L_{\x\y}^2}{2\beta_1 \cdot m_\y}\right)}\cdot\left(1+\frac{L}{m_\x}\right)\\
\le&  20\cdot \frac{L}{m_\y}\cdot\sqrt{\frac{18L^3}{m_\x m_\y^2}}\cdot\frac{2L}{m_\x}\le \frac{200L^3}{m_\x m_\y^2}=M_2,
\end{align*}
and that
\begin{align*}
20\cdot \frac{\beta_1}{m_\x}\cdot\sqrt{\frac{4L}{m_\x}\cdot\frac{2L_{\x\y}^2}{m_\x m_\y}}\cdot\frac{2L}{m_\y} \le \frac{80\sqrt{2}L^{3.5}}{m_\x^2 m_\y^{1.5}}\le M_1.
\end{align*}
It follows that the number of iterations needed to find $\Vert\z_T-\z^*\Vert\le \epsilon$ is
\begin{equation*}
O\left(\sqrt{\frac{L_\x}{m_\x}+\frac{L\cdot L_{\x\y}}{m_\x m_\y}+\frac{L_\y}{m_\y}}\cdot\ln^3\left(\frac{L^2}{m_\x m_\y}\right)\ln\left(\frac{L^2}{m_\x m_\y}\cdot\frac{\Vert\z_0-\z^*\Vert}{\epsilon}\right)\right).
\end{equation*}
It follows from Lemma~\ref{lem:projectedgda} that the duality gap of $(\hat\x,\hat\y)$ is at most
\begin{align*}
\max_{\y\in\mathcal{Y}} f(\hat\x,\y) - \min_{\x\in\mathcal{X}} f(\x,\hat\y) \le 2\left(1+\frac{L_{\x\y}^2}{\min\{m_\x,m_\y\}^2}\right)L\epsilon^2.
\end{align*}
Resetting $\epsilon$ to $\sqrt{\frac{\epsilon\min\{m_\x,m_\y\}^2}{4L^3}}$ proves the theorem.

\subsection{Properties of Projected Gradient}
\label{sec:projectedgda}

\begin{lemma}
\label{lem:projectedgd}
If $g:\mathcal{X}\to\R$ is $L$-smooth, $\x^*=\argmin_{\x\in\mathcal{X}}g(\x)$, $\hat\x=P_{\mathcal{X}}\left[\x_0-\frac{1}{L}\nabla g(\x_0)\right]$, then $\Vert \hat\x-\x^*\Vert\le \Vert \x_0-\x^*\Vert$, and $g(\hat\x)-g(\x^*)\le \frac{L}{2}\Vert\x_0-\x^*\Vert^2$.
\end{lemma}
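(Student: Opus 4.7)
The plan is to derive both conclusions from a single inequality — the descent lemma for projected gradient on smooth convex functions, which states that
\begin{equation*}
g(\hat\x) - g(\x^*) \le \frac{L}{2}\left(\Vert \x_0 - \x^*\Vert^2 - \Vert \hat\x - \x^*\Vert^2\right).
\end{equation*}
Once this is established, the second conclusion $g(\hat\x) - g(\x^*) \le \frac{L}{2}\Vert\x_0 - \x^*\Vert^2$ is immediate (drop the negative term), and the first conclusion $\Vert \hat\x - \x^*\Vert \le \Vert \x_0 - \x^*\Vert$ follows because the left-hand side of the displayed inequality is nonnegative by optimality of $\x^*$. Note that the lemma is used in the paper for functions of the form $\phi(\x) + \beta\Vert\x-\hat\x_{k-1}\Vert^2$, which are convex (indeed strongly convex), so convexity of $g$ can be taken as part of the intended setting; I will flag this assumption explicitly in the proof.

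To prove the key inequality, I would first apply $L$-smoothness at $\x_0$ to obtain
\begin{equation*}
g(\hat\x) \le g(\x_0) + \nabla g(\x_0)^T(\hat\x - \x_0) + \frac{L}{2}\Vert \hat\x - \x_0\Vert^2,
\end{equation*}
and then use convexity $g(\x_0) \le g(\x^*) + \nabla g(\x_0)^T(\x_0 - \x^*)$ to replace $g(\x_0)$ and convert $\nabla g(\x_0)^T(\hat\x - \x_0)$ into $\nabla g(\x_0)^T(\hat\x - \x^*)$. The remaining cross-term $\nabla g(\x_0)^T(\hat\x - \x^*)$ is then controlled by the variational characterization of the projection: since $\hat\x = P_{\mathcal{X}}[\x_0 - \frac{1}{L}\nabla g(\x_0)]$, one has $(L(\hat\x - \x_0) + \nabla g(\x_0))^T(\x^* - \hat\x) \ge 0$, i.e. $\nabla g(\x_0)^T(\hat\x - \x^*) \le L(\hat\x - \x_0)^T(\x^* - \hat\x)$. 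Plugging this in and applying the identity $2(\hat\x - \x_0)^T(\x^* - \hat\x) = \Vert \x^* - \x_0\Vert^2 - \Vert \x^* - \hat\x\Vert^2 - \Vert \hat\x - \x_0\Vert^2$ gives the key inequality after the $\Vert \hat\x - \x_0\Vert^2$ terms cancel.

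I do not expect any of these steps to be a serious obstacle — they are all textbook projected gradient manipulations. The only thing to be careful about is the logical order: one needs $g(\hat\x) \ge g(\x^*)$ in order to conclude the monotone-distance statement from the key inequality, and this uses that $\x^*$ is the constrained minimizer (which is given). No new notation or environments are needed beyond what the paper already defines.
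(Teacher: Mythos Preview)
Your proof is correct and in fact slightly cleaner than the paper's. Both proofs hinge on the same projection variational inequality $(L(\hat\x-\x_0)+\nabla g(\x_0))^T(\x^*-\hat\x)\ge 0$, but the paper treats the two conclusions separately: it first derives $\Vert\hat\x-\x^*\Vert\le\Vert\x_0-\x^*\Vert$ by expanding $\Vert\hat\x-\x^*\Vert^2$ and invoking Nesterov's Corollary~2.2.1, and then bounds $g(\hat\x)-g(\x^*)$ via convexity at $\hat\x$ (i.e.\ $g(\hat\x)-g(\x^*)\le\nabla g(\hat\x)^T(\hat\x-\x^*)$) followed by adding/subtracting $\nabla g(\x_0)$, Cauchy--Schwarz on the Lipschitz gradient, and an AM--GM cleanup. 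Your route---smoothness at $\x_0$, convexity at $\x_0$, then the projection inequality---lands directly on the single stronger inequality $g(\hat\x)-g(\x^*)\le\tfrac{L}{2}(\Vert\x_0-\x^*\Vert^2-\Vert\hat\x-\x^*\Vert^2)$, from which both conclusions drop out; this is the classical projected-gradient descent lemma and avoids the paper's extra AM--GM step. Your explicit flag that convexity is being used (the lemma statement only says $L$-smooth) is well placed: the paper's own proof also silently assumes convexity, since both Nesterov's corollary and the step $g(\hat\x)-g(\x^*)\le\nabla g(\hat\x)^T(\hat\x-\x^*)$ require it.
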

\begin{proof}
By Corollary 2.2.1~\cite{nesterov2013introductory}, $(\x_0-\hat\x)^T(\x_0-\x^*)\ge \frac{1}{2}\Vert\hat\x-\x_0\Vert^2$. Therefore
\begin{align*}
\Vert \hat\x-\x^*\Vert^2 &= \Vert (\x_0-\x^*)+(\hat\x-\x_0)\Vert^2\\
&= \Vert\x_0-\x^*\Vert^2+2(\x_0-\x^*)^T(\hat\x-\x_0)+\Vert\hat\x-\x_0\Vert^2\\
&\le \Vert \x_0-\x^*\Vert^2.
\end{align*}
Meanwhile, note that $\hat\x = \argmin_{\x\in\mathcal{X}}\left\{\nabla g(\x_0)^T\x+\frac{L}{2}\Vert\x-\x_0\Vert^2\right\}$. By the optimality condition and the $L$-strong convexity of $\nabla g(\x_0)^T\x+\frac{L}{2}\Vert\x-\x_0\Vert^2$, we have
\begin{align*}
    \nabla g(\x_0)^T\hat\x + \frac{L}{2}\Vert \hat\x-\x_0\Vert^2 + \frac{L}{2}\Vert \x_1-\x^*\Vert^2 \le \nabla g(\x_0)^T\x^*+\frac{L}{2}\Vert \x^*-\x_0\Vert^2.
\end{align*}
Thus
\begin{equation*}
    \nabla g(\x_0)^T(\hat\x-\x^*) \le \frac{L}{2}\left[\Vert \x^*-\x_0\Vert^2 - \Vert \hat\x-\x_0\Vert^2 - \Vert \hat\x-\x^*\Vert^2\right].
\end{equation*}
It follows that
\begin{align*}
g(\hat\x)-g(\x^*)&\le \nabla g(\hat\x)^T(\hat\x-\x^*)\\
&= \nabla g(\x_0)^T(\hat\x-\x^*) + (\nabla g(\hat\x) - \nabla g(\x_0))^T (\hat\x-\x^*) \\
&\le \frac{L}{2}\Vert \x^*-\x_0\Vert^2 \underbrace{- \frac{L}{2}\Vert \hat\x-\x_0\Vert^2 - \frac{L}{2}\Vert \hat\x-\x^*\Vert^2 + L\Vert\hat\x-\x_0\Vert\cdot \Vert\hat\x-\x^*\Vert}_{\le 0}\\
&\le \frac{L}{2}\Vert \x^*-\x_0\Vert^2.
\end{align*}
\end{proof}

We then prove Lemma~\ref{lem:projectedgda}.
\begin{proof}[Proof of Lemma~\ref{lem:projectedgda}]
This can be seen as a special case of Proposition 2.2~\cite{nemirovski2004prox}. Define the gradient descent-ascent field to be $F(\z):=\left[\begin{matrix}
\nabla_\x f(\x,\y)\\
-\nabla_\y f(\x,\y)
\end{matrix}\right]$. Note that the $\hat\z$ can also be written as
\begin{equation*}
    \hat\z=\argmin_{\z\in\mathcal{X}\times\mathcal{Y}}\left\{L\Vert\z-\z_0\Vert^2+F(\z_0)^T\z\right\}.
\end{equation*}
Now, define $\z'=(\x',\y')$ to be
\begin{align*}
    \x' &\gets P_{\mathcal{X}}\left[\x_0-\frac{1}{2L}\nabla_{\x}f(\hat\x,\hat\y)\right],\\
    \y' &\gets P_{\mathcal{Y}}\left[\y_0+\frac{1}{2L}\nabla_\y f(\hat\x,\hat\y)\right].
\end{align*}
In other words, $\z'=\argmin_{\z\in\mathcal{X}\times\mathcal{Y}}\left\{L\Vert\z-\z_0\Vert^2+F(\hat\z)^T\z\right\}.$ By the optimality condition and $2L$-strong convexity of $L\Vert\z-\z_0\Vert^2+F(\hat\z)^T\z$, for any $\z\in\mathcal{X}\times \mathcal{Y}$,
\begin{align*}
    L\Vert\z'-\z_0\Vert^2+F(\hat\z)^T\z' + L\Vert \z'-\z\Vert^2 \le L\Vert\z-\z_0\Vert^2+F(\hat\z)^T\z.
\end{align*}
Similarly, by optimality of $\hat\z$,
\begin{align*}
    L\Vert \hat\z-\z_0\Vert^2+F(\z_0)^T\hat\z + L\Vert \z'-\hat\z\Vert^2
    \le L\Vert \z'-\z_0\Vert^2+F(\z_0)^T\z'.
\end{align*}
Thus
\begin{align*}
F(\hat\z)^T(\hat\z-\z) &= F(\hat\z)^T(\z'-\z) + F(\hat\z)^T(\hat\z-\z')\\
&= F(\hat\z)^T(\z'-\z) + F(\z_0)^T(\hat\z-\z')+(F(\hat\z)-F(\z_0))^T(\hat\z-\z')\\
&\le L\left(\Vert\z-\z_0\Vert^2-\Vert\z'-\z_0\Vert^2-\Vert\z'-\z\Vert^2\right)+(F(\hat\z)-F(\z_0))^T(\hat\z-\z')\\
&\quad + L\left(\Vert \z'-\z_0\Vert^2-\Vert\hat\z-\z_0\Vert^2-\Vert\z'-\hat\z\Vert^2\right)\\
&\le L\left(\Vert\z-\z_0\Vert^2-\Vert\z'-\z\Vert^2\right)+2L\Vert\hat\z-\z_0\Vert\cdot\Vert\hat\z-\z'\Vert-L\Vert\hat\z-\z'\Vert^2-L\Vert \hat\z-\z_0\Vert^2\\
&\le L\left(\Vert\z-\z_0\Vert^2-\Vert\z'-\z\Vert^2\right).
\end{align*}
Here we used the fact that for any $\z_1$, $\z_2$, $\Vert F(\z_1)-F(\z_2)\Vert \le 2L\Vert \z_1-\z_2\Vert$. Note that (by convexity and concavity)
\begin{align*}
F(\hat\z)^T(\hat\z-\z) &= \nabla_\x f(\hat\x,\hat\y)^T(\hat\x-\x)-\nabla_\y f(\hat\x,\hat\y)^T(\hat\y-\y)\\
&\ge \left[f(\hat\x,\hat\y)-f(\x,\hat\y)\right]+\left[f(\hat\x,\y)-f(\hat\x,\hat\y)\right]\\
&\ge f(\hat\x,\y)-f(\x,\hat\y).
\end{align*}
If we choose $\x$ and $\y$ to be $\x^*(\hat\y)$ and $\y^*(\hat\x)$, we can see that

\begin{align*}
    \max_{\y\in\mathcal{Y}}f(\hat\x,\y)-\min_{\x\in\mathcal{X}}f(\x,\hat\y)&\le L\Vert \z - \z_0\Vert^2\\
    &\le 2L\Vert \z-\z^*\Vert^2+2L\Vert\z^*-\z_0\Vert^2\\
    &\le 2L\Vert \x^*(\hat\y)-\x^*\Vert^2+2L\Vert \y^*(\hat\x)-\y^*\Vert^2+2L\Vert\z^*-\z_0\Vert^2\\
    &\le \frac{2L_{\x\y}^2}{\min\{m_\x,m_\y\}^2}\cdot L\Vert \hat\z-\z^*\Vert^2+2L\Vert\z^*-\z_0\Vert^2.
\end{align*}

By Corollary 2.2.1~\cite{nesterov2013introductory}, $\left(\x_0-\hat\x\right)^T(\x_0-\x^*)\ge \frac{1}{2}\Vert \hat\x-\x_0\Vert^2$. Therefore
\begin{align*}
\Vert \hat\x-\x^*\Vert^2 &= \Vert (\x_0-\x^*)+(\hat\x-\x_0)\Vert^2\\
&= \Vert\x_0-\x^*\Vert^2+2(\x_0-\x^*)^T(\hat\x-\x_0)+\Vert\hat\x-\x_0\Vert^2\\
&\le \Vert \x_0-\x^*\Vert^2.
\end{align*}
Similarly, $\Vert\hat\y-\y^*\Vert\le \Vert\y_0-\y^*\Vert$. Thus
$$\Vert\hat\z-\z^*\Vert^2=\Vert\hat\x-\x^*\Vert^2+\Vert\hat\y-\y^*\Vert^2\le \Vert\z_0-\z^*\Vert^2\le \epsilon^2.$$
It follows that
\begin{equation*}
    \max_{\y\in\mathcal{Y}}f(\hat\x,\y)-\min_{\x\in\mathcal{X}}f(\x,\hat\y)\le 2L\cdot \left(\frac{L_{\x\y}^2}{\min\{m_\x,m_\y\}^2}+1\right)\epsilon^2.
\end{equation*}
\end{proof}

\section{Implications of Theorem 3}
\label{append:implication}
In this section, we discuss how Theorem~\ref{thm:phr} implies improved bounds for strongly convex-concave problems and convex-concave problems via reductions established in~\cite{lin2020near}.

Let us consider minimax optimization problem $\min_{\x\in\mathcal{X}}\max_{\y\in\mathcal{Y}}f(\x,\y)$, where $f(\x,\y)$ is $m_\x$-strongly convex with respect to $\x$, concave with respect to $\y$, and $(L_\x,L_{\x\y},L_\y)$-smooth. Here, we assume that $\mathcal{X}$ and $\mathcal{Y}$ are bounded sets, with diameters $D_\x=\max_{\x,\x'\in\mathcal{X}}\Vert \x-\x'\Vert$ and $D_\y=\max_{\y,\y'\in\mathcal{Y}}\Vert\y-\y'\Vert$.

Following~\cite{lin2020near}, let us consider the function
\begin{equation*}
    f_{\epsilon,\y}(\x,\y):=f(\x,\y)-\frac{\epsilon\Vert\y-\y_0\Vert^2}{2D_\y^2}.
\end{equation*}

Recall that $(\hat\x,\hat\y)$ is an $\epsilon$-saddle point of $f$ if $\max_{\y\in\mathcal{Y}} f(\hat\x,\y)-\min_{\x\in\mathcal{X}} f(\x,\hat\y)\le\epsilon$. We now show that a $(\epsilon/2)$-saddle point of $ f_{\epsilon,\y}$ would be an $\epsilon$-saddle point of $f$. Let $\x^*(\cdot):=\argmin_{\x\in\mathcal{X}} f(\x,\cdot)$ and $\y^*(\cdot):=\argmax_{\y\in\mathcal{Y}} f(\cdot,\y)$. Obviously, for any $\x\in\mathcal{X}$, $\y\in\mathcal{Y}$,
$$f(\x,\y)-\frac{\epsilon}{2}\le f_{\epsilon,\y}(\x,\y)\le f(\x,\y).$$
Thus, if $(\hat\x,\hat\y)$ is a $(\epsilon/2)$-saddle point of $f_{\epsilon,\y}$, then
\begin{align*}
    f(\hat\x,\y^*(\hat\x)) &\le f_{\epsilon,\y}(\hat\x,\y^*(\hat\x))+\frac{\epsilon}{2}\le \max_{\y\in\mathcal{Y}} f_{\epsilon,\y}(\hat\x,\y)+\frac{\epsilon}{2},\\
    f(\x^*(\hat\y),\hat\y)&\ge f_{\epsilon,\y}(\x^*(\hat\y),\hat\y) \ge \min_{\x\in\mathcal{X}} f_{\epsilon,\y}(\x,\hat\y).
\end{align*}
It immediately follows that
\begin{equation*}
\max_{\y\in\mathcal{Y}} f(\hat\x,\y)-\min_{\x\in\mathcal{X}} f(\x,\hat\y)\le \frac{\epsilon}{2}+\max_{\y\in\mathcal{Y}} f_{\epsilon,\y}(\hat\x,\y)-\min_{\x\in\mathcal{X}} f_{\epsilon,\y}(\x,\hat\y)\le \epsilon.
\end{equation*}
Thus, to find an $\epsilon$-saddle point of $f$, we only need to find an $(\epsilon/2)$-saddle point of $f_{\epsilon,\y}$. We can now prove Corollary~\ref{cor:scc} by reducing to (the constrained version of) Theorem~\ref{thm:phr}.

Observe that $f_{\epsilon,\y}$ belongs to $\mathcal{F}(m_\x, \frac{\epsilon}{D_\y^2},L_\x,L_{\x\y},L_\y+\frac{\epsilon}{D_\y^2})$. Thus, by Theorem~\ref{thm:phr}, the gradient complexity of finding a $(\epsilon/2)$-saddle point in $f_{\epsilon,\y}$ is~
\footnote{Here it is assumed that $\epsilon$ is sufficiently small, i.e. $\epsilon\le \max\{L_{\x\y},m_\x\}D_\y^2$.}
$${O}\left(\sqrt{\frac{L_\x}{m_\x}+\left(\frac{L\cdot L_{\x\y} }{m_\x}+L_\y\right)\cdot\frac{D_\y^2}{\epsilon}}\cdot\ln^4\left(\frac{(D_\x+D_\y)^2L^2}{m_\x \epsilon}\right)\right)=\Tilde{O}\left(\sqrt{\frac{m_\x \cdot L_\y+L\cdot L_{\x\y}}{m_\x \epsilon}}\right),$$
which proves Corollary~\ref{cor:scc}.
\addtocounter{corollary}{-3}
\begin{corollary}
If $f(\x,\y)$ is $(L_\x,L_{\x\y},L_\y)$-smooth and $m_\x$-strongly convex w.r.t. $\x$, via reduction to Theorem~\ref{thm:phr}, the gradient complexity of finding an $\epsilon$-saddle point is $\tilde{O}\Bigl(\sqrt{\frac{m_\x\cdot L_\y+L\cdot L_{\x\y}}{m_\x \epsilon}}\Bigr)$.
\end{corollary}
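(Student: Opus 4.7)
My plan is to apply a standard smoothing reduction: since Theorem~\ref{thm:phr} gives a rate that scales gracefully when $m_\y$ is small, I would artificially make $f$ strongly concave in $\y$ by subtracting a tiny quadratic penalty and then invoke Theorem~\ref{thm:phr} on the regularized minimax problem.

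Concretely, I would fix some $\y_0\in\mathcal{Y}$ and define $f_{\epsilon}(\x,\y):=f(\x,\y)-\frac{\epsilon}{2D_\y^2}\Vert\y-\y_0\Vert^2$. This function is $m_\x$-strongly convex in $\x$, $\tilde m_\y:=\epsilon/D_\y^2$-strongly concave in $\y$, and $(L_\x,L_{\x\y},L_\y+\tilde m_\y)$-smooth. Since the added regularizer is bounded by $\epsilon/2$ on $\mathcal{Y}$, we have $f_\epsilon \le f \le f_\epsilon + \epsilon/2$ pointwise, so any $(\epsilon/2)$-saddle point of $f_\epsilon$ is automatically an $\epsilon$-saddle point of $f$. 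At this point $f_\epsilon$ lies in $\mathcal{F}(m_\x,\tilde m_\y,L_\x,L_{\x\y},L_\y+\tilde m_\y)$, and I can invoke the constrained version of Theorem~\ref{thm:phr} on it with target accuracy $\epsilon/2$.

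Plugging the parameters into Theorem~\ref{thm:phr} gives a gradient count whose dominant quantity is
\[
\sqrt{\frac{L_\x}{m_\x}+\frac{L\cdot L_{\x\y}}{m_\x\tilde m_\y}+\frac{L_\y+\tilde m_\y}{\tilde m_\y}}
=\sqrt{\frac{L_\x}{m_\x}+\frac{L\cdot L_{\x\y}D_\y^2}{m_\x\epsilon}+\frac{L_\y D_\y^2}{\epsilon}+1}.
\]
For small $\epsilon$ the middle two terms dominate, giving $\tilde O\bigl(\sqrt{(m_\x L_\y+L\cdot L_{\x\y})/(m_\x\epsilon)}\bigr)$ after hiding $D_\y$ and logarithmic factors in $\tilde O$; note that $L=\max\{L_\x,L_{\x\y},L_\y+\tilde m_\y\}$ is at most a constant multiple of $\max\{L_\x,L_{\x\y},L_\y\}$ provided $\tilde m_\y\le L_\y$, which is the regime of interest (otherwise the problem is trivial).

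The main subtleties are bookkeeping rather than technical. First, Theorem~\ref{thm:phr} as stated bounds $\Vert\z_T-\z^*\Vert$, whereas the corollary asks for a duality-gap guarantee on a compact set; I would invoke the constrained modification of Theorem~\ref{thm:phr} described earlier in the appendix, which appends one projected gradient descent-ascent step and, via Lemma~\ref{lem:projectedgda}, turns a small distance to the saddle point into a duality gap at the cost of an $\tilde O(1)$ factor. Second, one must verify that the initial radius $\Vert\z_0-\z^*\Vert$ is at most $O(D_\x+D_\y)$, so the $\ln(\Vert\z_0-\z^*\Vert/\epsilon)$ factor remains polylogarithmic in the problem parameters and $1/\epsilon$. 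Once both of these are handled the claimed bound follows directly.
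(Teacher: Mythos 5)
Your proposal is correct and is essentially identical to the paper's own proof: the same regularizer $f_{\epsilon,\y}(\x,\y)=f(\x,\y)-\frac{\epsilon}{2D_\y^2}\Vert\y-\y_0\Vert^2$, the same sandwich argument showing an $(\epsilon/2)$-saddle point of the regularized problem is an $\epsilon$-saddle point of $f$, and the same plug-in of $\tilde m_\y=\epsilon/D_\y^2$ into the constrained version of Theorem~\ref{thm:phr}. The bookkeeping points you flag (projected GDA step via Lemma~\ref{lem:projectedgda}, bounding the initial radius by $D_\x+D_\y$, and requiring $\epsilon$ small enough that $\tilde m_\y\le L_\y$) are exactly the caveats the paper handles.
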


In comparison, Lin et al.'s result in this setting is $\Tilde{O}\left(\sqrt{\frac{L^2}{m_\x \epsilon}}\right)$. Meanwhile a lower bound for this problem has been shown to be $\Omega\Bigl(\sqrt{\frac{L_{\x\y}^2}{m_\x \epsilon}}\Bigr)$~\cite{zhang2019lower}. It can be seen that when $L_{\x\y}\ll L$, our bound is a significant improvement over Lin et al.'s result, as $m_\x\cdot L_\y+L\cdot L_{\x\y}\ll L^2$. 

Similarly, if $f:\mathcal{X}\times\mathcal{Y}\to \R$ is convex with respect to $\x$, concave with respect to $\y$ and $(L_\x,L_{\x\y},L_\y)$-smooth, we can consider the function
\begin{equation*}
    f_\epsilon(\x,\y):=f(\x,\y)+\frac{\epsilon\Vert\x-\x_0\Vert^2}{4D_\x^2}-\frac{\epsilon\Vert\y-\y_0\Vert^2}{4D_\y^2}.
\end{equation*}
It can be shown that for any $\hat\x\in\mathcal{X}$,
\begin{align*}
    \max_{\y\in\mathcal{Y}}\left\{f(\hat\x,\y)+\frac{\epsilon\Vert\hat\x-\x_0\Vert^2}{4D_\x^2}-\frac{\epsilon\Vert\y-\y_0\Vert^2}{4D_\y^2}\right\}\ge \max_{\y\in\mathcal{Y}} f(\hat\x,\y)-\frac{\epsilon}{4}.
\end{align*}
Similarly, for any $\hat\y\in\mathcal{Y}$,
\begin{align*}
    \min_{\x\in\mathcal{X}} \left\{f(\x,\hat\y)+\frac{\epsilon\Vert\x-\x_0\Vert^2}{4D_\x^2}-\frac{\epsilon \Vert\hat\y-\y_0\Vert^2}{4D_\y^2}\right\}\le \min_{\x\in\mathcal{Y}} f(\x,\hat\y)+\frac{\epsilon}{4}.
\end{align*}
Therefore, if $(\hat\x,\hat\y)$ is an $(\epsilon/2)$-saddle point of $f_\epsilon$, it is an $\epsilon$-saddle point of $f$, as
\begin{equation*}
    \max_{\y\in\mathcal{Y}} f(\hat\x,\y)-\min_{\x\in\mathcal{X}}f(\x,\hat\y)\le \frac{\epsilon}{2}+\max_{\y\in\mathcal{Y}} f_\epsilon(\hat\x,\y)-\min_{\x\in\mathcal{X}}f_\epsilon(\x,\hat\y)\le \epsilon.
\end{equation*}
Observe that $f_\epsilon$ belongs to $\mathcal{F}(\frac{\epsilon}{2D_\x^2},\frac{\epsilon}{2D_\y^2},L_\x+\frac{\epsilon}{2D_\x^2},L_{\x\y},L_\y+\frac{\epsilon}{2D_\y^2})$. Thus, by Theorem~\ref{thm:phr}, the gradient complexity of finding an $(\epsilon/2)$-saddle point of $f_\epsilon$ is
\begin{equation*}
O\left(\left(\sqrt{\frac{L_\x D_\x^2+L_\y D_\y^2}{\epsilon}}+\frac{D_\x D_\y \sqrt{L\cdot L_{\x\y}}}{\epsilon}\right)\cdot \ln^4\left(\frac{L (D_\x +D_\y)^2}{\epsilon}\right)\right),
\end{equation*}
which proves Corollary~\ref{cor:scc2}.
\begin{corollary}
If $f(\x,\y)$ is $(L_\x,L_{\x\y},L_\y)$-smooth and convex-concave, via reduction to Theorem~\ref{thm:phr}, the gradient complexity to produce an $\epsilon$-saddle point is $\tilde{O}\Bigl(\sqrt{\frac{L_\x+L_\y}{\epsilon}}+\frac{\sqrt{L\cdot L_{\x\y}}}{\epsilon}\Bigr)$.
\end{corollary}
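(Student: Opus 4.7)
The plan is to apply a standard quadratic-regularization reduction that turns the convex-concave problem into a strongly convex-strongly concave one, and then invoke the constrained version of Theorem~\ref{thm:phr}. Specifically, fix any reference point $(\x_0,\y_0)\in\xx\times\yy$ and define
$$f_\epsilon(\x,\y) := f(\x,\y) + \frac{\epsilon\,\Vert\x-\x_0\Vert^2}{4D_\x^2} - \frac{\epsilon\,\Vert\y-\y_0\Vert^2}{4D_\y^2},$$
where $D_\x,D_\y$ are the diameters of $\xx$ and $\yy$. Then $f_\epsilon \in \mathcal{F}\bigl(\tfrac{\epsilon}{2D_\x^2},\,\tfrac{\epsilon}{2D_\y^2},\,L_\x+\tfrac{\epsilon}{2D_\x^2},\,L_{\x\y},\,L_\y+\tfrac{\epsilon}{2D_\y^2}\bigr)$.

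The first step is to verify that an $(\epsilon/2)$-saddle of $f_\epsilon$ is automatically an $\epsilon$-saddle of $f$. Since $\Vert\x-\x_0\Vert\le D_\x$ and $\Vert\y-\y_0\Vert\le D_\y$ on $\xx\times\yy$, the regularization has pointwise magnitude at most $\epsilon/4$. Taking the supremum over $\y$ and infimum over $\x$ then yields
$$\max_{\y\in\yy} f(\hat\x,\y) \le \max_{\y\in\yy} f_\epsilon(\hat\x,\y) + \tfrac{\epsilon}{4}, \qquad \min_{\x\in\xx} f(\x,\hat\y) \ge \min_{\x\in\xx} f_\epsilon(\x,\hat\y) - \tfrac{\epsilon}{4},$$
so the duality gap of $f$ at $(\hat\x,\hat\y)$ exceeds that of $f_\epsilon$ by at most $\epsilon/2$.

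The second step is to invoke the constrained version of Theorem~\ref{thm:phr} on $f_\epsilon$ with target duality gap $\epsilon/2$, using $m_\x = \tfrac{\epsilon}{2D_\x^2}$, $m_\y = \tfrac{\epsilon}{2D_\y^2}$, $L_\x' \le 2L_\x$, $L_\y' \le 2L_\y$ (which holds once $\epsilon \le L_\x D_\x^2 + L_\y D_\y^2$, the regime of interest), and the same $L_{\x\y}$. Plugging these into $\tilde O\bigl(\sqrt{L_\x'/m_\x + L\cdot L_{\x\y}/(m_\x m_\y) + L_\y'/m_\y}\bigr)$, the three terms inside the square root become $\Theta(L_\x D_\x^2/\epsilon)$, $\Theta(L\cdot L_{\x\y} D_\x^2 D_\y^2/\epsilon^2)$, and $\Theta(L_\y D_\y^2/\epsilon)$. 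Using $\sqrt{a+b}\le\sqrt{a}+\sqrt{b}$ and absorbing $D_\x$ and $D_\y$ into the $\tilde O$ notation gives
$$\tilde O\!\left(\sqrt{\frac{L_\x+L_\y}{\epsilon}} + \frac{\sqrt{L\cdot L_{\x\y}}}{\epsilon}\right),$$
as claimed.

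The main obstacle is minor: Theorem~\ref{thm:phr}, in the unconstrained statement, controls only $\Vert\z-\z^*\Vert$, whereas the corollary asks for a duality-gap guarantee, and in the constrained case a saddle point need not be a stationary point, so the implication from distance to gap is not automatic. This is handled entirely by the projected gradient-descent–ascent refinement already built into the constrained version of Algorithm~4 (cf.\ Lemma~\ref{lem:projectedgda}), which turns an $O(\sqrt{\epsilon/L})$ distance bound into an $O(\epsilon)$ duality-gap bound at the cost of only an additional logarithmic factor. Beyond that, the proof is a parameter substitution into Theorem~\ref{thm:phr}.
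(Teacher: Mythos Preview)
Your proposal is correct and follows essentially the same approach as the paper: regularize with $\pm\tfrac{\epsilon}{4D^2}\Vert\cdot\Vert^2$ terms to obtain $f_\epsilon\in\mathcal{F}\bigl(\tfrac{\epsilon}{2D_\x^2},\tfrac{\epsilon}{2D_\y^2},L_\x+\tfrac{\epsilon}{2D_\x^2},L_{\x\y},L_\y+\tfrac{\epsilon}{2D_\y^2}\bigr)$, verify that an $(\epsilon/2)$-saddle of $f_\epsilon$ is an $\epsilon$-saddle of $f$ via the pointwise $\epsilon/4$ bound on the regularizers, and then plug the resulting parameters into the constrained version of Theorem~\ref{thm:phr}. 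Your remark about the distance-to-duality-gap conversion via Lemma~\ref{lem:projectedgda} is also exactly how the paper handles the constrained case.
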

In comparison, Lin et al.'s result for this setting is $\Tilde{O}\left(\frac{L}{\epsilon}\right)$, and the classic result for ExtraGradient is $O\left(\frac{L}{\epsilon}\right)$~\cite{nemirovski2004prox}. Meanwhile, a lower bound for this setting has shown to be $\Omega\left(\sqrt{\frac{L_\x}{\epsilon}}+\frac{L_{\x\y}}{\epsilon}\right)$~\cite{ouyang2019lower}. Again, our result can be a significant improvement over Lin et al.'s result if $L_{\x\y}\ll L$, and is closer to the lower bound.

\section{Proof of Theorem 4}
\label{append:thm4proof}
The details of RHSS($k$)
can be found in Algorithm 5. We will start by proving several useful lemmas.

\noindent
{\bf Lemma~1.}
(\cite{bai2003hermitian})
{\em Define $M(\eta):=\left(\eta\matp+\mathbf{S}\right)^{-1}\left(\eta\matp-\mathbf{G}\right)\left(\eta\matp+\mathbf{G}\right)^{-1}\left(\eta\matp-\mathbf{S}\right)$. Then
	\begin{equation*}
	\rho(\mathbf{M}(\eta))\le \Vert \mathbf{M}(\eta)\Vert_2\le \max_{\lambda_i\in sp(\matp^{-1}\mathbf{G})}\left|\frac{\lambda_i-\eta}{\lambda_i+\eta}\right|<1.
	\end{equation*}
}

\addtocounter{algorithm}{0}
\begin{algorithm}[t]
	\caption{RHSS($k$) (Recursive Hermitian-skew-Hermitian Split)}
	\begin{algorithmic}
		\Require Initial point $[\x_0;\y_0]$, precision $\epsilon$, parameters $m_\x$, $m_\y$, $L_{\x\y}$
		\State $t\gets 0$, $M_1\gets \frac{192L^5}{m_\x^2 m_\y^3}$, $M_2\gets \frac{16L_{\x\y}}{m_\y}$, $\alpha\gets \frac{m_\x}{m_\y}$, $\beta\gets L_{\x\y}^{-\frac{2}{k}}m_\y^{-\frac{k-2}{k}}$, $\eta\gets L_{\x\y}^{\frac{1}{k}}m_\y^{1-\frac{1}{k}}$, $\tilde{\epsilon}\gets \frac{m_\x\epsilon}{L_{\x\y}+L_\x}$
		\Repeat
		$$\left[\begin{matrix}
		\mathbf{r}_1\\
		\mathbf{r}_2
		\end{matrix}\right]\gets \left[\begin{matrix}
		\eta\left(\alpha\iden+\beta\ma\right) & -\mb \\ \mb^T & \eta\left(\iden+\beta\mc\right)
		\end{matrix}\right]\left[\begin{matrix}
		\mathbf{x}_t\\
		\mathbf{y}_t
		\end{matrix}\right]+\left[\begin{matrix}
		-\mathbf{u}\\
		\mathbf{v}
		\end{matrix}\right].$$
		\State Call conjugate gradient to compute
		$$\left[\begin{matrix}
		\x_{t+1/2}\\
		\y_{t+1/2}
		\end{matrix}\right]\gets \text{CG}\left(\left[\begin{matrix}
		\eta\left(\alpha\iden+\beta\ma\right)+\ma & \\  & \eta\left(\iden+\beta\mc\right)+\mc
		\end{matrix}\right],\left[\begin{matrix}
		\mathbf{r}_1\\
		\mathbf{r}_2
		\end{matrix}\right],\left[\begin{matrix}
		\x_t\\
		\y_t
		\end{matrix}\right], \frac{1}{M_1}\right).$$
		
		\State Compute
		$$\left[\begin{matrix}
		\mathbf{{w}}_1\\
		\mathbf{{w}}_2
		\end{matrix}\right]\gets \left[\begin{matrix}
		\eta\alpha\iden+\eta\beta\ma-\ma & 0 \\ 0 & \eta\left(\iden+\beta\mc\right)-\mc
		\end{matrix}\right]\left[\begin{matrix}
		{\mathbf{x}}_{t+1/2}\\
		{\y}_{t+1/2}
		\end{matrix}\right]+\left[\begin{matrix}
		-\mathbf{u}\\
		\mathbf{v}
		\end{matrix}\right]$$
		\State Call RHSS($k-1$) with initial point $[\x_t;\y_t]$ and precision $1/M_2$ to solve
		$$\left[\begin{matrix}
		\x_{t+1}\\
		\y_{t+1}
		\end{matrix}\right]\gets \left[\begin{matrix}
		\eta\left(\alpha\iden+\beta\ma\right) & \mb\\  -\mb^T& \eta\left(\iden+\beta\mc\right)
		\end{matrix}\right]^{-1}\left[\begin{matrix}
		\mathbf{w}_1\\
		\mathbf{w}_2
		\end{matrix}\right].$$
		\State $t\gets t+1$
		\Until{$\Vert \jacobian\z_{t}-\bb\Vert\le \tilde{\epsilon}\Vert\jacobian\z_0-\bb\Vert$}
	\end{algorithmic}
\end{algorithm}

\begin{algorithm}
	\caption{The Conjugate Gradient Algorithm: CG($\ma$,$\mathbf{b}$,$\x_0$,$\epsilon$)~\cite{allaire2008numerical}
	}
	\begin{algorithmic}
		\State $\mathbf{r}_0\gets \mathbf{b}-\ma\x_0$, $\mathbf{p}_0\gets \mathbf{r}_0$, $k\gets 0$
		\Repeat
		    \State $\alpha_k\gets \frac{\mathbf{r}_k^T\mathbf{r}_k}{\mathbf{p}_k^T\ma\mathbf{p}_k}$
		    \State $\x_{k+1}\gets \x_k+\alpha_k \mathbf{p}_k$
		    \State $\mathbf{r}_{k+1}\gets \mathbf{r}_k-\alpha_k\ma\mathbf{p}_k$
		    \State $\beta_k\gets \frac{\mathbf{r}_{k+1}^T\mathbf{r}_{k+1}}{\mathbf{r}_k^T\mathbf{r}_k}$
		    \State $\mathbf{p}_{k+1}\gets \mathbf{r}_{k+1}+\beta_k\mathbf{p}_k$
		    \State $k\gets k+1$
		\Until{{$\Vert \mathbf{r}_k\Vert\le \epsilon\Vert \bb-\ma\x_0\Vert$}}
		\State Return $\x$
	\end{algorithmic}
\end{algorithm}

\begin{proof}[Proof of Lemma 1]
We provide a proof for completeness.
First, observe that
\begin{align*}
\mathbf{M}(\eta)&=(\eta \matp + \mathbf{S})^{-1}(\eta \matp - \mathbf{G})(\eta \matp + \mg)^{-1}(\eta \matp - \mathbf{S})\\
&= \matp^{-\frac{1}{2}}(\eta \iden + \matp^{-\frac{1}{2}}\mathbf{S}\matp^{-\frac{1}{2}})^{-1}(\eta \iden - \matp^{-\frac{1}{2}}\mathbf{G}\matp^{-\frac{1}{2}})(\eta \iden + \matp^{-\frac{1}{2}}\mg\matp^{-\frac{1}{2}})^{-1}(\eta \iden - \matp^{-\frac{1}{2}}\mathbf{S}\matp^{-\frac{1}{2}})\matp^{\frac{1}{2}}.
\end{align*}
Let $\hat{\mathbf{G}}:=\matp^{-\frac{1}{2}}\mathbf{G}\matp^{-\frac{1}{2}}$, $\hat{\mathbf{S}}:=\matp^{-\frac{1}{2}}\mathbf{S}\matp^{-\frac{1}{2}}$. Then $\mathbf{M}(\eta)$ is similar to
$$(\eta\iden+\hat{\mathbf{S}})^{-1}(\eta\iden-\hat{\mathbf{G}})(\eta\iden+\hat{\mathbf{G}})^{-1}(\eta\iden-\hat{\mathbf{S}}),$$
which is then similar to
$$(\eta\iden-\hat{\mathbf{G}})(\eta\iden+\hat{\mathbf{G}})^{-1}(\eta\iden-\hat{\mathbf{S}})(\eta\iden+\hat{\mathbf{S}})^{-1}.$$
The key observation is that $(\eta\iden-\hat{\mathbf{S}})(\eta\iden+\hat{\mathbf{S}})^{-1}$ is orthogonal, since
\begin{align*}
&\left((\eta\iden+\hat{\mathbf{S}})^{-1}\right)^T(\eta\iden-\hat{\mathbf{S}})^T	(\eta\iden-\hat{\mathbf{S}})(\eta\iden+\hat{\mathbf{S}})^{-1}\\
=&(\eta\iden-\hat{\mathbf{S}})^{-1}(\eta\iden+\hat{\mathbf{S}})	(\eta\iden-\hat{\mathbf{S}})(\eta\iden+\hat{\mathbf{S}})^{-1}\\
=&(\eta\iden-\hat{\mathbf{S}})^{-1}(\eta\iden-\hat{\mathbf{S}})	(\eta\iden+\hat{\mathbf{S}})(\eta\iden+\hat{\mathbf{S}})^{-1}=\iden.
\end{align*}
Therefore
\begin{align*}
\rho(\mathbf{M}(\eta))&\le \Vert(\eta\iden-\hat{\mathbf{G}})(\eta\iden+\hat{\mathbf{G}})^{-1}(\eta\iden-\hat{\mathbf{S}})(\eta\iden+\hat{\mathbf{S}})^{-1}\Vert_2\\
&\le \Vert(\eta\iden-\hat{\mathbf{G}})(\eta\iden+\hat{\mathbf{G}})^{-1}\Vert_2\cdot \Vert(\eta\iden-\hat{\mathbf{S}})(\eta\iden+\hat{\mathbf{S}})^{-1}\Vert_2\\
&=\Vert(\eta\iden-\hat{\mathbf{G}})(\eta\iden+\hat{\mathbf{G}})^{-1}\Vert_2\\
&=\max_{\lambda_i\in sp(\hat{\mathbf{G}})}\left|\frac{\lambda_i-\eta}{\lambda_i+\eta}\right|=\max_{\lambda_i\in sp(\matp^{-1}\mathbf{G})}\left|\frac{\lambda_i-\eta}{\lambda_i+\eta}\right|.
\end{align*}
\end{proof}

We now proceed to state some useful lemmas for the proof of Theorem~\ref{thm:rhss}.

\begin{lemma}
\label{lem:facts}
The following statements about the eigenvalues and singular values of matrices hold:
\begin{enumerate}
\item The singular values of $\jacobian$ fall in $[m_\x, L_{\x\y}+L_\x]$;
\item The condition number of $\eta\matp+\mathbf{G}$ is at most $\frac{3L_\x}{m_\x}\left(\frac{m_\y}{L_{\x\y}}\right)^{\frac{1}{k}}$;
\item The condition number of $\eta\matp+\mathbf{G}$ is at most $L_\x/m_\x$.
\item The eigenvalues of $\eta(\alpha\iden+\beta\ma)$ fall in $[\eta\alpha, 2\eta\beta L_\x]$. The eigenvalues of $\eta(\iden+\beta\mc)$ fall in $[\eta, 2\eta\beta L_\x]$.
\end{enumerate}
\end{lemma}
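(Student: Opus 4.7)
The plan is to dispatch each of the four claims by direct spectral computation, exploiting the specific parameter choices. Unpacking the definitions yields the two identities $\eta\beta = (m_\y/L_{\x\y})^{1/k}$ and $\eta\alpha = m_\x(L_{\x\y}/m_\y)^{1/k}$, both of which I expect to use repeatedly; I also record that under the section's standing assumptions, $m_\x \le m_\y \le L_{\x\y} \le L_\x = L_\y$, which I will freely invoke.

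For claim 1, I would split $\jacobian = \mathbf{G}+\mathbf{S}$ into its Hermitian and skew-Hermitian parts. Since $\z^\top\mathbf{S}\z = 0$, the Rayleigh quotient of $\jacobian$ equals that of $\mathbf{G}$, so $\z^\top\jacobian\z \ge m_\x\Vert\z\Vert^2$, and Cauchy--Schwarz then delivers $\Vert\jacobian\z\Vert \ge m_\x\Vert\z\Vert$ and hence the lower singular-value bound. The upper bound $L_\x+L_{\x\y}$ follows immediately from the triangle inequality $\Vert\jacobian\Vert_2 \le \Vert\mathbf{G}\Vert_2 + \Vert\mathbf{S}\Vert_2$.

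For claim 4, the matrix $\eta(\alpha\iden+\beta\ma)$ has eigenvalues $\eta(\alpha+\beta\lambda_i)$ for $\lambda_i \in [m_\x,L_\x]$, so the lower bound $\eta\alpha$ is immediate. The upper bound $2\eta\beta L_\x$ reduces to $\alpha \le \beta L_\x$, equivalently $(L_{\x\y}/m_\y)^{2/k} \le L_\x/m_\x$; this holds because $L_{\x\y}/m_\y \le L_\x/m_\x$ (since $m_\x \le m_\y$ and $L_{\x\y}\le L_\x$) and $k \ge 2$ only strengthens it. The statement about $\eta(\iden+\beta\mc)$ is handled analogously: its upper bound reduces to $1 \le \beta L_\y$, i.e.\ $L_{\x\y}^{2/k}m_\y^{(k-2)/k} \le L_\y$, which follows since the left side is a geometric mean of $L_{\x\y}$ and $m_\y$ and hence bounded by $\max\{L_{\x\y},m_\y\} \le L_\y$.

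For claims 2 and 3, the matrix $\eta\matp + \mathbf{G}$ is block diagonal with blocks having eigenvalues $\eta\alpha + (\eta\beta+1)\lambda_i(\ma)$ and $\eta + (\eta\beta+1)\lambda_i(\mc)$. For claim 2, I would locate the overall minimum at $\eta\alpha = m_\x(L_{\x\y}/m_\y)^{1/k}$ (the other block's minimum is at least $\eta \ge \eta\alpha$) and bound the overall maximum by $\eta + L_\x + \eta\beta L_\x \le 3L_\x$, using that each summand is bounded by $L_\x$. For claim 3, the sharper bound $L_\x/m_\x$ requires noticing that $\alpha \le 1$ and $m_\x\le m_\y$ force the overall min to sit in the first block and the overall max in the second; rewriting $\lambda_2^{\max}/\lambda_1^{\min}$ using $\eta = \eta\alpha\cdot(m_\y/m_\x)$ and $L_\x = m_\x\cdot(L_\x/m_\x)$ expresses it as a convex combination of $m_\y/m_\x$ and $L_\x/m_\x$, both of which are at most $L_\x/m_\x$. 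The main subtlety I expect is this cross-block comparison in claim 3, which is where the convex-combination trick is essential rather than ornamental; everything else is essentially bookkeeping against the exponent choices in $\alpha$, $\beta$, $\eta$.
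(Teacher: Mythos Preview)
Your proof is correct and follows essentially the same approach as the paper: split $\jacobian=\mathbf{G}+\mathbf{S}$ and use the Rayleigh quotient for claim~1, and for claims~2--4 do direct eigenvalue bookkeeping on the block-diagonal $\eta\matp+\mathbf{G}$ via the identities $\eta\beta=(m_\y/L_{\x\y})^{1/k}$, $\eta\alpha=m_\x(L_{\x\y}/m_\y)^{1/k}$, together with the mediant/convex-combination bound for claim~3. Your argument for claim~2 (bounding the numerator by $3L_\x$ term-by-term and the denominator by $\eta\alpha$) is a bit more direct than the paper's split-and-recombine computation, and your explicit use of $L_{\x\y}\le L_\x$ merely surfaces an assumption the paper's own proof also relies on (through $\beta L_\x\ge 1$).
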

\begin{proof}[Proof of Lemma \ref{lem:facts}]
1. Consider an arbitrary $\x\in \R^{n+m}$ with $\Vert \x\Vert_2=1$. Construct a set of orthonormal vectors $\{\x_1,\cdots,\x_{n+m}\}$ with $\x_1=\x$. Then
	\begin{align*}
	\x^T\jacobian^T\jacobian\x &= \sum_{i=1}^{n+m}\x^T\jacobian^T\x_i\x_i^T\jacobian\x=\sum_{i=1}^{n+m}\left(\x^T\jacobian^T\x_i\right)^2\ge \left(\x^T\jacobian^T\x\right)^2.
	\end{align*} 
	Since $\jacobian=\mathbf{G}+\mathbf{S}=\left[\begin{matrix}
	\ma & 0 \\ 0 & \mc
	\end{matrix}\right]+\left[\begin{matrix}
	0 & \mb \\ -\mb^T & 0
	\end{matrix}\right]$, where $\mathbf{S}$ is skew-symmetric, $\x^T\jacobian^T\x=\x^T\mathbf{G}\x\ge m_{\x}$. Thus
	\begin{align*}
	\sigma_{min}(\jacobian)=\sqrt{\lambda_{\min}\left(\jacobian^T\jacobian\right)}\ge m_{\x}.
	\end{align*}
	Meanwhile,
	\begin{align*}
	\lambda_{\max}\left(\jacobian^T\jacobian\right)\le \Vert \mathbf{G}\Vert_2+\Vert \mathbf{S}\Vert_2\le L_{\x\y}+L_\x.
	\end{align*}

2. Note that
	\begin{equation*}
	\eta\matp+\mathbf{G}=\left[\begin{matrix}
	\eta(\alpha\iden+\beta\ma)+\ma & \\ & \eta(\iden+\beta\mc)+\mc
	\end{matrix}\right].
	\end{equation*}
	Thus
	\begin{equation*}
	\Vert \eta\matp+\mathbf{G}\Vert_2 \le \max\{\eta\left(\alpha+\beta L_\x\right)+L_\x,\eta\left(1+\beta L_\x\right)+L_\x\}=\eta\left(1+\beta L_\x\right)+L_\x.
	\end{equation*}
	On the other hand
	\begin{equation*}
	\lambda_{\min}\left(\eta\matp+\mathbf{G}\right)\ge \min\{\eta\alpha+\eta\beta m_\x+m_\x, \eta+\eta\beta m_\y+m_\y\}=\eta\alpha+\eta\beta m_\x+m_\x.
	\end{equation*}
	Thus the condition number of $\eta\matp+\mathbf{G}$ is at most
	\begin{equation*}
	\frac{\eta\left(1+\beta L_\x\right)+L_\x}{\eta\alpha+\eta\beta m_\x+m_\x}\le \frac{L_\x}{\eta\alpha}+\frac{1+\beta L_\x}{\alpha+\beta m_\x}\le \frac{L_\x}{\eta\alpha}+\frac{2\beta L_\x}{\alpha}=\frac{L_\x}{m_\x}\left(\frac{m_\y}{L_{\x\y}}\right)^{\frac{1}{k}}+\frac{2L_\x}{m_\x}\left(\frac{m_\y}{L_{\x\y}}\right)^{\frac{2}{k}}\le \frac{3L_\x}{m_\x}\left(\frac{m_\y}{L_{\x\y}}\right)^{\frac{1}{k}}.
	\end{equation*}
3. On the other hand,
	\begin{equation*}
	\frac{\eta\left(1+\beta L_\x\right)+L_\x}{\eta\alpha+\eta\beta m_\x+m_\x}=\frac{\eta+\eta\beta L_\x + L_\x}{\eta\alpha+\eta\beta m_\x+m_\x}\le \max\left\{\frac{1}{\alpha},\frac{L_\x}{m_\x}\right\}=\frac{L_\x}{m_\x}.
	\end{equation*}
	
4. Finally let us consider matrices $\eta(\alpha\iden+\beta\ma)$ and $\eta(\iden+\beta\mc)$. Obviously
	$$\eta(\alpha\iden+\beta\ma)\succcurlyeq \eta\alpha\iden,\quad \eta(\iden+\beta\mc)\succcurlyeq \eta\iden.$$
	Meanwhile
	\begin{align*}
	\Vert \eta(\alpha\iden+\beta\ma)\Vert &\le \eta\cdot\left(\alpha+\beta L_\x\right)\\
	&\le \eta\left(1+\beta L_\x\right)\tag{$\alpha<1$}\\
	&\le 2\eta \beta L_\x. \tag{$\beta L_\x>1$}
	\end{align*}
	Similarly $\Vert \eta(\iden+\beta\mc)\Vert \le 2\eta \beta L_\x$.
\end{proof}

\begin{lemma}
\label{lem:rhsscontraction2}
With our choice of $\eta$, $\alpha$ and $\beta$,
$$\rho(\mathbf{M}(\eta))\le \Vert \mathbf{M}(\eta)\Vert_2\le 1-\frac{1}{2}\left(\frac{m_\y}{L_{\x\y}}\right)^{\frac{1}{k}}.$$
\end{lemma}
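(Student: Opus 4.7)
The plan is to invoke Lemma~\ref{lem:hsscontraction} and reduce the task to bounding $\max_{\lambda \in sp(\mathbf{P}^{-1}\mathbf{G})} |\lambda - \eta|/(\lambda + \eta)$. Because $\matp$ and $\mathbf{G}$ are both block-diagonal with commuting blocks,
\[
sp(\matp^{-1}\mathbf{G}) = \Bigl\{\tfrac{\lambda}{\alpha+\beta\lambda} : \lambda \in sp(\ma)\Bigr\} \cup \Bigl\{\tfrac{\lambda}{1+\beta\lambda} : \lambda \in sp(\mc)\Bigr\}.
\]
Each map $\lambda \mapsto \lambda/(c+\beta\lambda)$ is strictly increasing on $\R_+$, so each block's spectrum lies in an explicit interval. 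The choice $\alpha = m_\x/m_\y$ makes the minimum of block~1, namely $m_\x/(\alpha+\beta m_\x)$, coincide with the minimum of block~2, namely $m_\y/(1+\beta m_\y)$; and because $\alpha < 1$, block~1 always dominates at the top. Hence $\lambda_{\min}(\matp^{-1}\mathbf{G}) = m_\y/(1+\beta m_\y)$ and $\lambda_{\max}(\matp^{-1}\mathbf{G}) \le L_\x/(\alpha+\beta L_\x)$.

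Next I would substitute the prescribed values of $\alpha,\beta,\eta$ and track three dimensionless products: $\eta/m_\y = (L_{\x\y}/m_\y)^{1/k}$, $\eta\beta = (m_\y/L_{\x\y})^{1/k}$, and $\eta\alpha = m_\x(L_{\x\y}/m_\y)^{1/k}$. The assumption $m_\y < L_{\x\y}$ gives $\beta m_\y = (m_\y/L_{\x\y})^{2/k} \le 1$, whence
\[
\frac{\eta}{\lambda_{\min}} = \frac{\eta}{m_\y} + \eta\beta \le 2\bigl(L_{\x\y}/m_\y\bigr)^{1/k}, \qquad \frac{\lambda_{\max}}{\eta} \le \frac{L_\x}{\eta\alpha + \eta\beta L_\x} \le \bigl(L_{\x\y}/m_\y\bigr)^{1/k},
\]
where the last inequality drops the $\eta\alpha$ term and uses $\eta\beta L_\x = L_\x(m_\y/L_{\x\y})^{1/k}$. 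The main obstacle here is pure exponent bookkeeping: one has to verify that the prescribed powers of $L_{\x\y}$ and $m_\y$ in $\alpha,\beta,\eta$ conspire to make \emph{both} $\lambda_{\min}$ and $\lambda_{\max}$ sit within a factor $(L_{\x\y}/m_\y)^{1/k}$ of $\eta$, which is exactly what makes $\eta$ the ``geometric-mean-optimal'' choice for the HSS contraction factor.

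Finally, since the scalar map $\lambda \mapsto |\lambda-\eta|/(\lambda+\eta)$ is decreasing on $(0,\eta]$ and increasing on $[\eta,\infty)$, its maximum over $[\lambda_{\min},\lambda_{\max}]$ is attained at one of the two endpoints. Writing each endpoint value as $1 - 2/(1+r)$ with $r \in \{\lambda_{\max}/\eta,\, \eta/\lambda_{\min}\}$, the previous-paragraph bounds together with $(L_{\x\y}/m_\y)^{1/k} \ge 1$ force both endpoint values to lie below $1 - 2/(1 + 2(L_{\x\y}/m_\y)^{1/k}) \le 1 - \tfrac{2}{3}(m_\y/L_{\x\y})^{1/k}$, which is in turn below $1 - \tfrac{1}{2}(m_\y/L_{\x\y})^{1/k}$. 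Combining with Lemma~\ref{lem:hsscontraction} then yields the advertised bounds on both $\rho(\mathbf{M}(\eta))$ and $\|\mathbf{M}(\eta)\|_2$.
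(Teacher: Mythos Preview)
Your proof is correct and takes essentially the same approach as the paper: invoke Lemma~\ref{lem:hsscontraction}, bound $sp(\matp^{-1}\mathbf{G})$ via the block-diagonal structure, and control the contraction factor at the spectral endpoints. The paper's version is slightly more streamlined---it collapses both blocks' spectra into the single interval $[m_\y/2,\,1/\beta]$ and observes that $\eta=\sqrt{m_\y/\beta}$ is exactly its geometric mean, making the two endpoint estimates symmetric---but the substance is identical.
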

\begin{proof}[Proof of Lemma~\ref{lem:rhsscontraction2}]
	By Lemma~1,
	\begin{equation*}
	\rho(\mathbf{M}(\eta))\le \Vert \mathbf{M}(\eta)\Vert_2\le \max_{\lambda_i\in sp(\matp^{-1}\mathbf{G})}\left|\frac{\lambda_i-\eta}{\lambda_i+\eta}\right|.
	\end{equation*}
	Observe that
	\begin{equation*}
	\matp^{-1}\mathbf{G}=\left[\begin{matrix}
	(\alpha\iden+\beta\ma)^{-1}\ma & \\ & (\iden+\beta\mc)^{-1}\mc
	\end{matrix}\right].
	\end{equation*}
	The eigenvalues of $(\alpha\iden+\beta\ma)^{-1}\ma$ are contained in
	\begin{equation*}
	\left[\frac{m_\x}{\alpha+\beta m_\x},\frac{L_\x}{\alpha+\beta L_\x}\right]\subseteq \left[\frac{m_\y}{2},\frac{1}{\beta}\right]. \tag{$\beta m_\x \le \alpha$}
	\end{equation*}
	Similarly the eigenvalues of $(\iden+\beta\mc)^{-1}\mc$ are contained in
	\begin{equation*}
	\left[\frac{m_\y}{1+\beta m_\y},\frac{L_\x}{1+\beta L_\x}\right]\subseteq \left[\frac{m_\y}{2},\frac{1}{\beta}\right]. \tag{$\beta m_\y \le 1$}
	\end{equation*}
	Recall that $\eta=L_{\x\y}^{1/k}m_\y^{1-1/k}=\sqrt{m_\y/\beta}$. As a result,
	\begin{align*}
	\max_{\lambda_i\in sp(\matp^{-1}\mathbf{G})}\left|\frac{\lambda_i-\eta}{\lambda_i+\eta}\right|\le \max\left\{\frac{\frac{1}{\beta}-\sqrt{\frac{m_\y}{\beta}}}{\frac{1}{\beta}+\sqrt{\frac{m_\y}{\beta}}},\frac{\sqrt{\frac{m_\y}{\beta}}-\frac{m_\y}{2}}{\sqrt{\frac{m_\y}{\beta}}+{\frac{m_\y}{2}}}\right\}\le 1-\frac{\sqrt{\beta m_\y}}{2}=1-\frac{1}{2}\left(\frac{m_\y}{L_{\x\y}}\right)^{\frac{1}{k}}.
	\end{align*}
	
\end{proof}

\begin{lemma}
	\label{lem:rhssguar}
When RHSS($k$) terminates  $\Vert \z_t-\z^*\Vert\le \epsilon\Vert \z_0-\z^*\Vert$.
\end{lemma}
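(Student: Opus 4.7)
The plan is to use the termination criterion together with the singular value bounds on $\jacobian$ from Lemma~\ref{lem:facts}. The key observation is that $\z^*$ satisfies $\jacobian \z^* = \bb$ exactly, so for any iterate $\z_t$ we have the identity $\jacobian \z_t - \bb = \jacobian(\z_t - \z^*)$. This converts the residual-based stopping condition into a statement about $\z_t - \z^*$ through the spectrum of $\jacobian$.

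Concretely, I would proceed as follows. First, by Lemma~\ref{lem:facts} part 1, the singular values of $\jacobian$ lie in $[m_\x,\, L_{\x\y}+L_\x]$. This yields the two-sided estimate
\begin{equation*}
m_\x \Vert \z_t - \z^*\Vert \;\le\; \Vert \jacobian(\z_t - \z^*)\Vert \;=\; \Vert \jacobian \z_t - \bb\Vert, \qquad \Vert \jacobian \z_0 - \bb\Vert \;=\; \Vert \jacobian(\z_0 - \z^*)\Vert \;\le\; (L_{\x\y}+L_\x)\Vert \z_0 - \z^*\Vert.
\end{equation*}
Second, I apply the termination criterion $\Vert \jacobian \z_t - \bb\Vert \le \tilde{\epsilon}\,\Vert \jacobian \z_0 - \bb\Vert$ together with the definition $\tilde{\epsilon} = m_\x \epsilon/(L_{\x\y}+L_\x)$ to chain the two inequalities:
\begin{equation*}
m_\x \Vert \z_t - \z^*\Vert \;\le\; \tilde{\epsilon}\,(L_{\x\y}+L_\x)\,\Vert \z_0 - \z^*\Vert \;=\; m_\x \epsilon\,\Vert \z_0 - \z^*\Vert.
\end{equation*}
Dividing through by $m_\x$ gives the claim.

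There is essentially no obstacle here — the lemma is a one-line consequence of the fact that $\tilde{\epsilon}$ was chosen precisely so that the condition-number factor of $\jacobian$ is absorbed. The only subtlety worth flagging is that the termination check uses the \emph{computed} iterate $\z_t$, and one should make sure the inner-solver approximations do not invalidate the identity $\jacobian \z_t - \bb = \jacobian(\z_t - \z^*)$; but this identity is purely algebraic and does not depend on how $\z_t$ was produced, so it holds unconditionally.
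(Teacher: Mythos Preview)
Your proof is correct and follows essentially the same argument as the paper: both use the identity $\jacobian\z_t-\bb=\jacobian(\z_t-\z^*)$, the singular value bounds on $\jacobian$ from Lemma~\ref{lem:facts}, the termination criterion, and the definition of $\tilde\epsilon$ to absorb the condition-number factor. Your additional remark about the identity holding regardless of how $\z_t$ was produced is a nice clarification but not strictly needed.
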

\begin{proof}[Proof of Lemma \ref{lem:rhssguar}]
$$\sigma_{\min}(\jacobian)\Vert \z_t-\z^*\Vert\le \Vert\jacobian\z_t-\bb\Vert\le \tilde{\epsilon}\Vert \jacobian\z_0-\bb\Vert\le \sigma_{\max}(\jacobian)\Vert \z_0-\z^*\Vert.$$
We know that $\sigma_{\min}(\jacobian)\ge m_\x$ and that $\sigma_{\max}(\jacobian)\le L_\x+L_{\x\y}$. Thus
\begin{equation*}
\Vert \z_t-\z^*\Vert\le \frac{\tilde\epsilon \cdot (L_\x+L_{\x\y})}{m_\x}\Vert \z_0-\z^*\Vert = \epsilon\Vert \z_0-\z^*\Vert.
\end{equation*}
\end{proof}

\begin{lemma}[Proposition 9.5.1, \cite{allaire2008numerical}]
	\label{lem:cg}
	CG($\ma,\bb,\x_0,\epsilon$) returns (i.e. satisfies $\Vert \ma\x_T-\bb\Vert\le \epsilon\Vert \ma\x_0-\bb\Vert$) in at most $\left\lceil \sqrt{\kappa}\ln\left(\frac{2\sqrt{\kappa}}{\epsilon}\right)\right\rceil$ iterations.
\end{lemma}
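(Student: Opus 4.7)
The plan is to invoke the classical Chebyshev-polynomial convergence analysis of CG on symmetric positive definite linear systems. Let $\x^* = \ma^{-1}\bb$ and denote the condition number by $\kappa = \lambda_{\max}(\ma)/\lambda_{\min}(\ma)$.

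First I would recall the defining optimality property of CG: the $k$-th iterate $\x_k$ minimizes $\Vert \x-\x^*\Vert_\ma$ over the shifted Krylov subspace $\x_0 + \mathrm{span}\{\mathbf{r}_0, \ma\mathbf{r}_0,\ldots,\ma^{k-1}\mathbf{r}_0\}$, so that $\Vert\x_k-\x^*\Vert_\ma = \min_{p}\Vert p(\ma)(\x_0-\x^*)\Vert_\ma$, where the minimum is over polynomials $p$ of degree at most $k$ with $p(0)=1$. I would then bound the right-hand side via the scaled and shifted Chebyshev polynomial on $[\lambda_{\min}(\ma),\lambda_{\max}(\ma)]$: diagonalizing $\ma$ in an orthonormal eigenbasis reduces the estimate to a scalar polynomial problem, and the standard Chebyshev minimax calculation yields $\Vert\x_k-\x^*\Vert_\ma \le 2\rho^k\Vert\x_0-\x^*\Vert_\ma$ with $\rho := (\sqrt{\kappa}-1)/(\sqrt{\kappa}+1)$.

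Next I would convert this $\ma$-norm error bound into a $2$-norm residual bound. Using $\mathbf{r}_k = \ma(\x^*-\x_k)$ together with the elementary inequalities $\Vert\ma\mathbf{v}\Vert_2 \le \sqrt{\lambda_{\max}(\ma)}\Vert\mathbf{v}\Vert_\ma$ and $\Vert\mathbf{v}\Vert_\ma \le \Vert\ma\mathbf{v}\Vert_2/\sqrt{\lambda_{\min}(\ma)}$ (both immediate from the eigendecomposition of $\ma$), I would obtain $\Vert\mathbf{r}_k\Vert_2 \le 2\sqrt{\kappa}\,\rho^k\,\Vert\mathbf{r}_0\Vert_2$.

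Finally, solving $2\sqrt{\kappa}\,\rho^k\le\epsilon$ for $k$ relies on the elementary bound $\ln(1/\rho) = \ln\bigl(1+2/(\sqrt{\kappa}-1)\bigr) \ge 2/(\sqrt{\kappa}+1) \ge 1/\sqrt{\kappa}$, valid for any $\kappa\ge 1$, so $k\ge \sqrt{\kappa}\ln(2\sqrt{\kappa}/\epsilon)$ suffices, yielding the stated ceiling. The main obstacle is justifying the Chebyshev bound $2\rho^k$, which rests on the minimax polynomial characterization on $[\lambda_{\min},\lambda_{\max}]$; since this is precisely Proposition 9.5.1 of Allaire's textbook, I would simply cite it rather than rederive the full Chebyshev argument. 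A minor subtlety worth flagging is the extra $\sqrt{\kappa}$ factor that appears inside the logarithm, arising entirely from the conversion between $\ma$-norm error and $2$-norm residual.
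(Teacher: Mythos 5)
Your derivation is correct: the $\ma$-norm Chebyshev bound $2\rho^k$ with $\rho=(\sqrt{\kappa}-1)/(\sqrt{\kappa}+1)$, the conversion $\Vert\mathbf{r}_k\Vert_2\le 2\sqrt{\kappa}\,\rho^k\Vert\mathbf{r}_0\Vert_2$ via $\Vert\ma\mathbf{v}\Vert_2\le\sqrt{\lambda_{\max}}\Vert\mathbf{v}\Vert_\ma$ and $\Vert\mathbf{v}\Vert_\ma\le\Vert\ma\mathbf{v}\Vert_2/\sqrt{\lambda_{\min}}$ (which is exactly where the $\sqrt{\kappa}$ inside the logarithm comes from), and the bound $\ln(1/\rho)\ge 1/\sqrt{\kappa}$ are all sound and match the residual-based stopping rule used in the algorithm. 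The paper itself gives no proof of this lemma — it is imported verbatim as Proposition 9.5.1 of Allaire's textbook — so your argument is simply the standard derivation behind that citation, not a different route.
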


\begin{lemma}
\label{lem:inexactcontraction}
In RHSS($k$), 
\begin{equation}
\label{equ:inexactcontraction}
\Vert \z_{t+1}-\z^*\Vert \le \left(1-\frac{1}{4}\left(\frac{m_\y}{L_{\x\y}}\right)^{\frac{1}{k}}\right)\Vert \z_t-\z^*\Vert.
\end{equation}
\end{lemma}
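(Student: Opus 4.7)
The plan is to treat the two inexact subproblem solves as perturbations of an idealized HSS step, and to argue that the perturbations degrade the contraction factor by at most an additional $\frac{1}{4}(m_\y/L_{\x\y})^{1/k}$. I would proceed by induction on $k$ so that when analysing RHSS($k$) I may assume Lemma~\ref{lem:rhssguar} for RHSS($k-1$); the base case $k=1$ is Proximal Best Response and is covered by Theorem~\ref{thm:phr}.

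First I would introduce two reference iterates: the exact solution $\z^*_{t+1/2}$ of the first system $(\eta\matp+\mg)\z = \mathbf{r}_t$, with CG error $\delta_1 := \z_{t+1/2} - \z^*_{t+1/2}$; and $\tilde\z_{t+1}$, the exact solution of $(\eta\matp+\mathbf{S})\z = \mathbf{w}_t$ using the inexact $\z_{t+1/2}$, with RHSS($k-1$) error $\delta_2 := \z_{t+1} - \tilde\z_{t+1}$. Using $\jacobian\z^* = \bb$ together with the definitions of $\mathbf{r}_t$ and $\mathbf{w}_t$, a direct computation yields the clean decomposition
\begin{equation*}
\z_{t+1} - \z^* \;=\; \mathbf{M}(\eta)(\z_t - \z^*) \,+\, \mathbf{K}\,\delta_1 \,+\, \delta_2, \qquad \mathbf{K} := (\eta\matp + \mathbf{S})^{-1}(\eta\matp - \mg).
\end{equation*}
By Lemma~\ref{lem:rhsscontraction2}, $\|\mathbf{M}(\eta)\|_2 \le 1 - \tfrac{1}{2}(m_\y/L_{\x\y})^{1/k}$, so it suffices to prove $\|\mathbf{K}\delta_1\|+\|\delta_2\| \le \tfrac{1}{4}(m_\y/L_{\x\y})^{1/k}\|\z_t-\z^*\|$.

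For $\delta_1$, the CG stopping criterion and the identity $(\eta\matp+\mg)\z_t - \mathbf{r}_t = (\mg+\mathbf{S})\z_t - \bb = \jacobian(\z_t-\z^*)$ give $\|\delta_1\| \le \|(\eta\matp+\mg)^{-1}\|_2\,M_1^{-1}\|\jacobian\|_2\|\z_t-\z^*\| \le \tfrac{2L}{m_\x M_1}\|\z_t-\z^*\|$, using $\|\jacobian\|_2\le L_\x+L_{\x\y}$ (Lemma~\ref{lem:facts}, part~1) and $\lambda_{\min}(\eta\matp+\mg)\ge m_\x$ (part~2/3). For $\delta_2$, the inductive RHSS($k-1$) guarantee and the triangle inequality $\|\z_t-\tilde\z_{t+1}\|\le (1+\|\mathbf{M}(\eta)\|_2)\|\z_t-\z^*\|+\|\mathbf{K}\|_2\|\delta_1\|$ yield $\|\delta_2\|\le M_2^{-1}(2\|\z_t-\z^*\|+\|\mathbf{K}\|_2\|\delta_1\|)$. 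The choice $M_2 = 16L_{\x\y}/m_\y$ then gives $2/M_2 = m_\y/(8L_{\x\y}) \le \tfrac{1}{8}(m_\y/L_{\x\y})^{1/k}$ since $m_\y/L_{\x\y}\le 1$ and $k\ge 1$, handling the leading RHSS term.

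The main obstacle is bounding $\|\mathbf{K}\|_2$. I would mimic the proof of Lemma~1 through the similarity $\mathbf{K} = \matp^{-1/2}(\eta\iden+\hat{\mathbf{S}})^{-1}(\eta\iden-\hat{\mathbf{G}})\matp^{1/2}$, where $\hat{\mathbf{S}} = \matp^{-1/2}\mathbf{S}\matp^{-1/2}$ is skew-symmetric and $\hat{\mathbf{G}} = \matp^{-1/2}\mg\matp^{-1/2}$ is symmetric positive definite. Skew-symmetry of $\hat{\mathbf{S}}$ gives $\|(\eta\iden+\hat{\mathbf{S}})^{-1}\|_2 \le 1/\eta$; the spectrum of $\hat{\mathbf{G}}$ already computed inside the proof of Lemma~\ref{lem:rhsscontraction2} lies in $[m_\y/2,\, 1/\beta]$, so $\|\eta\iden-\hat{\mathbf{G}}\|_2 \le 1/\beta$, yielding $\|\mathbf{K}\|_2 \le (1/(\eta\beta))\sqrt{\kappa(\matp)} = (L_{\x\y}/m_\y)^{1/k}\sqrt{\kappa(\matp)}$ with $\kappa(\matp)\le (1+\beta L_\x)m_\y/m_\x$ polynomially controlled via Lemma~\ref{lem:facts}. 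Combining these bounds with the choice $M_1 = 192L^5/(m_\x^2 m_\y^3)$ makes $(1+M_2^{-1})\|\mathbf{K}\|_2\|\delta_1\|$ at most $\tfrac{1}{8}(m_\y/L_{\x\y})^{1/k}\|\z_t-\z^*\|$, completing the contraction (\ref{equ:inexactcontraction}).
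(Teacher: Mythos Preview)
Your proposal is correct and follows the same high-level strategy as the paper: treat the inexact CG and inexact RHSS($k-1$) solves as perturbations of the exact HSS contraction from Lemma~\ref{lem:rhsscontraction2}, and show the perturbations cost at most an extra $\tfrac14(m_\y/L_{\x\y})^{1/k}$. The paper carries this out by introducing three auxiliary iterates $\tilde\z_{t+1/2},\hat\z_{t+1},\tilde\z_{t+1}$ and chaining triangle inequalities, bounding each propagated error with a crude product $\Vert(\eta\matp+\cdot)^{-1}\Vert_2\cdot\Vert\eta\matp-\cdot\Vert_2$. Your version is tidier in two places: (i) the CG residual identity $(\eta\matp+\mg)\z_t-\mathbf r_t=\jacobian(\z_t-\z^*)$ lets you bound $\Vert\delta_1\Vert$ directly by $\Vert\jacobian\Vert_2\Vert\z_t-\z^*\Vert/(\lambda_{\min}(\eta\matp+\mg)M_1)$, avoiding the paper's detour through $\Vert\tilde\z_{t+1/2}-\z^*\Vert$; and (ii) your similarity factorisation $\mathbf K=\matp^{-1/2}(\eta\iden+\hat{\mathbf S})^{-1}(\eta\iden-\hat{\mg})\matp^{1/2}$ combined with the spectral information already established inside Lemma~\ref{lem:rhsscontraction2} gives a sharper bound $\Vert\mathbf K\Vert_2\le(L_{\x\y}/m_\y)^{1/k}\sqrt{\kappa(\matp)}$ than the paper's blunt product estimate. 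Both routes land comfortably within the slack provided by $M_1=192L^5/(m_\x^2m_\y^3)$ and $M_2=16L_{\x\y}/m_\y$, so the choice between them is purely cosmetic; the paper's route needs less spectral input about $\matp$, while yours makes the constants more transparent.
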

\begin{proof}[Proof of Lemma~\ref{lem:inexactcontraction}]
	Let us define
	\begin{align*}
	\tilde{\z}_{t+1/2}=\left[\begin{matrix}
	\tilde{\x}_{t+1/2}\\ \tilde{\y}_{t+1/2}
	\end{matrix}\right]=\left[\begin{matrix}
	\eta\left(\alpha\iden+\beta\ma\right)+\ma & \\  & \eta\left(\iden+\beta\mc\right)+\mc
	\end{matrix}\right]^{-1}\left[\begin{matrix}
	\mathbf{r}_1\\ \mathbf{r}_2
	\end{matrix}\right].
	\end{align*}
	Since $\Vert (\eta\matp+\mathbf{G})(\z_{t+1/2}-\tilde\z_{t+1/2}\Vert \le \frac{1}{M_1}\Vert (\eta\matp+\mathbf{G})(\z_{t}-\tilde\z_{t+1/2}\Vert$,
	\begin{equation}
	\label{equ:initialdist1}
	\begin{aligned}
	\Vert \z_{t+1/2}-\tilde\z_{t+1/2}\Vert &\le \frac{\Vert (\eta\matp+\mathbf{G})(\z_{t+1/2}-\tilde\z_{t+1/2}\Vert}{\lambda_{\min}( \eta\matp+\mathbf{G})}\le
	\frac{\Vert (\eta\matp+\mathbf{G})(\z_{t}-\tilde\z_{t+1/2}\Vert}{M_1 \lambda_{\min}( \eta\matp+\mathbf{G})} \\
	&\le \frac{\lambda_{\max}(\eta\matp+\mathbf{G})}{M_1 \lambda_{\min}( \eta\matp+\mathbf{G})}\Vert \z_{t}-\tilde\z_{t+1/2}\Vert\\
	&\le \frac{L_\x}{M_1 m_\x}\Vert \z_{t}-\tilde\z_{t+1/2}\Vert=\frac{m_\x m_\y^3}{192L^4}\Vert \z_t-\tilde{\z}_{t+1/2}\Vert.
	\end{aligned}
	\end{equation}
	Because $\tilde{\z}_{t+1/2}-\z^*=\left(\eta\matp+\mathbf{G}\right)^{-1}\left(\eta\matp-\mathbf{S}\right)(\z_t-\z^*),$
	\begin{align*}
	\Vert \tilde{\z}_{t+1/2}-\z^*\Vert &\le \Vert (\eta\matp+\mathbf{G})^{-1}\Vert_2 \Vert \eta\matp-\mathbf{S}\Vert_2 \Vert \z_t-\z^*\Vert\\
	&\le \frac{1}{\eta\alpha}\cdot (L_{\x\y}+\eta\alpha+\eta\beta L_\x)\cdot  \Vert \z_t-\z^*\Vert\\
	&\le \left(1+\frac{2L}{m_\x}\right)\Vert \z_t-\z^*\Vert.
	\end{align*}
	It follows that
	\begin{align*}
	\Vert \z_t-\tilde{\z}_{t+1/2}\Vert &\le \Vert \z_t-\z^*\Vert + \Vert \tilde{\z}_{t+1/2}-\z^*\Vert \le \left(2+\frac{2L}{m_\x }\right)\Vert \z_t-\z^*\Vert.
	\end{align*}
	By plugging this into (\ref{equ:initialdist1}), one gets
	\begin{equation}
	\Vert \z_{t+1/2}-\tilde\z_{t+1/2}\Vert\le \frac{m_\x m_\y^3}{192L^4}\cdot \left(2+\frac{2L}{m_\x }\right)\Vert \z_t-\z^*\Vert\le \frac{m_\y^3}{48L^3}\Vert \z_t-\z^*\Vert.
	\end{equation}
	Now, let us define
	\begin{align*}
	\tilde{\z}_{t+1}&:=\left(\eta\matp+\mathbf{S}\right)^{-1}\left[(\eta\matp-\mathbf{G})\tilde\z_{t+1/2}+\bb\right],\\
	\hat{\z}_{t+1}&:=\left(\eta\matp+\mathbf{S}\right)^{-1}\left[(\eta\matp-\mathbf{G})\z_{t+1/2}+\bb\right].
	\end{align*}
	First let us try to bound $\Vert \tilde{\z}_{t+1}-\hat{\z}_{t+1}\Vert$. Observe that $\hat{\z}_{t+1}-\z^*=(\eta\matp+\mathbf{S})^{-1}(\eta\matp-\mathbf{G})(\z_{t+1/2}-\z^*)$, so
	\begin{align}
	\Vert \tilde{\z}_{t+1}-\hat{\z}_{t+1}\Vert &= \Vert (\tilde{\z}_{t+1}-\z^*)-(\hat{\z}_{t+1}-\z^*)\Vert \nonumber \\
	&=\Vert \left(\eta\matp+\mathbf{G}\right)^{-1}\left(\eta\matp-\mathbf{S}\right)(\tilde\z_{t+1/2}-\z_{t+1/2})\Vert \nonumber \\
	&\le \Vert \left(\eta\matp+\mathbf{G}\right)^{-1}\left(\eta\matp-\mathbf{S}\right)\Vert_2 \cdot \Vert\tilde\z_{t+1/2}-\z_{t+1/2}\Vert \nonumber \\
	&\le \frac{3L^2}{m_\y^2}\cdot \frac{m_\y^3}{48L^3}\Vert \z_t-\z^*\Vert=\frac{m_\y}{16L}\Vert \z_t-\z^*\Vert. \label{equ:zt}
	\end{align}
	Next, by Lemma~\ref{lem:rhssguar} on RHSS($k-1$),
	\begin{align*}
	\Vert \z_{t+1}-\hat{\z}_{t+1}\Vert \le \frac{1}{M_2}\Vert \z_{t}-\hat{\z}_{t+1}\Vert\le \frac{1}{M_2}\left(\Vert \z_{t}-\z^*\Vert+\Vert \hat\z_{t+1}-\z^*\Vert\right).
	\end{align*}
	By Lemma~\ref{lem:rhsscontraction2},
	\begin{equation}
	\label{equ:zt2}
	\Vert\tilde\z_{t+1}-\z^*\Vert= \Vert \mathbf{M}(\eta)(\z_t-\z^*)\Vert_2\le \left(1-\frac{1}{2}\left(\frac{m_\y}{L_{\x\y}}\right)^{\frac{1}{k}}\right)\Vert \z_t-\z^*\Vert.
	\end{equation}
	Thus
	\begin{equation}
	\label{equ:zt3}
	\Vert \z_{t+1}-\hat\z_{t+1}\Vert \le \frac{1}{M_2}\left(2\Vert \z_t-\z^*\Vert+\Vert \tilde{\z}_{t+1}-\hat\z_{t+1}\Vert\right).
	\end{equation}
	Combining (\ref{equ:zt}) and (\ref{equ:zt3}), one gets
	\begin{align*}
	\Vert \z_{t+1}-\tilde{\z}_{t+1}\Vert&\le \Vert  \z_{t+1}-\hat{\z}_{t+1}\Vert + \Vert  \hat\z_{t+1}-\tilde{\z}_{t+1}\Vert\\
	&\le \frac{2}{M_2}\Vert \z_t-\z^*\Vert+\left(1+\frac{2}{M_2}\right)\Vert  \hat\z_{t+1}-\tilde{\z}_{t+1}\Vert\\
	&\le \frac{m_\y}{8L_{\x\y}}\Vert \z_t-\z^*\Vert+\frac{m_\y}{8L}\Vert \z_t-\z^*\Vert \le \frac{m_\y}{4L_{\x\y}}\Vert \z_t-\z^*\Vert.
	\end{align*}
	Combining this with (\ref{equ:zt2}), one gets
	\begin{align*}
	\Vert \z_{t+1}-\z^*\Vert &\le \Vert\tilde{\z}_{t+1}-\z^*\Vert+\Vert\tilde{\z}_{t+1}-\z_{t+1}\Vert \\
	&\le \left(1-\frac{1}{2}\left(\frac{m_\y}{L_{\x\y}}\right)^{\frac{1}{k}}\right)\Vert \z_t-\z^*\Vert + \frac{m_\y}{4L_{\x\y}}\Vert \z_t-\z^*\Vert\\
	&\le \left(1-\frac{1}{4}\left(\frac{m_\y}{L_{\x\y}}\right)^{\frac{1}{k}}\right)\Vert \z_t-\z^*\Vert.
	\end{align*}
\end{proof}

Finally, we are ready to prove Theorem~\ref{thm:rhss}.
\begin{manualtheorem}{4}
There exists constants $C_1$, $C_2$, such that the number of matrix-vector products needed to find $(\x_T,\y_T)$ such that $\Vert\z_T-\z^*\Vert\le \epsilon$ is at most
\begin{equation}
\label{equ:rhssbound2}
\sqrt{\frac{L_{\x\y}^2}{m_\x m_\y}+\left(\frac{L_\x}{m_\x}+\frac{L_\y}{m_\y}\right)\left(1+\left(\frac{L_{\x\y}}{\max\{m_\x,m_\y\}}\right)^{\frac{1}{k}}\right)}\cdot \left(C_1\ln\left(\frac{C_2 L^2}{m_\x m_\y}\right)\right)^{k+3}\ln\left(\frac{\Vert \z_0-\z^*\Vert}{\epsilon}\right).
\end{equation}
\end{manualtheorem}
\begin{proof}[Proof of Theorem~\ref{thm:rhss}]
By Lemma~\ref{lem:inexactcontraction}, when running RHSS($k$),
$$\Vert \z_{T}-\z^*\Vert \le \left(1-\frac{1}{4}\left(\frac{m_\y}{L_{\x\y}}\right)^{\frac{1}{k}}\right)^T\Vert \z_0-\z^*\Vert.$$
Thus, when $T> 4\left(\frac{L_{\x\y}}{m_\y}\right)^{1/k}\cdot \ln\left(\frac{\Vert \z_0-\z^*\Vert}{\epsilon}\right)$, one can ensure that $\Vert \z_T-\z^*\Vert \le \epsilon$. Now we can focus on the number of matrix-vector products needed per iteration, which comes in two parts: the cost of calling conjugate gradient and the cost of calling RHSS($k-1$).

\paragraph{Conjugate Gradient cost}
The matrix to be solved via conjugate gradient is $\eta\matp+\mathbf{G}$. By Lemma~\ref{lem:facts}, its condition number is upper bounded by $\frac{3L_\x}{m_\x}\left(\frac{m_\y}{L_{\x\y}}\right)^{1/k}$. By Lemma~\ref{lem:cg}, the number of matrix-vector products needed for calling CG is
\begin{equation}
\label{equ:cgcost}
\left\lceil \sqrt{\frac{3L_\x}{m_\x}\left(\frac{m_\y}{L_{\x\y}}\right)^{1/k}}\ln\left(2\sqrt{\frac{3L_\x}{m_\x}\left(\frac{m_\y}{L_{\x\y}}\right)^{1/k}}M_1\right)\right\rceil\le c_1\sqrt{\frac{L_\x}{m_\x}\left(\frac{m_\y}{L_{\x\y}}\right)^{\frac{1}{k}}}\cdot \ln\left(\frac{c_2L^2}{m_\x m_\y}\right),
\end{equation}
for some constants $c_1,c_2>0$.

\paragraph{RHSS($k-1$) cost}
By Lemma~\ref{lem:facts}, the new saddle point problem involving $\eta\matp+\mathbf{S}$ has parameters $m'_\x=\eta\alpha$, $m'_\y=\eta$, $L'_\x=L'_\y=2\eta \beta L_\x$, $L'_{\x\y}=L_{\x\y}$. It is easy to see that $m'_\y = \eta \ge m_\y$, $m'_\x = (m_\x/m_\y)m'_\y \ge m_\x$, and that $L'_\x=L'_\y\le 2L_\x$. Thus $L'=\max\{L'_\x,L'_\y,L'_{\x\y}\}\le 2L$. Assuming that Theorem~\ref{thm:rhss} holds for RHSS($k-1$), then the number of matrix-vector products needed for the new saddle point problem can be bounded by
\begin{align*}
\underbrace{\sqrt{\frac{L_{\x\y}^2}{m'_\x m'_\y}+\left(\frac{L'_\x}{m'_\x}+\frac{L'_\y}{m'_\y}\right)\left(1+\left(\frac{L_{\x\y}}{\max\{m'_\x,m'_\y\}}\right)^{1/(k-1)}\right)}}_{(a)}\cdot \left(C_1\ln\left(\frac{C_2 L'^2}{m'_\y m'_\x}\right)\right)^{k+2}\cdot \underbrace{\ln\left(\frac{4L^2 M_2}{m_\x^2}\right)}_{(b)}.
\end{align*}
Here we used Lemma~\ref{lem:rhssguar}, that when $\Vert \z_t-\z^*\Vert \le \left(\frac{m_\x}{L_\x + L_{\x\y}}\right)^2\Vert\z_0-\z^*\Vert$, RHSS($k-1$) returns. Assume that $C_1>8$. Note that
\begin{align*}
\frac{L_{\x\y}^2}{m'_\x m'_\y} &= \frac{L_{\x\y}^2}{\alpha \eta^2} = \frac{L_{\x\y}^2}{\frac{m_\x}{m_\y}\cdot L_{\x\y}^{2/k}m_\y^{2-2/k}}=\frac{L_{\x\y}^2}{m_\x m_\y}\left(\frac{m_\y}{L_{\x\y}}\right)^{\frac{2}{k}},\\
\frac{L_{\x\y}}{m'_\y}&= \frac{L_{\x\y}}{\eta} = \frac{L_{\x\y}}{L_{\x\y}^{1/k}m_\y^{1-1/k}}= \left(\frac{L_{\x\y}}{m_\y}\right)^{\frac{k-1}{k}}.
\end{align*}
Therefore
\begin{align*}
(a)&\le\sqrt{\frac{L_{\x\y}^2}{m_\x m_\y}\left(\frac{m_\y}{L_{\x\y}}\right)^{\frac{2}{k}} + \frac{2L_\x}{m_\x}\left(\frac{m_\y}{L_{\x\y}}\right)^{\frac{2}{k}}\cdot\left(1+\left(\frac{L_{\x\y}}{m_\y}\right)^{\frac{1}{k}}\right)}\\
&\le 2\sqrt{\frac{L_{\x\y}^2}{m_\x m_\y}\left(\frac{m_\y}{L_{\x\y}}\right)^{\frac{2}{k}} + \frac{L_\x}{m_\x}\left(\frac{m_\y}{L_{\x\y}}\right)^{\frac{1}{k}}},\\
\ln\left(\frac{C_2 L'^2}{m'_\y m'_\x}\right)&\le \ln\left(\frac{4C_2 L^2}{m_\x m_\y}\right)\le 2\ln\left(\frac{C_2 L^2}{m_\x m_\y}\right),\\
(b)&\le \ln\left(\frac{64L^4}{m_\x^2 m_\y^2}\right)\le 2\ln\left(\frac{8L^2}{m_\x m_\y}\right).
\end{align*}
Thus the cost of calling RHSS($k-1$) is at most
\begin{equation}
\label{equ:rhsscost}
4C_1^{k+2}\ln^{k+3}\left(\frac{C_2 L^2}{m_\x m_\y}\right)\sqrt{\frac{L_{\x\y}^2}{m_\x m_\y}\left(\frac{m_\y}{L_{\x\y}}\right)^{\frac{2}{k}} + \frac{L_\x}{m_\x}\left(\frac{m_\y}{L_{\x\y}}\right)^{\frac{1}{k}}}.
\end{equation}

In the case where $k=2$, RHSS($k-1$) is exactly Proximal Best Response (Algorithm~4). Hence, by Theorem~\ref{thm:phr}, the number of matrix-vector products needed is at most
\begin{align*}
&O\left(\sqrt{\frac{L_{\x\y}\cdot \max\{L_{\x\y},L'\}}{m'_\x m'_\y}+\frac{L'_\x }{m'_\x}+\frac{L'_\y}{m'_\y}}\cdot \ln^4\left(\frac{L'^2}{m'_\x m'_\y}\right)\ln\left(\frac{L' M_2}{m'_\x m'_\y}\right)\right)\\
=&O\left(\sqrt{\frac{L_{\x\y}}{m_\x}+\frac{L_\x \sqrt{m_\y}}{m_\x \sqrt{L_{\x\y}}}}\ln^5\left(\frac{L^2}{m_\x m_\y}\right)\right).
\end{align*}
By this, we mean there exists constants $c_3,c_4>0$ such that the number of matrix-vector products needed is
\begin{align*}
c_3\sqrt{\frac{L_{\x\y}}{m_\x}+\frac{L_\x \sqrt{m_\y}}{m_\x \sqrt{L_{\x\y}}}}\ln^5\left(\frac{c_4 L^2}{m_\x m_\y}\right).
\end{align*}
Thus, (\ref{equ:rhsscost}) also holds for $k=2$, provided that $C_2\ge c_4$ and $C_1\ge c_3$.

\paragraph{Total cost.} By combining (\ref{equ:cgcost}) and (\ref{equ:rhsscost}), we can see that the cost (i.e. number of matrix-vector products) of RHSS($k$) per iteration is
\begin{align*}
\left(4C_1^{k+2}+c_1\right)\ln^{k+3}\left(\frac{\max\{c_2,C_2\} L^2}{m_\x m_\y}\right)\sqrt{\frac{L_{\x\y}^2}{m_\x m_\y}\left(\frac{m_\y}{L_{\x\y}}\right)^{\frac{2}{k}} + \frac{L_\x}{m_\x}\left(\frac{m_\y}{L_{\x\y}}\right)^{\frac{1}{k}}}.
\end{align*}
Let us choose $C_2>\max\{c_2,8\}$ and $C_1>\max\{c_1,20\}$. Then, in order to ensure that $\Vert \z_T-\z^*\Vert \le \epsilon$, the number of matrix-vector products that RHSS($k$) needs is
\begin{align*}
&4\left(\frac{L_{\x\y}}{m_\y}\right)^{1/k}\ln\left(\frac{\Vert \z_0-\z^*\Vert}{\epsilon}\right) \cdot\left(4C_1^{k+2}+c_1\right)\ln^{k+3}\left(\frac{C_2 L^2}{m_\x m_\y}\right)\sqrt{\frac{L_{\x\y}^2}{m_\x m_\y}\left(\frac{m_\y}{L_{\x\y}}\right)^{\frac{2}{k}} + \frac{L_\x}{m_\x}\left(\frac{m_\y}{L_{\x\y}}\right)^{\frac{1}{k}}}\\
\le&20C_1^{k+2}\ln\left(\frac{\Vert \z_0-\z^*\Vert}{\epsilon}\right)\ln^{k+3}\left(\frac{C_2 L^2}{m_\x m_\y}\right)\sqrt{\frac{L_{\x\y}^2}{m_\x m_\y}\left(\frac{m_\y}{L_{\x\y}}\right) + \frac{L_\x}{m_\x}\left(\frac{L_{\x\y}}{m_\y}\right)^{\frac{1}{k}}}\\
\le & \sqrt{\frac{L_{\x\y}^2}{m_\x m_\y}+\left(\frac{L_\x}{m_\x}+\frac{L_\y}{m_\y}\right)\left(1+\left(\frac{L_{\x\y}}{m_\y}\right)^{1/k}\right)}\cdot \left(C_1\ln\left(\frac{C_2 L^2}{m_\x m_\y}\right)\right)^{k+3}\ln\left(\frac{\Vert \z_0-\z^*\Vert}{\epsilon}\right).
\end{align*}

\end{proof}

We now discuss how to choose the optimal $k$. Observe that
\begin{align*}
(\ref{equ:rhssbound2})&\le \sqrt{\frac{L_{\x\y}^2}{m_\x m_\y}+\frac{L_\x}{m_\x}+\frac{L_\y}{m_\y}}\ln\left(\frac{\Vert \z_0-\z^*\Vert}{\epsilon}\right)\cdot\underbrace{\left(\frac{L^2}{m_\x m_\y}\right)^{\frac{1}{2k}}\left(C_1\ln\left(\frac{C_2L^2}{m_\x m_\y}\right)\right)^{k+3}}_{(a)}.
\end{align*}
Compared to the lower bound, there is only one additional factor $(a)$, whose logarithm is
\begin{align*}
\ln\left(\left(\frac{L^2}{m_\x m_\y}\right)^{\frac{1}{2k}}C_1^{k+3}\ln^{k+3}\left(\frac{C_2L^2}{m_\x m_\y}\right)\right)=\frac{1}{2k}\ln\left(\frac{L^2}{m_\x m_\y}\right)+(k+3)\ln\left(C_1\ln\left(\frac{C_2 L^2}{m_\x m_\y}\right)\right),
\end{align*}
which is minimized when $k=\sqrt{\frac{\ln\left(\frac{L^2}{m_\x m_\y}\right)}{2\ln\left(C_1\ln\left(\frac{L^2}{m_\x m_\y}\right)\right)}}$, and the minimum value is
\begin{equation*}
3\ln\left(C_1\ln\left(\frac{C_2 L^2}{m_\x m_\y}\right)\right)+\sqrt{\frac{1}{2}\ln\left(\frac{L^2}{m_\x m_\y}\right)\ln\left(C_1\ln\left(\frac{C_2L^2}{m_\x m_\y}\right)\right)}=o\left(\ln\left(\frac{L^2}{m_\x m_\y}\right)\right).
\end{equation*}
I.e. $(a)$ is sub-polynomial in $\frac{L^2}{m_\x m_\y}$. This proves Corollary~\ref{cor:rhss} which states that, when $k=\Theta\left(\sqrt{\ln\left(\frac{L^2}{m_\x m_\y}\right)/\ln\ln\left(\frac{L^2}{m_\x m_\y}\right)}\right)$, the number of matrix vector products that RHSS($k$) needs to find $\z_T$ such that $\Vert\z_T-\z^*\Vert\le\epsilon$ is
\begin{equation*}
\sqrt{\frac{L_{\x\y}^2}{m_\x m_\y}+\frac{L_\x}{m_\x}+\frac{L_\y}{m_\y}}\ln\left(\frac{\Vert \z_0-\z^*\Vert}{\epsilon}\right)\cdot\left(\frac{L^2}{m_\x m_\y}\right)^{o(1)}.
\end{equation*}

\end{document}